\numberwithin{equation}{section}
\newtheorem*{theorem*}{Theorem}
\title{Stochastic Gradient Descent Meets Distribution Regression}
\author{Nicole M\"ucke\thanks{MATH+ Junior Research Group 
\emph{Mathematical Foundations of Data Science}}  \\
Technical University  Berlin \\
\texttt{muecke@math.tu-berlin.de} 
}
\date{\today}
\begin{document}

\maketitle

\begin{abstract}
Stochastic gradient descent (SGD) provides a simple and  efficient way to solve
a broad range of machine learning problems. Here, we focus on distribution regression (DR), involving two stages of sampling: 
Firstly, we regress from probability measures to real-valued responses. Secondly, we sample bags from these distributions for utilizing them to 
solve the overall regression problem. 
Recently, DR has been tackled by applying kernel ridge regression and the learning properties of this approach are well understood. 
However, nothing is known about the learning properties of SGD for two stage sampling problems.   
We fill this gap and provide theoretical guarantees  for the performance of SGD for DR. 
Our bounds are optimal in a mini-max sense under standard assumptions. 
\end{abstract}


\section{Introduction}

In a standard non-parametric least squares regression model, the aim is to predict a
response $Y \in \cY$ from a covariate $X$ on some domain $\tilde \cX$. 
Popular approaches are kernel methods \cite{hofmann2008kernel}, where one defines on $\tilde \cX$ 
a reproducing kernel $K$ associated to a \emph{reproducing kernel Hilbert space} $\cH_K$ (RKHS) \cite{aronszajn1950theory, StCh08}. 
The overall aim is to minimize the least squares error over $\cH_K$ by applying a suitable regularization method, involving the kernel and based on 
an i.i.d. sample, drawn according to some unknown distribution on $\tilde \cX \times \cY$. We will later refer to such data as a "first-stage" sample. 

In this paper, we study \emph{distribution regression} (DR) \cite{poczos2013distribution}, where
the covariate is a probability distribution. Typically, we do not observe
this distribution directly, but rather, we observe a "second-stage sample"
drawn from that, amounting to a regression model with measurement error.

Distribution regression has been analyzed in various settings, e.g. 
multiple instance learning \cite{dooly2002multiple,maron1998framework,dietterich1997solving,chevaleyre2001solving,wagstaff2008multiple},  
in an online setting \cite{zhi2013online}, in semi-supervised-learning \cite{zhou2007relation} or active learning \cite{settles2008multiple}.

A popular approach for regression on the domain of distributions
is to embed the distributions into a Hilbert space. This can be achieved by e.g. \emph{kernel mean embeddings} \cite{smola2007hilbert,Muandet_2017}, utilizing another 
appropriate reproducing kernel mapping these distributions into an RKHS. The idea is then to 
introduce the kernel $K$ as a similarity measure between the embedded distributions and  to 
use a traditional kernel machine to solve the overall learning problem.

The learning properties of kernel regularized least squares algorithms based on mean embeddings and with two stages of sampling are rarely analyzed. The first work 
establishing the learning properties of kernel ridge regression (KRR) is \cite{szabo2016learning}, where optimal bounds are derived under suitable assumptions 
on the learning problem and the second-stage sample size. Recently, \cite{fang2020optimal} also considered KRR, with a slight improvement of results. 
However, to the best of our knowledge, an analysis of other kernel based regularization methods is missing. 

While KRR performs an explicit regularization to avoid overfitting, stochastic gradient descent (SGD) performs an implicit regularization as an iterative algorithm. 
Many variants of SGD are known for one-stage least squares regression, 
ranging from considering one pass over the data  \cite{smale2006online,tarres2014online,ying2008online} 
to multiple passes \cite{bertsekas1997new,rosasco2015learning,hardt2016train,LinCamRos16,pillaud2018statistical}, 
with mini-batching \cite{LinRos17} or (tail-)averaging \cite{DieuBa16,mucke2019beating,mucke2020stochastic}.

While SGD is a workhorse in machine learning,  the learning properties of this algorithm in a two stage-sampling 
setting based on mean embeddings are not yet analyzed. We aim at providing an algorithm with reduced computational complexity for two-stage sampling problems, 
compared to KRR, which is known to scale poorly with large sample sizes.

\paragraph{Contributions.}

We analyze the distribution regression problem in the RKHS framework and extend the previous 
approaches in \cite{szabo2016learning} and \cite{fang2020optimal} from two-stage kernel ridge regression to 
two-stage tail-averaging stochastic gradient descent with mini-batching. Our main result is  a computational-statistical efficiency trade-off analysis, 
resulting in finite sample bounds on the excess risk. In particular, we overcome the saturation effect of KRR.  

We give 
a minimum number of the second-stage sample size which is required to obtain the same best possible learning rates as for the classical one-stage SGD algorithm. 
For well-specified models, i.e. the regression function belongs to the RKHS where SGD is performed, we achieve minimax optimal rates with a single pass over the data.  
These bounds match those for classical kernel regularization methods. 
In the mis-specified case, i.e. the regression function does not belong to the RKHS, our bounds also match those 
for the one-stage sample methods with multiple passes over the data.

Moreover, we investigate the interplay of all parameters determining the SGD algorithm, i.e. mini-batch size, step size and stopping time 
and show that the same error bounds can be achieved under various choices of these parameters.
 
On our way we additionally establish the learning properties of tail-averaging two-stage gradient descent which is necessary for deriving our error 
bounds for SGD. Due to space restrictions, this is fully worked out in the Appendix, Section \ref{app:GD}.    

Our results are the first for distribution
regression using SGD and a two-stage sampling strategy.

\paragraph{Outline.} In Section \ref{sec:setup} we introduce the distribution regression problem in detail. 
We introduce our main tool, kernel mean embeddings, and explain the classical non-parametric regression setting in reproducing kernel Hilbert spaces. 
In addition, we define our second-stage SGD estimator.  Section \ref{sec:results} collects our main results for different settings, 
followed by a detailed discussion in Section \ref{sec:discussion}. All proofs are deferred to the Appendix.


\section{The Distribution Regression Problem}
\label{sec:setup}

In this section we introduce the distribution regression problem  in detail. 
Let us begin with with some notation. 
We let $(\cX, \tau)$ be a compact topological space  and denote by $\cB(\cX)$ the Borel $\sigma$-algebra induced by the topology $\tau$. The set 
$\cM^+(\cX)$ denotes the set of Borel probability measures on the measurable space $(\cX, \cB(\cX))$, endowed with the weak topology. 
We furthermore assume that there exists a constant $M>0$ such that $\cY\subseteq [-M, M]$.

Our approach is based on two stages of sampling: 

\begin{enumerate}
\item We are given data $\{(x_j, y_j)\}_{j=1}^n \subset \cM^+(\cX) \times \cY$, i.e., each input $x_j$ is a probability distribution with corresponding label $y_j$. 
Each pair $(x_j, y_j)$ is i.i.d. sampled from a meta distribution $\cM$ on $\cM^+(\cX) \times \cY$.  However, we do not observe $x_j$ directly. 

\item Instead, for each distribution $x_j$ we obtain samples $\{x_{j,i}\}_{i=1}^N \subset \cX$, drawn i.i.d. according to $x_j$. The observed data are 
$\hat \bz= \{(\{x_{j,i}\}_{i=1}^N, y_j)\}_{j=1}^n$. 
\end{enumerate}

\subsection{Our Tool: Kernel Mean Embeddings}

Following the  previous approaches in \cite{szabo2016learning}, \cite{fang2020optimal} we employ \emph{kernel mean embeddings} to map 
the distributions $\{x_j\}_{j=1}^n$ into a Hilbert space. To be more specific, we let $\cH_G$ be a \emph{reproducing kernel Hilbert space} (RKHS) with a Mercer kernel 
$G: \cX \times \cX \to  \mbr$, i.e., $G$ is symmetric, continuous and positive semidefinite \cite{aronszajn1950theory, StCh08}. Moreover, we make the following

\begin{assumption}[Boundedness I]
\label{ass:bounded2}
The kernel $G$ is bounded, i.e. 
\[   \sup_{s \in \cX}G(s, s) =: \gamma^2 < \infty \;, \quad a.s. \;, \]
w.r.t. any probability measure on $\cX$.
\end{assumption}

The associated mean embedding is a map $\mu: \cM^+(\cX) \to \cH_G$, defined as
\begin{equation}\label{eq:KME} 
 \mu_x := \mu(x) := \int_\cX G(s, \cdot)\; dx(s) \;. 
\end{equation} 

Kernel mean embeddings were introduced in e.g. \cite{smola2007hilbert} as a technique   
for comparing distributions without the need for density estimation as an intermediate step and thus have a broad applicability, see also \cite{Muandet_2017} 
and references therein.

Of particular interest are \emph{characteristic} kernels, i.e. the map $\mu: x \mapsto \mu_x$ is injective \cite{fukumizu2004dimensionality}. 
Those type of kernels are essential since $||\mu_x - \mu_{x'}||_{\cH_G}=0$ is equivalent to $x=x'$ and there is no loss of information when mapping a distribution into 
a characteristic RKHS\footnote{A RKHS is called \emph{characteristic} if it's associated kernel is characteristic}.  

It is well known that \emph{universal} kernels\footnote{A continuous kernel is called \emph{universal}, if it's associated RKHS is
dense in the space of continuous bounded functions on the compact domain $\cX$ \cite{StCh08}.} are characteristic, see e.g. 
Theorem 1 in \cite{smola2007hilbert}. Examples include the \emph{exponential kernel, binomial kernel} or the \emph{Gaussian RBF kernel}.  
Thus, a kernel mean embedding serves as a suitable tool for measuring the similarity  between two distributions.

For controlling the two stage sampling process we shall employ this property and compare each distribution $x_j$ from the first stage 
sample with its empirical distribution $\hat x_j:= \frac{1}{N}\sum_{i=1}^N \delta_{x_{j,i}}$ obtained from the second stage of sampling by mapping them 
into the RKHS $\cH_G$ by means of the kernel mean embedding \eqref{eq:KME}.  

Thus, the first stage data for DR are 
\[ D:=\{(\mu_{x_j}, y_j)\}_{j=1}^n \subset \mu(\cM^+(\cX)) \times \cY \;, \]
while the second stage data are 
\[ \hat D:=\{(\mu_{\hat x_j} , y_j)\}_{j=1}^n \subset \mu(\cM^+(\cX)) \times \cY  \;, \]
with the associated mean embeddings  
\[   \mu_{x_j} = \int_\cX G(s, \cdot)\; dx_j(s) \;, \;\;  \mu_{\hat x_j} = \frac{1}{N} \sum_{i=1}^N G(x_{j,i , \cdot }) \;. \]
Both datasets now belong to the same space, making the two stage sampling problem accessible for further investigations applying classical 
kernel methods, as amplified below.

\subsection{One-Stage Least Squares Regression}

We let $\rho$ be a probability measure on $\cZ:=\mu(\cM^+(\cX))\times \cY$ with marginal distribution $\rho_\mu$ on the image $\mu(\cM^+(\cX)) \subset \cH_G$. 
In least squares regression we aim to minimize the risk with respect to the least squares loss, i.e. 
\begin{equation}
\label{eq:min}
 \min_{\cH}\cE(f)\;, \quad \cE(f):= \int_{\cZ} (f(\mu_x)-y)^2 \; d\rho  
\end{equation} 
over a suitable hypotheses space $\cH$. 
Here, we assume that $\cH=\cH_K$ is a RKHS associated with a kernel $K$ on $\mu(\cM^+(\cX))$, satisfying:

\begin{assumption}[Boundedness II]
\label{ass:bounded}
The kernel $K$ is bounded, i.e. 
\[  \sup_{\tilde \mu \in  \mu(\cM^+(\cX))}K( \tilde \mu , \tilde \mu) =: \kappa^2 < \infty \;, \quad \rho_\mu-a.s. \;. \]
\end{assumption}

Note that under this assumption, the RKHS $\cH_K$ can be continuously embedded into  $L^2(\mu(\cM^+(\cX)), \rho_\mu)$ and we henceforth denote this 
inclusion by $S_K: \cH_K \hookrightarrow L^2(\mu(\cM^+(\cX)), \rho_\mu)$.

The minimizer of \eqref{eq:min} over $L^2(\mu(\cM^+(\cX)), \rho_\mu)$ is known to be the \emph{regression function}
\[ f_\rho(\mu_x) = \int_\cY y \;d\rho(y|\mu_x)\;, \quad \mu_x \in \mu(\cM^+(\cX)) \;, \]
where $\rho(\cdot|\mu_x)$ denotes the conditional distribution on $\cY$ given $\mu_x \in \mu(\cM^+(\cX))$. 
Note that our assumption $\cY \subseteq [-M,M]$ implies that $||f_\rho ||_\infty \leq M$.

Classical kernel based approaches for least-squares regression to (approximately) solve \eqref{eq:min} employ some kind 
of explicit or implicit regularization. Among them, and well understood, are Kernel Ridge Regression \cite{optimalratesRLS, fischer2017sobolev}, 
Kernel PCA, Gradient Descent \cite{blanchard2018optimal,lin2020optimal}
or Stochastic Gradient Descent \cite{DieuBa16,LinCamRos16,  LinRos17,mucke2019beating,mucke2020stochastic}. 

All these methods use the first stage data $D=\{(\mu_{x_j}, y_j)\}_{j=1}^n$  to build an estimator $f_D$ with an appropriate amount of regularization 
and the overall aim is to achieve a small \emph{excess risk}
\[  \cE(f_D) - \inf_{f \in \cH}\cE(f) \;,\]
with high probability with respect to the data $D$.



\subsection{Solving DR With Two-Stage Sampling SGD}

Remember we do not directly have access to the first stage data $D$ but by means of the tool of kernel mean embeddings we are able to 
use the second stage data $\hat D$ for our algorithm. Our aim is to perform a variant of stochastic gradient descent. To this end, let 
$i_\cdot=i(\cdot)$ denote a map defining the strategy with which the data are selected at each iteration $t=0,...,T$. The most common approach, which we follow here, 
is sampling each point uniformly at random with replacement. 
We additionally consider \emph{mini-batching}, where  a batch of size $b \in \{1,...,n\}$ of data points  
at each iteration is selected. Formally, the $j_1, ..., j_{bT}$ 
are iid random variables, distributed according to the uniform distribution on $\{1, ..., n\}$.

Starting with $ \hat h_{0} \in \cH_K$, our SGD recursion is given by 
\[ \hat h_{t+1} = \hat h_t - \eta \frac{1}{b}\sum_{i=b(t-1)+1}^{bt} ( \hat h_t(\mu_{\hat x_{j_i}})- y_{j_i})K_{\mu_{\hat x_{j_i}}} \;,  \]
where we write $K_\mu:=K(\mu, \cdot)$ and where $\eta >0$ is the stepsize. The number of passes after $T$ iterations is $\floor{bT/n}$.

We are particularly interested in tail-averaging the iterates, that is 
\begin{equation}
\label{eq:algo}
\bar{ \hat h }_T  := \frac{2}{T}\sum_{t= \floor{T/2} +1}^T \hat h_t \;.
\end{equation}
The idea of averaging the iterates goes back to \cite{RM51}, \cite{PolJu92}, see also \cite{ShaZha13}. 
More recently, in \cite{DieuBa16} full averaging, i.e. summing up the iterates from 
$t=1$ to $t=T$,  was shown to lead to the possibility of choosing larger/ constant  
stepsizes. However, it is also known to lead to \emph{saturation}, i.e. the rates of convergence do not improve anymore in certain 
well-specified cases and thus leads to suboptimal bounds in the high smoothness regime. 
This has been alleviated in \cite{mucke2019beating}  by 
considering tail-averaging, see also \cite{mucke2020stochastic}.

Note that our SGD algorithm only has access to the observed input samples  $\{x_{j,i}\}_{i=1}^N$, $j=1,...,n$ through their 
mean embeddings $\{\mu_{\hat x_j}\}_{j=1}^n$.

\paragraph{Main goals: }We analyze the excess risk\footnote{$\mbe_{\hat D | D}$ denotes the conditional expectation with respect to the sample $\hat D$ given $D$. }  
\[ \mbe_{\hat D |D}[ \cE(\bar{ \hat h }_T ) - \cE(f_\rho) ] = \mbe_{\hat D |D}[ ||S_K \bar{ \hat h }_T - f_\rho ||^2_{L^2} ] \] 
and study the interplay of all parameter $b, \eta, T$ determining the SGD algorithm. We derive finite-sample high probability bounds, 
presenting computational-statistical efficiency trade-offs in our main Theorem \ref{theo:SGD-well-main-body}. In addition, 
we give fast rates of convergence as the sample sizes $n$ grows large and give an answer to the question 

\begin{center}
\emph{ How many second-stage samples N do we need to obtain best possible learning rates, comparable to one-stage learning ? }
\end{center}

Our bounds depend on the 
difficulty of the problem. More precisely, we shall investigate the learning properties of \eqref{eq:algo} in two different basic settings:

{\bf 1. Well-specified Model: } Here, we assume that the regression function $f_\rho$ belongs to the RKHS $\cH_K$. We analyze this setting in 
Section \ref{sec:well} and give high probability bounds, matching the known optimal bounds for one stage regularization methods and two stage 
kernel ridge regression.

{\bf 2. Mis-specified Model: } In this case the regression function is assumed to not to belong to the RKHS $\cH_K$. 
These bounds are presented in Section \ref{sec:mis} and still match the known optimal bounds in the so called \emph{easy learning} regime, to be refined below. 
For so called \emph{hard learning} problems, our bounds still match the best known ones for classical one-stage kernel methods. 




\section{Main Results}
\label{sec:results}

This section is devoted to presenting our main results. Before we go into more detail, we formulate our assumptions on the learning setting. 
The first one considers the reproducing kernel that we define on the set $\mu(\cM^+(\cX))$.

\begin{assumption}[H\"older Property]
\label{ass:lipschitz}
Let $\alpha \in (0,1]$ and $L>0$. We assume that the mapping $K_{(\cdot)}: \mu(\cM^+(\cX)) \to \cH_K$ defined as 
$\tilde \mu \mapsto K(\tilde \mu , \cdot)$ is $(\alpha , L)$-H\"older continuous, i.e. 
\[  || K_{\mu_1}- K_{\mu_2}||_{\cH_K} \leq L ||\mu_1 - \mu_2||^\alpha_{\cH_G} \;,\]
for all $\mu_1, \mu_2 \in  \mu(\cM^+(\cX))$. 
\end{assumption}

The next assumption refers to the \emph{regularity} of the regression function $f_\rho$. It is a well established fact in learning theory that the 
regularity of $f_\rho$ describes the hardness of the learning problem and has an influence of the rate of convergence of any algorithm. To smoothly 
measure the regularity of $f_\rho$ we introduce the 
\emph{kernel integral operator} $L_K=S_KS_K^*: L^2(\mu(\cM^+(\cX)), \rho_\mu) \to L^2(\mu(\cM^+(\cX)), \rho_\mu)$, defined by 
\[ L_K f( \tilde \mu):= \int_{\mu(\cM^+(\cX))} K(\mu', \tilde \mu)f(\mu') \; d\rho_\mu (\mu')\;.  \]
Note that under Assumption \ref{ass:bounded}, $L_K$ is positive, self-adjoint, trace-class and hence compact, with 
$||L_K|| \leq trace(L_K) \leq \kappa^2$, see e.g. \cite{steinwart2012mercer}.

\begin{assumption}[Regularity Condition]
\label{ass:source}
We assume that for some $r>0$ the regression function $f_\rho$ satisfies
\[  f_\rho = L_K^r h_\rho\;, \quad h_\rho \in L^2(\mu(\cM^+(\cX)), \rho_\mu) \;,\]
with $||h_\rho||_{L^2} \leq R$, for some $R< \infty$.
\end{assumption}

This assumption is also known as a \emph{source condition}. We recall here that powers of $L_K$ are defined by spectral calculus, see for instance \cite{reed2012methods}. 
The larger the parameter $r$, the smoother is $f_\rho$. We have the range inclusions 
$\cR ange(L_K^r)\subseteq \cR ange(L_K^{r'})$ if $r \geq r'$ with $\cR ange(L_K^r)\subseteq \cH_K$ for any $r \geq \frac{1}{2}$. Thus, if 
$r\geq \frac{1}{2}$, then $f_\rho$ belongs to $\cH_K$ under Assumption \ref{ass:source} and we are in the well-specified case. 
For more general smoothness assumptions we refer to \cite{mucke2020stochastic}. 


Our last condition refers to the capacity of the RKHS $\cH_K$. Given $\lambda>0$, we define the \emph{effective dimension}
\[ \cN(\lam):= trace\left( L_K(L_K + \lambda)^{-1}\right) \;.\]
This key quantity can be used to describe the complexity of $\cH_K$. 

\begin{assumption}[Effective Dimension]
\label{ass:eigenvalue}
We assume that for some $\nu \in (0,1]$, $c_\nu<\infty$, the effective dimension obeys
\begin{equation}
\label{eq:effdim-main}
 \cN(\lambda ) \leq c_\nu \lambda^{-\nu}  \;. 
\end{equation} 
\end{assumption}

This assumption is common in the nonparametric regression setting, see e.g. \cite{Zhang03} or \cite{optimalratesRLS, lin2020optimal}. 
Roughly speaking, it quantifies how far $L_K$ is from being finite rank. This assumption is satisfied if the eigenvalues $(\sigma_j)_{j \in \mbn}$ 
of $L_K$ have a polynomial decay $\sigma_j \leq c' j^{-\frac{1}{\nu}}$, $c' \in \mbr_+$. Since $L_K$ is trace class, the above assumption is always 
satisfied with  $\nu =1$ and $c_\nu=\kappa^2$.
Smaller values of $\nu$ lead to faster rates of convergence.

Being now well prepared, we state our main result.

\begin{theorem}
\label{theo:SGD-well-main-body}
Suppose Assumptions \ref{ass:bounded2}, \ref{ass:bounded}, \ref{ass:lipschitz}, \ref{ass:source} and \ref{ass:eigenvalue} are satisfied. 
Let further $\delta \in (0,1]$, $\eta < \frac{1}{4\kappa^2}$, $\lambda = (\eta T)^{-1}$ and assume 
\[ n \geq \frac{32\kappa^2 \log(4/\delta)}{\lambda} \log\left(  e\cN(\lambda )\left(1 + \frac{\lambda }{||L_K||} \right)   \right) \;.  \]
Then with probability not less than $1-\delta$, 
the excess risk for the second stage tail-averaging SGD algorithm  \eqref{eq:algo} satisfies 
\begin{align*}
& \mbe_{\hat D | D}[ || S_K\bar{ \hat h }_T - f_\rho ||_{L^2 }] \leq  
   C \;\log(6/\delta)\left(  \; (\eta T)^{-r}   + \sqrt{\frac{ \cN(\lambda )}{n} }  \; +  \;  \frac{(\eta T)^{\frac{1}{2}-r}}{\sqrt{n} }   \right. \\
& \quad + \left. \log(T) \frac{\eta T}{N^{\frac{\alpha}{2}}}\left( 1+ 1_{(0, \frac{1}{2}]}(r) (\eta T)^{\max\{ \nu, 1-2r\}} \right) +  \right. \\
& \quad +  \left.     \sqrt{ \frac{\eta}{b} (\eta T)^{\nu -1}} \;    \left( \frac{\eta T}{\sqrt n} +  \frac{\sqrt{\eta T}}{ N^{\frac{\alpha}{2}}}  \right)^{1/2}   \; \right)\;, 
\end{align*}
for some constant $C < \infty$, depending on the parameters $\gamma, \kappa, \alpha, L, M$, but not on $n$ or $N$. 
\end{theorem}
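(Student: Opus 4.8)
The plan is to carry out a bias--variance decomposition of the second-stage tail-averaged SGD iterate relative to the regression function $f_\rho$, inserting intermediate quantities that interpolate between the ideal (population, first-stage) problem and the actually computable (second-stage, finite $N$) problem. Concretely, I would introduce the population tail-averaged gradient descent iterate $\bar h_T^{GD}$ (no sampling noise from SGD, and built on the ideal first-stage data $D$), its second-stage counterpart $\bar{\hat h}_T^{GD}$ (GD on $\hat D$), and the first-stage SGD iterate $\bar h_T$. Then
\[
\| S_K \bar{\hat h}_T - f_\rho \|_{L^2} \leq \| S_K(\bar{\hat h}_T - \bar{\hat h}_T^{GD})\|_{L^2} + \| S_K(\bar{\hat h}_T^{GD} - \bar h_T^{GD})\|_{L^2} + \| S_K \bar h_T^{GD} - f_\rho\|_{L^2},
\]
and take conditional expectation $\mbe_{\hat D|D}$. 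The third term is the pure (one-stage) GD error, which is handled by the tail-averaged gradient descent analysis that the paper defers to Appendix~\ref{app:GD}: under Assumptions~\ref{ass:bounded}, \ref{ass:source}, \ref{ass:eigenvalue} with $\lambda = (\eta T)^{-1}$ and the stated lower bound on $n$, it produces the first two summands $(\eta T)^{-r}$ (bias/approximation) and $\sqrt{\cN(\lambda)/n} + (\eta T)^{1/2-r}/\sqrt n$ (first-stage sample error), up to $\log(1/\delta)$ factors, via a standard decomposition into approximation error, first-stage variance, and the effective-dimension-controlled noise term, using Bernstein-type concentration for $L_K$ vs.\ its empirical version and the operator bound $\|(L_K+\lambda)^{1/2}(\hat L_K + \lambda)^{-1/2}\| = O(1)$ on the good event.

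The first term, $\mbe_{\hat D|D}\| S_K(\bar{\hat h}_T - \bar{\hat h}_T^{GD})\|_{L^2}$, is the SGD-vs-GD sampling-noise term, and I expect this to produce the last summand $\sqrt{(\eta/b)(\eta T)^{\nu-1}}\,(\eta T/\sqrt n + \sqrt{\eta T}/N^{\alpha/2})^{1/2}$. The route is: expand the difference of the two recursions, note that $\hat h_t - \hat h_t^{GD}$ satisfies a linear recursion driven by the mean-zero minibatch noise $\xi_t$ whose conditional variance is controlled by $\eta/b$ times a second moment of the residuals $(\hat h_t(\mu_{\hat x}) - y)K_{\mu_{\hat x}}$; bound $\mbe\|\xi_t\|^2$ by $\|L_K\|$-type quantities times the current risk, then propagate through the averaged linear operator $\frac{2}{T}\sum_{t>T/2}\prod(\mathrm{Id}-\eta \hat L_K)$ whose $L^2$-operator norm on the relevant scale contributes the $(\eta T)^{(\nu-1)/2}$ factor via $\sum_t \|\hat L_K^{1/2}(\mathrm{Id}-\eta\hat L_K)^t\|^2 \lesssim \cN(\lambda)/(\eta) $ type estimates. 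The residual second moment in turn splits into a first-stage part (giving the $\eta T/\sqrt n$ contribution inside the square root) and a second-stage measurement-error part (giving $\sqrt{\eta T}/N^{\alpha/2}$), the latter requiring the Hölder Assumption~\ref{ass:lipschitz} to convert $\|\mu_{x_j} - \mu_{\hat x_j}\|_{\cH_G}^\alpha$ — whose expectation is $O(N^{-\alpha/2})$ by the standard $\mbe\|\mu_x - \mu_{\hat x}\|_{\cH_G}^2 \le \gamma^2/N$ bound and Jensen — into a bound on $\|K_{\mu_{x_j}} - K_{\mu_{\hat x_j}}\|_{\cH_K}$.

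The second term, $\mbe_{\hat D|D}\| S_K(\bar{\hat h}_T^{GD} - \bar h_T^{GD})\|_{L^2}$, is the deterministic (given $D$) perturbation of GD caused by replacing $\mu_{x_j}$ with $\mu_{\hat x_j}$, and this is what generates the third summand $\log(T)\,\tfrac{\eta T}{N^{\alpha/2}}\big(1 + \mathbf{1}_{(0,1/2]}(r)(\eta T)^{\max\{\nu,1-2r\}}\big)$. Here I would write $\hat L_K, \hat S_K$ for the second-stage empirical operators and $\bar L_K, \bar S_K$ for the first-stage ones, control $\|\hat L_K - \bar L_K\|$ and $\|(\hat S_K - \bar S_K)^* \cdot\|$ by $L \gamma^\alpha N^{-\alpha/2}$ (in expectation) using Assumption~\ref{ass:lipschitz} and Assumption~\ref{ass:bounded2}, and then feed this operator perturbation through the explicit GD update: the difference of the two GD trajectories telescopes into a sum over $t$ of products of contraction operators hitting the operator-difference applied to $\bar h_t^{GD}$, and bounding $\|\bar h_t^{GD}\|$ (respectively $\|L_K^{1/2}\bar h_t^{GD}\|$) by $(\eta t)^{1/2-r}$-type powers — which is where the indicator $\mathbf{1}_{(0,1/2]}(r)$ and the exponent $\max\{\nu, 1-2r\}$ enter, since for $r \ge 1/2$ the iterate norm stays bounded while for small $r$ it grows — together with $\sum_{t\le T} 1 \le T$ and a $\log T$ from the tail-averaging normalization, gives the claimed bound.

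\textbf{Main obstacle.} The hardest part is the SGD sampling-noise term (term one): one must simultaneously track the second moment $\mbe_{\hat D|D}\|\hat h_t - \hat h_t^{GD}\|^2$ through a noisy linear recursion whose driving noise depends on the iterate itself (so a self-bounding/Gronwall argument is needed), and do so in the $L^2 = \mbe_{\rho_\mu}[(\cdot)^2]$ norm rather than the $\cH_K$ norm, which forces careful use of the operators $\hat L_K^{1/2}$ and the fact that $\eta \|\hat L_K\| < 1$; keeping the right powers of $\eta T$ and the clean separation between the $n^{-1/2}$ and $N^{-\alpha/2}$ contributions inside the final square root is the delicate bookkeeping. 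A secondary obstacle is justifying, on a high-probability event of the first-stage data, that all the population-operator bounds ($\|(L_K+\lambda)^{1/2}(\bar L_K+\lambda)^{-1/2}\| = O(1)$ etc.) hold simultaneously, which is exactly what the stated lower bound on $n$ in terms of $\cN(\lambda)/\lambda$ is there to guarantee.
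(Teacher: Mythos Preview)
Your three-term decomposition --- second-stage SGD vs.\ second-stage GD, second-stage GD vs.\ first-stage GD, first-stage GD vs.\ $f_\rho$ --- is exactly the paper's route, and you correctly match each piece to its summand in the bound. Two small corrections on the mechanics: (i) the minibatch noise driving $\hat h_t - \hat h_t^{GD}$ depends on the \emph{GD} iterate $\hat h_t^{GD}$ (in the paper's notation $\hat f_t$), not on the SGD iterate, so no Gronwall/self-bounding is needed --- one bounds $\sup_t \mbe_{\hat D|D}\|\hat f_t\|_{\cH_K}^2$ once (this is where the $(\eta t)^{\max\{\nu,1-2r\}}$ growth in the mis-specified case enters) and then applies the standard tail-averaged SGD variance estimate with that as a fixed noise level; (ii) the factor $\bigl(\tfrac{\eta T}{\sqrt n}+\tfrac{\sqrt{\eta T}}{N^{\alpha/2}}\bigr)^{1/2}$ in the last summand does not come from splitting the noise second moment but from the change-of-norm operator $\|T_K^{1/2}(T_{\hat\bx}+\lambda)^{-1/2}\|$ needed to pass from the empirical $(T_{\hat\bx}+\lambda)^{1/2}$-weighted norm to the $L^2$ norm, and that operator is controlled by $\|T_K-T_\bx\|/\lambda$ (first-stage concentration) plus $\|T_\bx-T_{\hat\bx}\|/\lambda$ (second-stage H\"older perturbation).
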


Note that for the sake of clarity and due to space restrictions we only report the leading error terms. A full statement of this Theorem including all lower 
order terms with it's proof is given in the Appendix, Section \ref{app:results-SGD-second}.

From Theorem \ref{theo:SGD-well-main-body} we can now draw some conclusions. Below, we will give rates of convergence, depending on different 
a priori assumptions on the hardness of the learning problem.


\subsection{Well-specified Case}
\label{sec:well}

Here, we give rates of convergence for the most easiest learning problem where our model is well-specified and the regression function lies in the same space as 
our second-stage SGD iterates, namely in $\cH_K$.

\begin{corollary}[Learning Rates Well-Specified Model]
\label{cor:SGD-well-main-body}
Suppose all assumptions of Theorem \ref{theo:SGD-well-main-body} are satisfied.  Let $r \geq \frac{1}{2}$, $\eta_0 < \frac{1}{4\kappa^2}$ and choose 
\[ N_n\geq  \log^{2/\alpha}(n)\left(\frac{R^2n}{M^2}\right)^{\frac{2r+1}{\alpha(2r+\nu)}}  \;. \] 
Then, for any $n$ sufficiently large, the excess risk satisfies with probability at least $1-\delta$ w.r.t. the data $D$ 
\[  \mbe_{\hat D | D}[ || S_K\bar{ \hat h }_{T_n} - f_\rho ||_{L^2 }]  \leq C \log(6/\delta) R \left(\frac{M^2}{R^2 n}\right)^{\frac{r}{2r + \nu}} \;, \]
for each of the following choices: 
\begin{enumerate}
\item One-pass SGD: $b_n=1$, $\eta_n= \eta_0 \frac{R^2}{M^2}\left(\frac{M^2}{R^2 n}\right)^{\frac{2r+\nu-1}{2r+\nu}}$ and $T_n = \frac{R^2}{M^2}n$, 
\item Early stopping and one-pass SGD:  $b_n=n^{\frac{2r+\nu-1}{2r+\nu}}$, $\eta_n=\eta_0$ and $T_n=\left(\frac{R^2 n}{M^2}\right)^{\frac{1}{2r+\nu}}$, 
\item Batch-GD: $b_n=n$, $\eta_n=\eta_0$ and $T_n=\left(\frac{R^2 n}{M^2}\right)^{\frac{1}{2r+\nu}}$. 
\end{enumerate}
\end{corollary}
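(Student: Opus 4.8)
The plan is to deduce Corollary~\ref{cor:SGD-well-main-body} from Theorem~\ref{theo:SGD-well-main-body} by substituting each of the three triples $(b_n,\eta_n,T_n)$ into the five–term bound and checking, group by group, that the result has the order claimed, up to a constant absorbing only $\gamma,\kappa,\alpha,L,M,c_\nu,\eta_0$ and logarithmic factors. The first step is to simplify the generic bound: since $\lambda=(\eta T)^{-1}$ and, by Assumption~\ref{ass:eigenvalue}, $\cN(\lambda)\le c_\nu\lambda^{-\nu}$, the first–stage sampling term obeys $\sqrt{\cN(\lambda)/n}\lesssim\sqrt{(\eta T)^{\nu}/n}$, which matches the bias term $(\eta T)^{-r}$ in order exactly when $(\eta T)^{2r+\nu}\asymp n$. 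A short algebraic check shows that in each of the cases 1–3 one has $\eta_nT_n\asymp n^{1/(2r+\nu)}$ (up to factors in $R,M,\eta_0$), so the induced regularisation level $\lambda_n=(\eta_nT_n)^{-1}$ is the same for all three choices, the bias and the first–stage term coincide in order, and this common order is the rate $R\,(M^2/(R^2n))^{r/(2r+\nu)}$ appearing in the statement. At this point I would also verify the hypotheses of the Theorem for large $n$: $\eta_n<1/(4\kappa^2)$ for the stated $\eta_0$, and, since $\lambda_n^{-1}\asymp n^{1/(2r+\nu)}$ with $2r+\nu>1$ for $r\ge\tfrac12$ while $\log\bigl(\cN(\lambda_n)(1+\lambda_n/\|L_K\|)\bigr)\lesssim\log n$, the sample–size requirement $n\gtrsim\lambda_n^{-1}\log(n)\log(1/\delta)$ holds once $n$ is large enough; this is the content of ``$n$ sufficiently large''.

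I would then dispose of the remaining three groups. The cross term equals $(\eta T)^{1/2-r}/\sqrt n=(\eta T)^{-r}\cdot(\eta T)^{1/2}/\sqrt n$, and $(\eta T)^{1/2}/\sqrt n\asymp n^{1/(2(2r+\nu))-1/2}\to 0$, so it is of strictly smaller order than the bias term. For the measurement–error term $\log(T)\,\eta T\,N^{-\alpha/2}\bigl(1+\mathbf{1}_{(0,1/2]}(r)\,(\eta T)^{\max\{\nu,1-2r\}}\bigr)$ I note that for $r>\tfrac12$ the indicator vanishes and only $\log(T)\,\eta T\,N^{-\alpha/2}$ remains; requiring this to be at most the target rate and solving for $N$ reproduces, up to the $\log^{2/\alpha}$ factor, exactly the stated lower bound $N_n\gtrsim\log^{2/\alpha}(n)\,(R^2n/M^2)^{(2r+1)/(\alpha(2r+\nu))}$, and at the boundary $r=\tfrac12$ one carries the extra $(\eta T)^{\nu}$ factor and checks that the same $N_n$ still suffices. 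This is the step that fixes the required second–stage sample size.

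The last and most delicate group is the SGD noise term $\sqrt{\tfrac{\eta}{b}(\eta T)^{\nu-1}}\bigl(\tfrac{\eta T}{\sqrt n}+\tfrac{\sqrt{\eta T}}{N^{\alpha/2}}\bigr)^{1/2}$, which I would treat separately in each regime, since this is the only place where $b_n$, $\eta_n$, $T_n$ and $N_n$ all enter at once. Squaring it gives $\tfrac{\eta}{b}(\eta T)^{\nu-1}\bigl(\tfrac{\eta T}{\sqrt n}+\tfrac{\sqrt{\eta T}}{N^{\alpha/2}}\bigr)$; the $N^{-\alpha/2}$ contribution is absorbed (more easily than before) by the same bound on $N_n$, while the $\eta T/\sqrt n$ contribution reduces to $\tfrac{\eta}{b}(\eta T)^{\nu}/\sqrt n$, into which the regime–specific scalings are inserted: choice~1 uses the vanishing step size $\eta_n\asymp n^{-(2r+\nu-1)/(2r+\nu)}$ with $b=1$; choice~2 uses $\eta_n=\eta_0$ with the growing batch $b_n\asymp n^{(2r+\nu-1)/(2r+\nu)}$; choice~3 uses $\eta_n=\eta_0$ with the full batch $b_n=n$, so that the recursion is exactly gradient descent. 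In each of the three cases a short computation shows that this quantity, and hence the whole term, has at most the order of the target rate. Summing the five estimates on the $1-\delta$ probability event provided by Theorem~\ref{theo:SGD-well-main-body} then gives a bound of the order $\log(6/\delta)$ times the target rate for $\mbe_{\hat D\mid D}[\|S_K\bar{\hat h}_{T_n}-f_\rho\|_{L^2}]$, and squaring and using that $n$ is large yields the displayed inequality.

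\textbf{Main obstacle.}
The bias/variance balance and the verification of the Theorem's hypotheses are routine once $\eta_nT_n$ has been computed; the real work is the bookkeeping for the measurement–error and, above all, the SGD noise terms. One has to check that three genuinely different parameter regimes — vanishing step size with batch $1$, constant step size with a polynomially growing batch, and constant step size with the full batch — all collapse to the same rate, and simultaneously that one and the same $N_n$ is large enough to absorb the two distinct places where the second–stage sampling error enters the bound; carrying this out while keeping the dependence on $R$ and $M$ explicit, instead of hiding it in the constant, is what makes the argument somewhat tedious.
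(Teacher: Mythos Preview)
Your overall strategy---plug the three parameter regimes into the main theorem and check that every term is dominated by the target rate---is exactly what the paper does: in the appendix the corresponding Corollary~\ref{cor:SGD-well} is proved by ``the same lines as the proof of Corollary~\ref{cor:SGD-mis} by standard calculations,'' and that proof is precisely a term-by-term verification of the kind you outline, including the reduction of the SGD noise term to the condition $\eta_n/b_n \lesssim n^{-1/2}$ (well-specified analogue of what appears there).

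There is, however, one concrete slip in your measurement-error step. Working from Theorem~\ref{theo:SGD-well-main-body} as stated in the main body, the relevant term for $r>\tfrac12$ is $\log(T)\,\eta T\,N^{-\alpha/2}$. With $\eta T\asymp n^{1/(2r+\nu)}$, requiring this to be at most $n^{-r/(2r+\nu)}$ gives
\[
N^{\alpha/2}\gtrsim \log(n)\,n^{(r+1)/(2r+\nu)}\,,\qquad\text{i.e.}\qquad N\gtrsim \log^{2/\alpha}(n)\,n^{(2r+2)/(\alpha(2r+\nu))}\,,
\]
which has exponent $(2r+2)$, not the $(2r+1)$ claimed in the corollary. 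So your assertion that this step ``reproduces\ldots\ exactly the stated lower bound'' is off by one unit in the numerator. The paper gets the sharper exponent $(2r+1)$ because its proof does \emph{not} use the simplified main-body Theorem~\ref{theo:SGD-well-main-body}; it uses the full appendix versions (Theorems~\ref{theo:SGD-PartI}/\ref{theo:SGD-PartII} and, underneath, Proposition~\ref{prop:GD-variance}), where the second-stage GD variance term is $\log(T)\,\sqrt{\eta T}\,\cB(1/\eta T)\,N^{-\alpha/2}$ with $\cB$ from~\eqref{eq:B}. Once one shows $\cB(1/\eta_nT_n)\lesssim 1$ under $N_n\gtrsim n^{2/(\alpha(2r+\nu))}$, the $\sqrt{\eta T}$ (rather than $\eta T$) produces exactly the exponent $(2r+1)/(\alpha(2r+\nu))$. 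If you want to recover the \emph{stated} $N_n$, you must either cite the appendix theorems directly or replace the fourth term in your working by the sharper $\sqrt{\eta T}\,\cB$ form; working from the main-body display alone yields only the weaker requirement.
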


We comment on these results in Section \ref{sec:discussion}.


\subsection{Mis-specified Case}
\label{sec:mis}

In this subsection we investigate the mis-specified case and further distinguish between two cases: 

{\bf 1. $r \leq \frac{1}{2}$ but $2r + \nu >1$: } This setting is sometimes called \emph{easy problems}. 

{\bf 2. $r \leq \frac{1}{2}$ but $2r + \nu \leq 1$:} This setting is dubbed \emph{hard problem}, see \cite{pillaud2018statistical}. 

\vspace{0.3cm}

\begin{corollary}[Learning Rates Mis-Specified Model; $2r + \nu >1$]
\label{cor:SGD-mis-main-body1}
Suppose all assumptions of Theorem \ref{theo:SGD-well-main-body} are satisfied.  
Let $r \leq \frac{1}{2}$, $2r + \nu >1$, $\eta_0 < \frac{1}{4\kappa^2}$ and choose 
\begin{equation}
\label{eq:N}
 N_n \geq \log^{2/\alpha}(n)\left(\frac{R^2}{M^2}n \right)^{\frac{2+\nu}{\alpha(2r + \nu)}} \;. 
\end{equation} 
Then, for any $n$ sufficiently large, the excess risk satisfies with probability at least $1-\delta$ w.r.t. the data $D$ 
\[  \mbe_{\hat D | D}[ || S_K\bar{ \hat h }_{T_n} - f_\rho ||_{L^2 }]  \leq C \log(6/\delta) R \left(\frac{M^2}{R^2 n}\right)^{\frac{r}{2r + \nu}} \;, \]
for each of the following choices: 
\begin{enumerate}
\item  Multi-pass SGD: $b_n=\sqrt{n}$, $\eta_n= \eta_0$ and $T_n=\left(\frac{R^2 n}{M^2}\right)^{\frac{1}{2r+\nu}}$, 
\item Batch GD: $b_n=n$, $\eta_n= \eta_0$ and $T_n=\left(\frac{R^2 n}{M^2}\right)^{\frac{1}{2r+\nu}}$.  
\end{enumerate}
\end{corollary}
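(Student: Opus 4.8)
This is a corollary of Theorem~\ref{theo:SGD-well-main-body}: the plan is to feed the two parameter configurations into that bound and verify that, for $n$ large, every one of its five error terms is controlled by the target rate $\big(M^2/(R^2n)\big)^{r/(2r+\nu)}$ (up to the outer square). The driving observation is that in both configurations the step size and stopping time are tuned so that $\eta_n T_n \asymp (R^2n/M^2)^{1/(2r+\nu)}$ up to a fixed factor depending on $\eta_0$ and $R/M$, hence $\lambda_n=(\eta_n T_n)^{-1}\asymp (M^2/(R^2n))^{1/(2r+\nu)}$. Plugged into the bias term this gives $(\eta_n T_n)^{-r}\asymp (M^2/(R^2n))^{r/(2r+\nu)}$; for the variance term one uses Assumption~\ref{ass:eigenvalue} to get $\sqrt{\mathcal N(\lambda_n)/n}\le\sqrt{c_\nu}\,\lambda_n^{-\nu/2}n^{-1/2}$, and the elementary identity $\tfrac{\nu}{2(2r+\nu)}-\tfrac12=-\tfrac{r}{2r+\nu}$ shows this is of the same order. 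So $\lambda_n$ is chosen exactly to equate bias and variance at the minimax rate. Before invoking the theorem one must also check its hypothesis $n\ge\tfrac{32\kappa^2\log(4/\delta)}{\lambda_n}\log(\cdots)$: since $1/\lambda_n\asymp(R^2n/M^2)^{1/(2r+\nu)}$ and $2r+\nu>1$ forces $\tfrac{1}{2r+\nu}<1$, we have $1/\lambda_n=o(n)$ up to logarithms, so the condition holds for all $n$ sufficiently large.

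It remains to dominate the remaining three terms. The cross term $(\eta_n T_n)^{1/2-r}/\sqrt n$ has a nonnegative exponent in $\eta_n T_n$ because $r\le\tfrac12$; substituting $\eta_n T_n$ and comparing exponents of $n$ shows it is bounded by the target rate, and strictly so exactly when $2r+\nu>1$. For the two-stage sampling term, $r\le\tfrac12$ activates the indicator, and---crucially---$2r+\nu>1$ collapses $\max\{\nu,1-2r\}$ to $\nu$; since $\eta_n T_n\to\infty$ the term then reduces, up to a $\log T_n$ factor and lower-order pieces, to $(\eta_n T_n)^{1+\nu}/N_n^{\alpha/2}$, and the lower bound \eqref{eq:N} on $N_n$ is precisely what makes this $\lesssim(\eta_n T_n)^{-r}$. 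Finally the mini-batch term $\sqrt{\tfrac{\eta_n}{b_n}(\eta_n T_n)^{\nu-1}}\big(\tfrac{\eta_n T_n}{\sqrt n}+\tfrac{\sqrt{\eta_n T_n}}{N_n^{\alpha/2}}\big)^{1/2}$ is handled by the choices $b_n=\sqrt n$ (multi-pass SGD) or $b_n=n$ (batch GD): by the previous steps the inner factor is at most of the target order, while the prefactor $\sqrt{\eta_n/b_n}$ is small enough that the whole term is dominated. Collecting the five bounds and squaring gives the claim; the batch-GD case can alternatively be read off directly from the tail-averaged gradient-descent analysis of Appendix~\ref{app:GD} with the mini-batch taken equal to the full sample, since $b_n=n$ corresponds to a deterministic full-gradient step.

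The substance of the proof is the exponent bookkeeping. The real work is to track the powers of $n$, of $\eta_n T_n$, of $N_n$ and of $b_n$ simultaneously and confirm that the bias, variance, cross, two-stage and mini-batch contributions all lie below $\big(M^2/(R^2n)\big)^{r/(2r+\nu)}$---in particular this is what pins down the minimal admissible $N_n$ and $b_n$. I expect this to be the only genuine obstacle, and the hypothesis $2r+\nu>1$ is exactly what makes the competing requirements compatible at once: it gives $1/\lambda_n=o(n)$ so the sample-size condition is eventually met, it pushes the cross term strictly below the rate, and it reduces $\max\{\nu,1-2r\}$ to $\nu$. Dropping it lands one in the ``hard-problem'' regime $2r+\nu\le1$, which has to be treated separately.
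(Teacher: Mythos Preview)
Your overall plan---plug the two parameter configurations into the main-body bound and check each term---is sound in spirit, and your handling of the bias term, the variance term, the cross term, and the mini-batch term is essentially correct. But there is a genuine gap in your treatment of the two-stage sampling term (your ``fourth term''), and it is precisely the step where you claim that the lower bound \eqref{eq:N} ``is precisely what makes this $\lesssim(\eta_n T_n)^{-r}$.''

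Do the arithmetic: with $r\le\tfrac12$ and $2r+\nu>1$, the fourth term of Theorem~\ref{theo:SGD-well-main-body} is, up to the log, $(\eta_n T_n)^{1+\nu}/N_n^{\alpha/2}$. Requiring this to be at most $(\eta_n T_n)^{-r}$ forces
\[
N_n \;\gtrsim\; (\eta_n T_n)^{\frac{2(1+\nu+r)}{\alpha}} \;\asymp\; n^{\frac{2+2\nu+2r}{\alpha(2r+\nu)}}\,,
\]
whereas \eqref{eq:N} only guarantees $N_n \gtrsim n^{(2+\nu)/(\alpha(2r+\nu))}$. Since $2+2\nu+2r > 2+\nu$, condition \eqref{eq:N} is \emph{strictly weaker} than what your argument needs. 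So plugging directly into the simplified main-body theorem does not prove the corollary with the stated second-stage sample size.

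The paper avoids this by working with the full appendix bound (Theorems~\ref{theo:SGD-PartI}/\ref{theo:main-GD-partI} via Corollaries~\ref{cor:rate-dist-reg-miss} and~\ref{cor:SGD-var-second-stage}), where the two-stage term carries the finer factor $\sqrt{\eta T}\,\cB(1/\eta T)$ rather than the cruder $\eta T$ used in Theorem~\ref{theo:SGD-well-main-body}. Under the chosen $\eta_nT_n$ the leading piece of $\cB(1/\eta_nT_n)$ is $\sqrt{(\eta_nT_n)^{1+\nu}/n}$, not $\sqrt{\eta_nT_n}$; this additional $n^{-1/2}$ is exactly what lowers the required $N_n$ exponent from $(2+2\nu+2r)/(\alpha(2r+\nu))$ down to $(2+\nu)/(\alpha(2r+\nu))$. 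To salvage your argument you must either invoke this sharper appendix bound, or else accept a stronger hypothesis on $N_n$ than the corollary states.
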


\begin{corollary}[Learning Rates Mis-Specified Model; $2r + \nu \leq 1$]
\label{cor:SGD-mis-main-body2}
Suppose all assumptions of Theorem \ref{theo:SGD-well-main-body} are satisfied.  Let $K>1$, $r \leq \frac{1}{2}$,  $2r + \nu \leq 1$, 
$\eta_0 < \frac{1}{4\kappa^2}$ and choose  
\[  N_n\geq  \left(\frac{R^2n}{M^2\log^K(n)}\right)^{\frac{3-2r}{\alpha}} \;. \]
Then, for any $n$ sufficiently large, the excess risk satisfies with probability at least $1-\delta$ w.r.t. the data $D$ 
\[  \mbe_{\hat D | D}[ || S_K\bar{ \hat h }_{T_n} - f_\rho ||_{L^2 }]  \leq C \log(6/\delta) R \left( \frac{M^2 \log^K(n)}{R^2n} \right)^r \;, \]
for each of the following choices: 
\begin{enumerate}
\item $b_n=1$, $\eta_n=\left( \frac{M^2\log^K(n)}{R^2n} \right)^{2r+\nu}$ and $T_n=\left( \frac{R^2n}{M^2\log^K(n)} \right)^{2r+\nu+1}$,  
\item $b_n=\left( \frac{R^2n}{M^2\log^K(n)} \right)^{2r+\nu}$, $\eta_n=\eta_0$ and $T_n=\frac{R^2n}{M^2\log^K(n)} $, 
\item $b_n= \frac{R^2n}{M^2\log^K(n)}$, $\eta_n=\eta_0$ and $T_n=\frac{R^2n}{M^2\log^K(n)} $.
\end{enumerate}
\end{corollary}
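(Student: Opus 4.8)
The plan is to obtain Corollary~\ref{cor:SGD-mis-main-body2} purely as a consequence of Theorem~\ref{theo:SGD-well-main-body} (in its full appendix form), by instantiating the free parameters $\eta,b,T$ and checking that the resulting bound collapses to the stated rate. The first step is to record the common structure of the three parameter choices: writing $m_n:=\frac{R^2 n}{M^2\log^K(n)}$, in each of the cases (1)--(3) one has $\eta_n T_n\asymp m_n$ (up to the constant $\eta_0$) — in case (1) because $m_n^{-(2r+\nu)}\cdot m_n^{2r+\nu+1}=m_n$, in cases (2),(3) because $\eta_0\cdot m_n=\eta_0 m_n$ — and moreover the ratio $\eta_n/b_n\asymp m_n^{-(2r+\nu)}$ in cases (1),(2) and is even smaller in case (3). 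Hence $\lambda_n=(\eta_nT_n)^{-1}\asymp m_n^{-1}$, and the leading bias term $(\eta_nT_n)^{-r}\asymp m_n^{-r}$ is exactly the target order; the remaining work is to show every other term of Theorem~\ref{theo:SGD-well-main-body} is of the same order or smaller, and then to square.

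Next I would verify the sample-size hypothesis $n\geq \frac{32\kappa^2\log(4/\delta)}{\lambda_n}\log\bigl(e\,\cN(\lambda_n)(1+\lambda_n/\|L_K\|)\bigr)$. Since $\lambda_n^{-1}\asymp m_n\asymp n/\log^K(n)$ and, by Assumption~\ref{ass:eigenvalue}, $\cN(\lambda_n)\leq c_\nu m_n^{\nu}$, the right-hand side is $\lesssim \frac{n}{\log^K(n)}\log(n)\leq n$ once $K>1$ and $n$ is large — this is precisely why the $\log^K(n)$ correction enters both $m_n$ and the final rate. I would then use the standing hypothesis $2r+\nu\leq 1$ twice: it forces $\max\{\nu,1-2r\}=1-2r$, so the fourth term of the bound becomes $\log(T)\frac{\eta T}{N^{\alpha/2}}\bigl(1+(\eta T)^{1-2r}\bigr)$; and it gives $m_n^{2r+\nu}\leq m_n\lesssim n$, which makes the capacity term $\sqrt{\cN(\lambda_n)/n}\lesssim\sqrt{c_\nu m_n^{\nu}/n}$ and the cross term $(\eta T)^{1/2-r}/\sqrt n\asymp m_n^{1/2-r}/\sqrt n$ both $\lesssim m_n^{-r}$.

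The two remaining groups of terms are handled by the last two design choices. The choice $N_n\gtrsim m_n^{(3-2r)/\alpha}$ (with the $\log^{2/\alpha}(n)$ factor in the statement absorbing the $\log(T)$ prefactor) is calibrated, against the second-stage perturbation terms of the full statement of Theorem~\ref{theo:SGD-well-main-body}, so that every contribution carrying $N$ in the denominator is pushed below $m_n^{-r}$; here one simply substitutes $\eta_nT_n\asymp m_n$ and does the exponent arithmetic. For the variance term $\sqrt{\tfrac{\eta}{b}(\eta T)^{\nu-1}}\bigl(\tfrac{\eta T}{\sqrt n}+\tfrac{\sqrt{\eta T}}{N^{\alpha/2}}\bigr)^{1/2}$ one checks, using the common values above, that $\tfrac{\eta_n}{b_n}(\eta_nT_n)^{\nu-1}\lesssim m_n^{-2r-1}$ (smaller in case (3)), that $\tfrac{\eta_nT_n}{\sqrt n}\lesssim\sqrt n$, and that $\tfrac{\sqrt{\eta_nT_n}}{N_n^{\alpha/2}}\lesssim m_n^{r-1}$, so the whole term is $\lesssim m_n^{-r-1/2}n^{1/4}\lesssim m_n^{-r}$ for $n$ large; this is exactly the computation that dictates the three admissible triples, each realizing the same product $\eta_nT_n$ and the same ratio $\eta_n/b_n$ through a different split between batch size, step size and stopping time. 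Squaring the resulting bound and collecting the $R,M$ factors yields the claim. The only genuinely delicate point is bookkeeping: one must carry the $\log^K(n)$ factors and the $R,M$ dependence consistently through all five terms and all three cases and make the ``$n$ sufficiently large'' threshold uniform; the substantive analytic content — the decomposition of the excess risk into bias, sample variance, SGD noise and second-stage perturbation, and the operator-norm and concentration estimates behind each — lives entirely in the proof of Theorem~\ref{theo:SGD-well-main-body} and the tail-averaged gradient-descent analysis of Appendix~\ref{app:GD}, which this corollary takes as given.
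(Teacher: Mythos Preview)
Your overall approach --- set $m_n=\tfrac{R^2 n}{M^2\log^K(n)}$, observe that all three choices give $\eta_nT_n\asymp m_n$ and $\eta_n/b_n\lesssim m_n^{-(2r+\nu)}$, then verify each summand in Theorem~\ref{theo:SGD-well-main-body} is $\lesssim m_n^{-r}$ --- is exactly what the paper does; it merely packages the verifications as Corollary~\ref{cor:rate-dist-reg-miss} (for the GD part) and Corollary~\ref{cor:SGD-var-second-stage} (for the SGD variance). Your handling of the sample-size hypothesis, of the capacity term $\sqrt{\cN(\lambda_n)/n}$, and of the cross term $(\eta T)^{1/2-r}/\sqrt n$ is correct.

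There is, however, a genuine gap in where you locate the constraint $N_n\gtrsim m_n^{(3-2r)/\alpha}$. You attribute it to the ``second-stage perturbation terms'' (the $\log(T)\dots/N^{\alpha/2}$ line), but the exponent arithmetic there does not produce this value: using the full appendix form $\log(T)\tfrac{\sqrt{\eta T}\,\cB(1/\eta T)}{N^{\alpha/2}}(1+\varphi(\eta T))$ with $\cB\asymp m_n^{\nu/2}$ and $\varphi(\eta T)=(\eta T)^{(1-2r)/2}$ gives only the weaker requirement $N\gtrsim \log^{2/\alpha}(n)\,m_n^{(2+\nu)/\alpha}$, and taking the simplified main-body line literally --- with $(\eta T)^{1-2r}$ in place of $(\eta T)^{(1-2r)/2}$ --- would instead give the too-strong $m_n^{(4-2r)/\alpha}$. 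The binding constraint $m_n^{(3-2r)/\alpha}$ actually enters through the factor $\bigl(1+\mbe_{\hat D\mid D}[\,\|\hat f_T\|_{\cH_K}^2\,]^{1/2}\bigr)$ in the \emph{full} SGD variance term, which your computation (based only on the main-body variance line) omits: by Proposition~\ref{lem:boundhatft} in the mis-specified case one has $\mbe_{\hat D\mid D}[\|\hat f_T\|^2]\lesssim (\eta T)^2 N^{-\alpha}\bigl(1+\varphi^2(\eta T)\bigr)\asymp m_n^{3-2r}/N^\alpha$, and asking this to be $O(1)$ is precisely $N\gtrsim m_n^{(3-2r)/\alpha}$. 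Once you insert this step, the rest of your outline goes through without change.
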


Again, we comment on these results in Section \ref{sec:discussion} in detail.


\section{Discussion of Results}
\label{sec:discussion}




We now comment on our results in more detail and also compare, in possible cases, to previous results.

\paragraph{High level comments.} Let us briefly describe the nature of our results. In all our bounds above, we are able to establish optimal/ best known 
rates of convergence if the sample size of the second-stage sample is sufficiently large. In Corollary  \ref{cor:SGD-well-main-body}, we 
need 
\[ N_n \geq \log(n)n^{\frac{2r+1}{\alpha(2r+\nu)}} \;. \] 
While choosing a smaller size comes with computational savings, it would reduce the statistical efficiency. In addition, increasing this number beyond this 
value would not lead to any gain in statistical accuracy, but would worsen computational requirements. The same phenomenon is observed in  
Corollary \ref{cor:SGD-mis-main-body1} and  Corollary  \ref{cor:SGD-mis-main-body2}.

We also observe an influence of the degree of smoothness of the kernel applied. 
Choosing a smoother kernel, i.e. a large H\"older index $\alpha \in (0,1]$ reduces the number of samples required, the lowest is achieved for $\alpha =1$.

Finally, smoother regression functions (corresponding to large $r$) are easier to reconstruct, i.e. $N_n$ gets smaller for increasing $r$.


\paragraph{Comparison to one-stage kernel methods.}

Optimal learning bounds for traditional one-stage regularization (kernel) methods are known under various assumptions. 
For "easy learning" problems, i.e. if $2r+\nu>1$, the optimal 
learning rate is of order $\cO(n^{-\frac{r}{2r+\nu}})$ if the amount of regularization is chosen appropriately, 
see \cite{optimalratesRLS}, \cite{lin2020optimal}, \cite{blanchard2018optimal}. 
Our results in Corollary \ref{cor:SGD-well-main-body} and Corollary \ref{cor:SGD-mis-main-body1} match these optimal bounds, provided the number $N_n$
of second-stage samples is chosen sufficiently large, depending on the number of first-stage samples.

For "hard learning" problems, i.e. if $2r+\nu\leq 1$, the best known learning rates for one-stage regularization methods are of order 
$\cO\left(  \left(\frac{\log^K(n)}{n}\right)^r \right)$, $K>1$, see  \cite{fischer2017sobolev}, \cite{lin2020optimal}, \cite{pillaud2018statistical}. 
Our bounds from Corollary \ref{cor:SGD-mis-main-body2} also match this bound if $N_n$ is sufficiently large.


\paragraph{Comparison to two-stage KRR.}

The first paper establishing learning theory for distribution regression using a two stage sampling strategy is 
\cite{szabo2016learning}. In this paper, the authors consider a two-stage kernel ridge regression estimator (KRR) and derive optimal rates in the well-specified case 
$\frac{1}{2}\leq r \leq 1$ if the number of second-stage samples is sufficiently large. More precisely, if 
\[ N_n \geq \log(n)n^{\frac{2r+1}{\alpha(2r+\nu)}} \;, \]
the rate $\cO(n^{-\frac{r}{2r+\nu}})$ given in that paper matches our optimal rate from Corollary \ref{cor:SGD-well-main-body}, under the same number $N_n$. 
However, for mis-specified models, the results in this paper take not the capacity condition \eqref{eq:effdim-main}  
into account\footnote{This amounts to considering the worst case with $\nu=1$.} 
and differ from our bounds. If $0 < r \leq \frac{1}{2}$, the rate obtained is $\cO(n^{-\frac{r}{r+2}})$ if 
\[  N_n \geq \log(n)n^{\frac{2(r+1)}{\alpha(r+2)}}  \;. \] 
Compared to our result in Corollary \ref{cor:SGD-mis-main-body1} with $\nu=1$, this number is smaller that ours in \eqref{eq:N}, but it only gives suboptimal bounds. 
Our result shows that increasing the number of second-stage samples $N_n$ leads to optimal rates also this setting. We also emphasize that KRR suffers from saturation. 
Using tail-ave SGD instead, we can overcome this issue and establish optimality also for $r\geq 1$. 

We also refer to \cite{fang2020optimal} where for KRR in the well specified case $\frac{1}{2} \leq r \leq 1$, the logarithmic pre-factor for $N_n$ 
could be removed.   

However, for the "hard learning" regime, to the best of our knowledge, no learning rates taking Assumption  \eqref{eq:effdim-main}  into account 
are known for two-stage sampling, except our Corollary  \ref{cor:SGD-mis-main-body2}. Thus, we cannot compare our results in this case.


\paragraph{Some additional remarks specific for SGD.} 
Finally, we give some comments specifically related to the SGD algorithm we are applying and compare our results with those known for SGD 
in the one-stage sampling setting. 
In all our results we precisely describe the interplay of all parameters guiding the algorithm: batch-size $b$, stepsize $\eta >0$ 
and stopping time $T$.

All our results show that different parameter choices allow to achieve the same error bound. As noted above, the bound in Corollary \ref{cor:SGD-well-main-body} 
are mini-max optimal, i.e. there exists a corresponding lower 
bound (provided the eigenvalues of $L_K$ satisfy a polynomial lower bound $\sigma_j \geq c j^{-{1/\nu}}$). In addition, these bounds and 
the parameter choices coincide with those in \cite{mucke2019beating}. In particular we achieve statistically optimal bounds with a single pass over the data 
also in the two-stage sampling setting if $f_\rho \in \cH_K$, see Corollary \ref{cor:SGD-well-main-body}, 1. and 2.\;. We also recover the known bound 
for a stochastic version of gradient descent in  Corollary \ref{cor:SGD-well-main-body}, 3, see  \cite{blanchard2018optimal}, \cite{lin2020optimal}.

Moreover, as pointed out in \cite{mucke2019beating}, combining mini-batching with tail-averaging brings some benefits. 
Indeed, in \cite{LinRos17} it is shown that a large stepsize of order $\log(n)^{-1}$ can be chosen if the mini-batch size is of order 
$b_n=\cO(n^{\frac{2r}{2r+\nu}})$ with a number $\cO(n^{\frac{1}{2r+\nu}})$ of passes. \cite{mucke2019beating} show that with a comparable number of passes 
it is allowed to 
use a larger constant step-size with a much smaller mini-batch size. 
We observe the same phenomenons in the two-stage sampling setting, provided $N_n$ is sufficiently large. 
Finally, Corollary \ref{cor:SGD-well-main-body} also shows that increasing the mini-batch size beyond a critical value does not yield any benefit. 

However, if $f_\rho \not \in \cH_K$ we do not achieve the best known bounds with a single pass and multiple passes are necessary. As in the well-specified 
case, we can achieve these bounds with a large constant stepsize and increasing the mini-batch size beyond a certain value does not yield any benefit. 
Here, we want to stress once more that our results are the first for distribution regression using SGD and a two-stage sampling strategy.



\subsubsection*{Acknowledgements} 
This work is funded by the Deutsche Forschungsgemeinschaft (DFG)
under Excellence Strategy \emph{The Berlin Mathematics Research Center MATH+} 
(EXC-2046/1, project ID:390685689). 

The author is also thankful to three anonymous reviewers who gave useful and kind comments. 



\bibliographystyle{abbrv}
\bibliography{bib_SGD}

\begin{thebibliography}{10}

\bibitem{aronszajn1950theory}
N.~Aronszajn.
\newblock Theory of reproducing kernels.
\newblock {\em Transactions of the American mathematical society},
  68(3):337--404, 1950.

\bibitem{bauer2007regularization}
F.~Bauer, S.~Pereverzev, and L.~Rosasco.
\newblock On regularization algorithms in learning theory.
\newblock {\em Journal of complexity}, 23(1):52--72, 2007.

\bibitem{bertsekas1997new}
D.~P. Bertsekas.
\newblock A new class of incremental gradient methods for least squares
  problems.
\newblock {\em SIAM Journal on Optimization}, 7(4):913--926, 1997.

\bibitem{blanchard2018optimal}
G.~Blanchard and N.~M{\"u}cke.
\newblock Optimal rates for regularization of statistical inverse learning
  problems.
\newblock {\em Foundations of Computational Mathematics}, 18(4):971--1013,
  2018.

\bibitem{optimalratesRLS}
A.~Caponnetto and E.~De~Vito.
\newblock Optimal rates for regularized least-squares algorithm.
\newblock {\em Foundations of Computational Mathematics}, 7(3):331--368, 2006.

\bibitem{chevaleyre2001solving}
Y.~Chevaleyre and J.-D. Zucker.
\newblock Solving multiple-instance and multiple-part learning problems with
  decision trees and rule sets. application to the mutagenesis problem.
\newblock In {\em Conference of the Canadian Society for Computational Studies
  of Intelligence}, pages 204--214. Springer, 2001.

\bibitem{dietterich1997solving}
T.~G. Dietterich, R.~H. Lathrop, and T.~Lozano-P{\'e}rez.
\newblock Solving the multiple instance problem with axis-parallel rectangles.
\newblock {\em Artificial intelligence}, 89(1-2):31--71, 1997.

\bibitem{DieuBa16}
A.~Dieuleveut and F.~Bach.
\newblock Nonparametric stochastic approximation with large step-sizes.
\newblock {\em Ann. Statist.}, 44(4):1363--1399, 08 2016.

\bibitem{dooly2002multiple}
D.~R. Dooly, Q.~Zhang, S.~A. Goldman, and R.~A. Amar.
\newblock Multiple-instance learning of real-valued data.
\newblock {\em Journal of Machine Learning Research}, 3(Dec):651--678, 2002.

\bibitem{fang2020optimal}
Z.~Fang, Z.-C. Guo, and D.-X. Zhou.
\newblock Optimal learning rates for distribution regression.
\newblock {\em Journal of Complexity}, 56:101426, 2020.

\bibitem{fischer2017sobolev}
S.~Fischer and I.~Steinwart.
\newblock Sobolev norm learning rates for regularized least-squares algorithm.
\newblock {\em arXiv preprint arXiv:1702.07254}, 2017.

\bibitem{fukumizu2004dimensionality}
K.~Fukumizu, F.~R. Bach, and M.~I. Jordan.
\newblock Dimensionality reduction for supervised learning with reproducing
  kernel hilbert spaces.
\newblock {\em Journal of Machine Learning Research}, 5(Jan):73--99, 2004.

\bibitem{hardt2016train}
M.~Hardt, B.~Recht, and Y.~Singer.
\newblock Train faster, generalize better: Stability of stochastic gradient
  descent.
\newblock In {\em International Conference on Machine Learning}, pages
  1225--1234. PMLR, 2016.

\bibitem{hofmann2008kernel}
T.~Hofmann, B.~Sch{\"o}lkopf, and A.~J. Smola.
\newblock Kernel methods in machine learning.
\newblock {\em The annals of statistics}, pages 1171--1220, 2008.

\bibitem{LinCamRos16}
J.~Lin, R.~Camoriano, and L.~Rosasco.
\newblock Generalization properties and implicit regularization for multiple
  passes sgm.
\newblock {\em International Conference on Machine Learning}, 2016.

\bibitem{LinRos17}
J.~Lin and L.~Rosasco.
\newblock Optimal rates for multi-pass stochastic gradient methods.
\newblock {\em Journal of Machine Learning Research 18}, 2017.

\bibitem{lin2020optimal}
J.~Lin, A.~Rudi, L.~Rosasco, and V.~Cevher.
\newblock Optimal rates for spectral algorithms with least-squares regression
  over hilbert spaces.
\newblock {\em Applied and Computational Harmonic Analysis}, 48(3):868--890,
  2020.

\bibitem{maron1998framework}
O.~Maron and T.~Lozano-P{\'e}rez.
\newblock A framework for multiple-instance learning.
\newblock In {\em Advances in neural information processing systems}, pages
  570--576, 1998.

\bibitem{Muandet_2017}
K.~Muandet, K.~Fukumizu, B.~Sriperumbudur, and B.~Sch{\"o}lkopf.
\newblock Kernel mean embedding of distributions: A review and beyond.
\newblock {\em Foundations and Trends in Machine Learning}, 10(1-2), 2017.

\bibitem{mucke2019beating}
N.~M{\"u}cke, G.~Neu, and L.~Rosasco.
\newblock Beating sgd saturation with tail-averaging and minibatching.
\newblock In {\em Advances in Neural Information Processing Systems}, pages
  12568--12577, 2019.

\bibitem{mucke2020stochastic}
N.~M{\"u}cke and E.~Reiss.
\newblock Stochastic gradient descent in hilbert scales: Smoothness,
  preconditioning and earlier stopping.
\newblock {\em stat}, 1050:18, 2020.

\bibitem{pillaud2018statistical}
L.~Pillaud-Vivien, A.~Rudi, and F.~Bach.
\newblock Statistical optimality of stochastic gradient descent on hard
  learning problems through multiple passes.
\newblock In {\em Advances in Neural Information Processing Systems}, pages
  8114--8124, 2018.

\bibitem{poczos2013distribution}
B.~P{\'o}czos, A.~Singh, A.~Rinaldo, and L.~Wasserman.
\newblock Distribution-free distribution regression.
\newblock In {\em Artificial Intelligence and Statistics}, pages 507--515.
  PMLR, 2013.

\bibitem{PolJu92}
B.~T. Polyak and A.~B. Juditsky.
\newblock Acceleration of stochastic approximation by averaging.
\newblock {\em SIAM J. Control Optim.}, 30(4):838--855, jul 1992.

\bibitem{reed2012methods}
M.~Reed.
\newblock {\em Methods of modern mathematical physics: Functional analysis}.
\newblock Elsevier, 2012.

\bibitem{RM51}
H.~Robbins and S.~Monro.
\newblock A stochastic approximation method.
\newblock {\em The annals of mathematical statistics}, pages 400--407, 1951.

\bibitem{rosasco2015learning}
L.~Rosasco and S.~Villa.
\newblock Learning with incremental iterative regularization.
\newblock In {\em Advances in Neural Information Processing Systems}, pages
  1630--1638, 2015.

\bibitem{settles2008multiple}
B.~Settles, M.~Craven, and S.~Ray.
\newblock Multiple-instance active learning.
\newblock In {\em Advances in neural information processing systems}, pages
  1289--1296, 2008.

\bibitem{ShaZha13}
O.~Shamir and T.~Zhang.
\newblock Stochastic gradient descent for non-smooth optimization:convergence
  results and optimal averaging schemes.
\newblock In {\em Proceedings of the 30thInternational Conference on Machine
  Learning}, 2013.

\bibitem{smale2006online}
S.~Smale and Y.~Yao.
\newblock Online learning algorithms.
\newblock {\em Foundations of computational mathematics}, 6(2):145--170, 2006.

\bibitem{smola2007hilbert}
A.~Smola, A.~Gretton, L.~Song, and B.~Sch{\"o}lkopf.
\newblock A hilbert space embedding for distributions.
\newblock In {\em International Conference on Algorithmic Learning Theory},
  pages 13--31. Springer, 2007.

\bibitem{StCh08}
I.~Steinwart and A.~Christmann.
\newblock {\em Support Vector Machines}.
\newblock Springer, 2008.

\bibitem{steinwart2012mercer}
I.~Steinwart and C.~Scovel.
\newblock Mercers theorem on general domains: On the interaction between
  measures, kernels, and rkhss.
\newblock {\em Constructive Approximation}, 35(3):363--417, 2012.

\bibitem{szabo2016learning}
Z.~Szab{\'o}, B.~K. Sriperumbudur, B.~P{\'o}czos, and A.~Gretton.
\newblock Learning theory for distribution regression.
\newblock {\em The Journal of Machine Learning Research}, 17(1):5272--5311,
  2016.

\bibitem{tarres2014online}
P.~Tarres and Y.~Yao.
\newblock Online learning as stochastic approximation of regularization paths:
  Optimality and almost-sure convergence.
\newblock {\em IEEE Transactions on Information Theory}, 60(9):5716--5735,
  2014.

\bibitem{wagstaff2008multiple}
K.~L. Wagstaff, T.~Lane, and A.~Roper.
\newblock Multiple-instance regression with structured data.
\newblock In {\em 2008 IEEE International Conference on Data Mining Workshops},
  pages 291--300. IEEE, 2008.

\bibitem{ying2008online}
Y.~Ying and M.~Pontil.
\newblock Online gradient descent learning algorithms.
\newblock {\em Foundations of Computational Mathematics}, 8(5):561--596, 2008.

\bibitem{Zhang03}
T.~Zhang.
\newblock Effective dimension and generalization of kernel learning.
\newblock {\em Advances in Neural Information Processing Systems 2003}, 2003.

\bibitem{zhi2013online}
W.~Zhi-Gang, Z.~Zeng-Shun, and Z.~Chang-Shui.
\newblock Online multiple instance regression.
\newblock {\em Chinese Physics B}, 22(9):098702, 2013.

\bibitem{zhou2007relation}
Z.-H. Zhou and J.-M. Xu.
\newblock On the relation between multi-instance learning and semi-supervised
  learning.
\newblock In {\em Proceedings of the 24th international conference on Machine
  learning}, pages 1167--1174, 2007.

\end{thebibliography}


\appendix

\section{First Results} 
\label{sec:notation}

We introduce some auxiliary operators, being useful in our proofs. These operators have been introduced in a variety of previous works, 
see e.g. \cite{optimalratesRLS, DieuBa16,  blanchard2018optimal}. Recall that $S_K: \cH_K \hookrightarrow  L^2(\mu(\cM^+(\cX)), \rho_\mu)$ denotes the canonical 
injection map. The adjoint $S_K^*: L^2(\mu(\cM^+(\cX)), \rho_\mu) \to \cH_K$ is given by 
\[  S_K^*g = \int_{\mu(\cM^+(\cX))} g(\tilde \mu)K_{\tilde \mu} \; \rho_\mu(d \tilde \mu)\;,  \]
where we remind at the notation $K_{\tilde \mu} = K(\tilde \mu , \cdot )$.  
The covariance operator is $T_K:=S_K^*S_K:  \cH_K \to \cH_K $, with 
\[  T_K f =    \int_{\mu(\cM^+(\cX))} \inner{ K_{\tilde \mu} , f}_{\cH_K}K_{\tilde \mu} \; \rho_\mu(d \tilde \mu)  \]
and the kernel integral operator $L_K: L^2(\mu(\cM^+(\cX)), \rho_\mu) \to L^2(\mu(\cM^+(\cX)), \rho_\mu) $ is 
\[ L_Kg =    \int_{\mu(\cM^+(\cX))} g (\tilde \mu) K(\tilde \mu , \cdot )\; \rho_\mu(d \tilde \mu)   \;. \]

We further introduce the empirical counterparts: 
\[ T_{\bx} := \frac{1}{n}\sum_{j=1}^n \inner{K_{\mu_{x_j}}, \cdot }_{\cH_K} K_{\mu_{x_j}} \;, \qquad 
   T_{\hat \bx} := \frac{1}{n}\sum_{j=1}^n \inner{K_{\mu_{\hat x_j}}, \cdot }_{\cH_K}  K_{\mu_{\hat x_j}}  \]
\[  g_{\bz }:=   \frac{1}{n}\sum_{j=1}^n y_j  K_{\mu_{x_j} }\;, \qquad   g_{\hat  \bz }:=   \frac{1}{n}\sum_{j=1}^n y_j  K_{\mu_{\hat  x_j}}\;,   \]
where  $T_{\bx}, T_{\hat \bx}: \cH_K \to \cH_K$, $g_{\bz }, g_{\hat \bz } \in \cH_K$.

\[\]

We collect some preliminary results. 

\vspace{0.3cm}

\begin{lemma}
\label{lem:0}
Suppose Assumptions \ref{ass:bounded2}, \ref{ass:bounded} and \ref{ass:lipschitz} are satisfied. Then 
\[  \mbe_{\hat D | D}[ ||T_K( T_{\hat \bx} + \lambda Id )^{-1}|| ] \leq 
\frac{1}{\lambda}||T_K - T_\bx|| +  \frac{c_\alpha \gamma^{\alpha}LM}{\sqrt{\lam} N^{\frac{\alpha}{2}}}  + 1 \;,\]
for some $c_\alpha <\infty$. Moreover, for any $\delta \in (0,1]$, with probability at least $1-\delta$ w.r.t. the data $D$, one has
\[  \mbe_{\hat D | D}[ ||T_K( T_{\hat \bx} + \lambda Id )^{-1}|| ] \leq 
  6\log(2/\delta)\frac{1}{\lambda \sqrt n} +  \frac{c_\alpha \gamma^{\alpha}LM}{\sqrt{\lam} N^{\frac{\alpha}{2}}}  + 1 \;.\]
\end{lemma}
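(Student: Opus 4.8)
The plan is to strip the inverse $(T_{\hat\bx}+\lambda Id)^{-1}$ off by comparing $T_{\hat\bx}$ first with the first-stage empirical covariance $T_\bx$ and then with $T_K$, through the identity
\[
T_K(T_{\hat\bx}+\lambda Id)^{-1}
=(T_K-T_\bx)(T_{\hat\bx}+\lambda Id)^{-1}
+(T_\bx-T_{\hat\bx})(T_{\hat\bx}+\lambda Id)^{-1}
+T_{\hat\bx}(T_{\hat\bx}+\lambda Id)^{-1}.
\]
The last summand has operator norm at most $1$, and the first is bounded in norm by $\lambda^{-1}\|T_K-T_\bx\|$ since $\|(T_{\hat\bx}+\lambda Id)^{-1}\|\le\lambda^{-1}$; both are independent of the second-stage sample, so they pass through $\mbe_{\hat D | D}$ unchanged and account for the terms $\lambda^{-1}\|T_K-T_\bx\|+1$ (resp.\ $6\log(2/\delta)\lambda^{-1}n^{-1/2}+1$ after the concentration step below). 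Everything thus reduces to controlling $\mbe_{\hat D | D}\big[\|(T_\bx-T_{\hat\bx})(T_{\hat\bx}+\lambda Id)^{-1}\|\big]$, the second-stage perturbation.

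For this term I would expand, abbreviating $K_j:=K_{\mu_{x_j}}$ and $\hat K_j:=K_{\mu_{\hat x_j}}$, using $a\otimes a-b\otimes b=(a-b)\otimes b+a\otimes(a-b)$,
\[
(T_\bx-T_{\hat\bx})(T_{\hat\bx}+\lambda Id)^{-1}
=\frac1n\sum_{j=1}^n\Big[(K_j-\hat K_j)\otimes(T_{\hat\bx}+\lambda Id)^{-1}\hat K_j+K_j\otimes(T_{\hat\bx}+\lambda Id)^{-1}(K_j-\hat K_j)\Big],
\]
and estimate the norm of this average of rank-one operators pathwise in $\hat D$. Assumption~\ref{ass:bounded} gives $\|K_j\|_{\cH_K},\|\hat K_j\|_{\cH_K}\le\kappa$ and Assumption~\ref{ass:lipschitz} gives $\|K_j-\hat K_j\|_{\cH_K}\le L\,\|\mu_{x_j}-\mu_{\hat x_j}\|_{\cH_G}^{\alpha}$. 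Taking $\mbe_{\hat D | D}$ of the resulting scalar bound, and using that $\{x_{j,i}\}_{i=1}^N$ is i.i.d.\ $\sim x_j$, the second-moment identity for the empirical mean embedding together with Assumption~\ref{ass:bounded2} give
\[
\mbe_{\hat D | D}\big[\|\mu_{x_j}-\mu_{\hat x_j}\|_{\cH_G}^2\big]
=\tfrac1N\Big(\int_\cX G(s,s)\,dx_j(s)-\|\mu_{x_j}\|_{\cH_G}^2\Big)\le\gamma^2/N,
\]
whence Jensen's inequality for the concave map $t\mapsto t^{\alpha/2}$ yields $\mbe_{\hat D | D}[\|\mu_{x_j}-\mu_{\hat x_j}\|_{\cH_G}^{\alpha}]\le\gamma^{\alpha}N^{-\alpha/2}$. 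With the crude bound $\|(T_{\hat\bx}+\lambda Id)^{-1}\|\le\lambda^{-1}$ this already gives a contribution of order $\kappa L\gamma^\alpha\lambda^{-1}N^{-\alpha/2}$; sharpening the $\lambda^{-1}$ to the claimed $\lambda^{-1/2}$ requires replacing it by a finer estimate of the resolvent factor hitting $\hat K_j$, exploiting that $\hat K_j$ lies in the range of $T_{\hat\bx}$ (via $T_{\hat\bx}+\lambda Id\succeq\tfrac1n\hat K_j\otimes\hat K_j+\lambda Id$ and Sherman--Morrison-type bounds). Collecting all constants into $c_\alpha\gamma^\alpha LM$ gives the first inequality.

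The second inequality follows from the first by inserting a high-probability bound on $\|T_K-T_\bx\|$: since $T_\bx=\tfrac1n\sum_j K_{\mu_{x_j}}\otimes K_{\mu_{x_j}}$ is an average of $n$ i.i.d.\ self-adjoint rank-one operators of norm $\le\kappa^2$ with mean $T_K$, a Bernstein inequality for Hilbert-space-valued random variables (as in \cite{optimalratesRLS,DieuBa16,blanchard2018optimal}) gives $\lambda^{-1}\|T_K-T_\bx\|\le 6\log(2/\delta)\,\lambda^{-1}n^{-1/2}$ with probability at least $1-\delta$ over $D$; substituting this into the first estimate finishes the proof. I expect the main obstacle to be the second-stage term: the naive bound $\|(T_\bx-T_{\hat\bx})(T_{\hat\bx}+\lambda Id)^{-1}\|\le\lambda^{-1}\|T_\bx-T_{\hat\bx}\|$ costs a full factor $\lambda^{-1}$, and obtaining the sharper $\lambda^{-1/2}N^{-\alpha/2}$ — which is what makes the rates in the main theorem tight — needs the careful pathwise resolvent analysis indicated above rather than a black-box operator-norm bound.
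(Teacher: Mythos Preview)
Your decomposition and overall strategy coincide with the paper's: the paper also splits via
\[
(T_K+\lambda Id)(T_{\hat\bx}+\lambda Id)^{-1}
=\big((T_K-T_\bx)+(T_\bx-T_{\hat\bx})\big)(T_{\hat\bx}+\lambda Id)^{-1}+Id,
\]
bounds the first-stage piece by $\lambda^{-1}\|T_K-T_\bx\|$, and for the high-probability statement invokes the operator Bernstein bound (Proposition~5.5 in \cite{blanchard2018optimal}) exactly as you suggest.

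Where you diverge is in the treatment of the second-stage term $(T_\bx-T_{\hat\bx})(T_{\hat\bx}+\lambda Id)^{-1}$. You correctly observe that the crude estimate $\|(T_{\hat\bx}+\lambda Id)^{-1}\|\le\lambda^{-1}$ yields only a $\lambda^{-1}N^{-\alpha/2}$ contribution, and you then propose a Sherman--Morrison-type pathwise resolvent argument to upgrade this to $\lambda^{-1/2}N^{-\alpha/2}$. The paper, however, does \emph{not} carry out any such refinement: it simply applies the crude bound, obtaining $\lambda^{-1}\mbe_{\hat D|D}[\|T_\bx-T_{\hat\bx}\|]$, and then appeals to Lemma~\ref{lem:2} (the estimate $\mbe_{\hat D|D}[\|T_\bx-T_{\hat\bx}\|^2]\le c_\alpha\kappa^2L^2\gamma^{2\alpha}N^{-\alpha}$) together with Jensen's inequality. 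In other words, the paper's own proof actually delivers $\lambda^{-1}$, not $\lambda^{-1/2}$, in the middle term; the $\sqrt{\lambda}$ in the displayed statement (and the constant $M$ instead of $\kappa$) appears to be a typo rather than the outcome of a sharper argument. So your ``main obstacle'' is not one the paper overcomes either, and your simpler crude bound already reproduces what the paper's proof establishes.
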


\begin{proof}[Proof of \ref{lem:0}]
Let us bound for any $\lambda >0$
\begin{align*}
 \mbe_{\hat D | D}[ ||T_K( T_{\hat \bx} + \lambda Id )^{-1}|| ] 
&\leq ||T_K(T_K + \lambda Id)^{-1}|| \cdot \mbe_{\hat D | D}[ ||(T_K + \lambda Id)( T_{\hat \bx} + \lambda Id )^{-1}|| ] \\
&\leq \mbe_{\hat D | D}[ ||(T_K + \lambda Id)( T_{\hat \bx} + \lambda Id )^{-1}|| ]\;.
\end{align*}
We proceed by writing 
\begin{align*}
 (T_K + \lambda Id)( T_{\hat \bx} + \lambda Id )^{-1} 
&= (T_K + \lambda Id) (  ( T_{\hat \bx} + \lambda Id )^{-1} - ( T_K + \lambda Id )^{-1}  ) + (T_K + \lambda Id) ( T_K + \lambda Id )^{-1} \\
&=  ( (T_K - T_\bx ) + (T_\bx - T_{\hat \bx})) ( T_{\hat \bx} + \lambda Id )^{-1}  + Id \;.  
\end{align*} 
Since $||( T_{\hat \bx} + \lambda Id )^{-1}|| \leq 1/\lambda$, this leads to 
\begin{align*}
 \mbe_{\hat D | D}[ ||T_K( T_{\hat \bx} + \lambda Id )^{-1}|| ] &\leq \frac{1}{\lambda}||T_K - T_\bx||   + \frac{1}{\lambda} \mbe_{\hat D | D}[ ||T_\bx - T_{\hat \bx}|| ] + 1 \;. 
\end{align*}
The first result follows then from Lemma \ref{lem:2} and Jensen's inequality. 
The second result follows  from the first one by applying Proposition 5.5. in \cite{blanchard2018optimal}. 
\end{proof}

\[\]

\begin{lemma}\cite[Eq. (37)]{fang2020optimal}
\label{lem:1}
Suppose Assumptions  \ref{ass:bounded2}, \ref{ass:bounded} and \ref{ass:lipschitz} are satisfied. Then 
\[  \mbe_{\hat D | D}[ || g_{\hat \bz } - g_\bz ||^2_{\cH_K } ] \leq c_\alpha L^2M^2\frac{\gamma^{2\alpha}}{N^\alpha} \;,\]
for some $c_\alpha <\infty$. 
\end{lemma}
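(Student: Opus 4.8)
\textbf{Proof plan for Lemma \ref{lem:1}.}

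The plan is to expand the difference $g_{\hat\bz} - g_\bz = \frac{1}{n}\sum_{j=1}^n y_j(K_{\mu_{\hat x_j}} - K_{\mu_{x_j}})$ and exploit the conditional independence across $j$ given $D$. First I would observe that, conditionally on $D$, the second-stage samples $\{x_{j,i}\}_{i=1}^N$ are drawn independently across different $j$, so the summands $\xi_j := y_j(K_{\mu_{\hat x_j}} - K_{\mu_{x_j}})$ are conditionally independent $\cH_K$-valued random variables. Moreover, each $\xi_j$ has conditional mean zero: since $\mbe_{\hat D|D}[\mu_{\hat x_j}] = \mu_{x_j}$ (the empirical mean embedding is an unbiased estimator of the true mean embedding) and $K_{(\cdot)}$ is — well, not linear, so this needs care: the right statement is that we do \emph{not} need the mean to vanish, we just need a second-moment bound. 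Actually the cleaner route is simply to bound the second moment directly without centering.

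So the key steps are: (1) by conditional independence and the triangle-type expansion, write
\[
\mbe_{\hat D|D}\bigl[\,\|g_{\hat\bz} - g_\bz\|_{\cH_K}^2\,\bigr]
= \mbe_{\hat D|D}\Bigl[\,\Bigl\|\tfrac1n\textstyle\sum_{j=1}^n \xi_j\Bigr\|_{\cH_K}^2\Bigr]
\le \tfrac1n \max_{j} \mbe_{\hat D|D}\bigl[\|\xi_j\|_{\cH_K}^2\bigr]
\]
after using that the cross terms, once one centers $\xi_j$ around its conditional mean $m_j := y_j(\mbe_{\hat D|D}[\mu_{\hat x_j}]$-embedded$)$... — actually since $K_{(\cdot)}$ is nonlinear it is simplest to just bound $\mbe\|\frac1n\sum\xi_j\|^2 \le \frac1n\sum \mbe\|\xi_j\|^2 + \|\frac1n\sum\mbe[\xi_j]\|^2$ by Cauchy–Schwarz on the mean part, and then separately show the mean part is also $O(N^{-\alpha})$; but in fact the quickest and what \cite{fang2020optimal} does is to reduce to the \emph{per-sample} bound $\mbe_{\hat D|D}[\|\mu_{\hat x_j} - \mu_{x_j}\|_{\cH_G}^2] \le \gamma^2/N$, which is the standard variance bound for an empirical mean in a Hilbert space with bounded kernel (Assumption \ref{ass:bounded2}). (2) Apply Assumption \ref{ass:lipschitz} ($(\alpha,L)$-Hölder continuity of $K_{(\cdot)}$) together with Jensen's inequality for the concave map $t \mapsto t^{\alpha}$ (valid since $\alpha \in (0,1]$) to get
\[
\mbe_{\hat D|D}\bigl[\|K_{\mu_{\hat x_j}} - K_{\mu_{x_j}}\|_{\cH_K}^2\bigr]
\le L^2\,\mbe_{\hat D|D}\bigl[\|\mu_{\hat x_j} - \mu_{x_j}\|_{\cH_G}^{2\alpha}\bigr]
\le L^2\bigl(\mbe_{\hat D|D}\bigl[\|\mu_{\hat x_j} - \mu_{x_j}\|_{\cH_G}^{2}\bigr]\bigr)^{\alpha}
\le L^2 \Bigl(\tfrac{\gamma^2}{N}\Bigr)^{\alpha}.
\]
(3) Combine with $|y_j| \le M$ (from $\cY \subseteq [-M,M]$) to bound each $\mbe_{\hat D|D}[\|\xi_j\|_{\cH_K}^2] \le M^2 L^2 \gamma^{2\alpha}/N^{\alpha}$, and average over $j$; the constant $c_\alpha$ absorbs the factor coming from the cross-term/mean-part bookkeeping in step (1).

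The main obstacle is step (1): handling the cross terms correctly given that $\xi_j$ is \emph{not} conditionally centered (because $K_{(\cdot)}$ is nonlinear, $\mbe[K_{\mu_{\hat x_j}}] \ne K_{\mbe[\mu_{\hat x_j}]} = K_{\mu_{x_j}}$ in general). The fix is to split $\xi_j = (\xi_j - \mbe_{\hat D|D}[\xi_j]) + \mbe_{\hat D|D}[\xi_j]$; the centered parts are conditionally independent with vanishing cross-covariance so they contribute $\frac1{n^2}\sum_j \mbe\|\xi_j - \mbe\xi_j\|^2 \le \frac1{n^2}\sum_j \mbe\|\xi_j\|^2$, while the bias part $\|\frac1n\sum_j \mbe_{\hat D|D}[\xi_j]\|^2 \le \frac1n\sum_j \|\mbe_{\hat D|D}[\xi_j]\|^2 \le \frac1n\sum_j \mbe_{\hat D|D}[\|\xi_j\|^2]$ by Jensen, so in either case everything reduces to the per-sample second moment bounded in steps (2)–(3), at the cost of a harmless numerical constant folded into $c_\alpha$. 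Everything else is routine application of the stated assumptions.
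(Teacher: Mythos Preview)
The paper does not give its own proof of this lemma; it simply cites Eq.~(37) of \cite{fang2020optimal}. Your argument is correct and follows the standard route (H\"older continuity of $K_{(\cdot)}$, Jensen for the concave map $t\mapsto t^\alpha$, and the Hilbert-space variance bound $\mbe_{\hat D|D}\bigl[\|\mu_{\hat x_j}-\mu_{x_j}\|_{\cH_G}^2\bigr]\le \gamma^2/N$), which is essentially what the cited reference does.

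One simplification: your step~(1) is more elaborate than needed. You do not need conditional independence or any bias--variance split to handle the cross terms. By convexity of $\|\cdot\|_{\cH_K}^2$ (Jensen applied to the uniform average over $j$), one has pointwise
\[
\Bigl\|\tfrac{1}{n}\sum_{j=1}^n \xi_j\Bigr\|_{\cH_K}^2 \;\le\; \tfrac{1}{n}\sum_{j=1}^n \|\xi_j\|_{\cH_K}^2\,,
\]
so taking $\mbe_{\hat D|D}$ and applying your per-sample bound from steps~(2)--(3) immediately gives the result with $c_\alpha=1$. The nonlinearity of $K_{(\cdot)}$ and the resulting lack of centering are irrelevant once you argue this way.
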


\[\]

\begin{lemma}\cite[Eq. (38)]{fang2020optimal}
\label{lem:2}
Suppose Assumptions \ref{ass:bounded2}, \ref{ass:bounded} and \ref{ass:lipschitz} are satisfied. Then 
\[  \mbe_{\hat D | D}[ || T_{\hat \bx} - T_\bx ||^2 ] \leq c_\alpha \kappa^2 L^2 \frac{\gamma^{2\alpha}}{N^\alpha} \;,\]
for some $c_\alpha <\infty$. 
\end{lemma}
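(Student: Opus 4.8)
The plan is to bound $\mbe_{\hat D|D}\brac{\|T_{\hat \bx} - T_\bx\|^2}$ pointwise, for every realization of the second-stage sample, by a quantity built from the kernel mean embedding errors $\|\mu_{\hat x_j} - \mu_{x_j}\|_{\cH_G}$, and then reduce the remaining estimate to the variance of an average of i.i.d.\ centered $\cH_G$-valued random variables. First I would write
\[ T_{\hat \bx} - T_\bx \;=\; \frac1n \sum_{j=1}^n \paren{ \inner{K_{\mu_{\hat x_j}}, \cdot}_{\cH_K} K_{\mu_{\hat x_j}} - \inner{K_{\mu_{x_j}}, \cdot}_{\cH_K} K_{\mu_{x_j}} } \]
and apply the triangle inequality for the operator norm together with the elementary rank-one identity $\| a\otimes a - b\otimes b \| \le (\|a\|+\|b\|)\,\|a-b\|$ (with the convention $a\otimes b := \inner{b,\cdot}_{\cH_K} a$, which follows by writing $a\otimes a - b\otimes b = a\otimes(a-b) + (a-b)\otimes b$). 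Taking $a = K_{\mu_{\hat x_j}}$, $b = K_{\mu_{x_j}}$ and noting that the empirical measure $\hat x_j = \frac1N\sum_i \delta_{x_{j,i}}$ belongs to $\cM^+(\cX)$, so that both $\mu_{\hat x_j}$ and $\mu_{x_j}$ lie in $\mu(\cM^+(\cX))$, Assumption \ref{ass:bounded} gives $\|a\|_{\cH_K}, \|b\|_{\cH_K} \le \kappa$ while Assumption \ref{ass:lipschitz} gives $\|a-b\|_{\cH_K} \le L\,\|\mu_{\hat x_j} - \mu_{x_j}\|^\alpha_{\cH_G}$. This yields the bound $\|T_{\hat\bx} - T_\bx\| \le \frac{2\kappa L}{n}\sum_{j=1}^n \|\mu_{\hat x_j} - \mu_{x_j}\|^\alpha_{\cH_G}$, valid for every realization of $\hat D$; squaring and applying Cauchy--Schwarz (i.e.\ $(\tfrac1n\sum_j a_j)^2 \le \tfrac1n\sum_j a_j^2$) turns it into $\|T_{\hat\bx} - T_\bx\|^2 \le \frac{4\kappa^2 L^2}{n}\sum_{j=1}^n \|\mu_{\hat x_j} - \mu_{x_j}\|^{2\alpha}_{\cH_G}$.

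Next I would take $\mbe_{\hat D|D}$ and estimate each summand. Since $\alpha \le 1$, the map $t\mapsto t^\alpha$ is concave, so Jensen's inequality gives $\mbe_{\hat D|D}\brac{\|\mu_{\hat x_j} - \mu_{x_j}\|^{2\alpha}_{\cH_G}} \le \paren{\mbe_{\hat D|D}\brac{\|\mu_{\hat x_j} - \mu_{x_j}\|^{2}_{\cH_G}}}^\alpha$. Conditionally on $D$ the distribution $x_j$ is fixed, the points $\{x_{j,i}\}_{i=1}^N$ are i.i.d.\ from $x_j$, and $\mu_{\hat x_j} - \mu_{x_j} = \frac1N\sum_{i=1}^N \paren{G(x_{j,i},\cdot) - \mu_{x_j}}$ is an average of $N$ i.i.d.\ mean-zero vectors in $\cH_G$ (mean zero because $\mbe_{x\sim x_j}[G(x,\cdot)] = \mu_{x_j}$), whence
\[ \mbe_{\hat D|D}\brac{\|\mu_{\hat x_j} - \mu_{x_j}\|^{2}_{\cH_G}} = \frac1N\, \mbe_{x\sim x_j}\brac{\|G(x,\cdot) - \mu_{x_j}\|^2_{\cH_G}} \le \frac1N\, \mbe_{x\sim x_j}\brac{\|G(x,\cdot)\|^2_{\cH_G}} = \frac1N\, \mbe_{x\sim x_j}\brac{G(x,x)} \le \frac{\gamma^2}{N}\,, \]
using variance $\le$ second moment, the reproducing property $\|G(s,\cdot)\|^2_{\cH_G} = G(s,s)$, and Assumption \ref{ass:bounded2}. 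Combining the last two estimates gives $\mbe_{\hat D|D}\brac{\|\mu_{\hat x_j} - \mu_{x_j}\|^{2\alpha}_{\cH_G}} \le \gamma^{2\alpha}/N^\alpha$, and summing over $j=1,\dots,n$ produces $\mbe_{\hat D|D}\brac{\|T_{\hat\bx} - T_\bx\|^2} \le 4\kappa^2 L^2 \gamma^{2\alpha}/N^\alpha$, i.e.\ the claim with $c_\alpha = 4$.

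There is no genuine obstacle here: the whole argument is a short chain of elementary inequalities. The only points that require any care are (i) verifying that $\mu_{\hat x_j} \in \mu(\cM^+(\cX))$ so that the boundedness Assumption \ref{ass:bounded} and the Hölder Assumption \ref{ass:lipschitz} may be invoked at the point $\mu_{\hat x_j}$, and (ii) the correct use of Jensen's inequality to pull the exponent $\alpha$ outside the expectation — this is exactly where $\alpha \le 1$ enters. A sharper constant could be had by replacing the crude Cauchy--Schwarz step with a Rosenthal/Marcinkiewicz--Zygmund-type bound, but since only $c_\alpha<\infty$ is asserted this is unnecessary; the same scheme also yields Lemma \ref{lem:1}, with a factor $y_j$ ($|y_j|\le M$) replacing one copy of $K_{\mu_{x_j}}$.
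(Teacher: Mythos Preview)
Your proof is correct and follows the standard route; the paper itself does not supply a proof but merely cites the bound from \cite{fang2020optimal}, and your argument is essentially the computation behind that citation. The chain --- rank-one identity $a\otimes a - b\otimes b = a\otimes(a-b)+(a-b)\otimes b$, boundedness plus H\"older to control each summand, Cauchy--Schwarz on the square, Jensen for the concave map $t\mapsto t^\alpha$, and the $\gamma^2/N$ variance bound for the empirical mean embedding --- is exactly right, and you correctly flag the two delicate points (that $\hat x_j\in\cM^+(\cX)$ so Assumptions~\ref{ass:bounded} and~\ref{ass:lipschitz} apply at $\mu_{\hat x_j}$, and that $\alpha\le 1$ is what makes the Jensen step valid).
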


\[\]

\begin{lemma}\cite[Lemma 8]{fang2020optimal}
\label{lem:3}
Suppose Assumptions \ref{ass:bounded2}, \ref{ass:bounded} and \ref{ass:lipschitz} are satisfied and let $\lambda >0$. Define 
\begin{equation}
\label{eq:cxs}
 \cC_{\bx}(\lambda ):= \left(  \frac{\cA_\bx(\lambda)}{\lambda} + \frac{1}{\lambda^{\frac{3}{2}} N^{\frac{\alpha}{2}}} + \frac{1}{\sqrt \lambda} \right)  \;,
\end{equation} 
where 
\[  \cA_\bx(\lambda) := || (T_K + \lambda Id)^{-\frac{1}{2}}(T_K - T_\bx)||\;.  \] 
Then 
\[   \mbe_{\hat D | D}[ ||T_K^{\frac{1}{2}}( T_{\hat \bx} + \lambda Id )^{-1}||^2 ] \leq 
C^2_{\kappa, \gamma, L, \alpha}\cC_{\bx}(\lambda )^2  
  \;, \]
for some $C_{\kappa, \gamma, L, \alpha} <\infty$.
\end{lemma}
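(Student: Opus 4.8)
The plan is to bound the operator norm $\|T_K^{1/2}(T_{\hat\bx}+\lambda\,\mathrm{Id})^{-1}\|$ by reducing it to quantities already controlled in the earlier lemmas, and then take the conditional expectation $\mbe_{\hat D\mid D}$. First I would insert the resolvent $(T_K+\lambda\,\mathrm{Id})^{-1}$ as a pivot, writing
\[
T_K^{\frac12}(T_{\hat\bx}+\lambda\,\mathrm{Id})^{-1}
= T_K^{\frac12}(T_K+\lambda\,\mathrm{Id})^{-\frac12}\cdot (T_K+\lambda\,\mathrm{Id})^{\frac12}(T_{\hat\bx}+\lambda\,\mathrm{Id})^{-1}\;,
\]
and use $\|T_K^{1/2}(T_K+\lambda\,\mathrm{Id})^{-1/2}\|\le 1$. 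This leaves me to bound $\|(T_K+\lambda\,\mathrm{Id})^{1/2}(T_{\hat\bx}+\lambda\,\mathrm{Id})^{-1}\|$, which I would split further as
\[
(T_K+\lambda\,\mathrm{Id})^{\frac12}(T_{\hat\bx}+\lambda\,\mathrm{Id})^{-1}
= (T_K+\lambda\,\mathrm{Id})^{\frac12}(T_{\hat\bx}+\lambda\,\mathrm{Id})^{-\frac12}\cdot(T_{\hat\bx}+\lambda\,\mathrm{Id})^{-\frac12}\;,
\]
so that the second factor contributes $\lambda^{-1/2}$ and the first factor is a "change of operators" term $\|(T_K+\lambda\,\mathrm{Id})^{1/2}(T_{\hat\bx}+\lambda\,\mathrm{Id})^{-1/2}\|$.

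Next I would control this change-of-operators factor using the standard identity $\|(T_K+\lambda)^{1/2}(T_{\hat\bx}+\lambda)^{-1/2}\|^2 = \|(T_K+\lambda)^{1/2}(T_{\hat\bx}+\lambda)^{-1}(T_K+\lambda)^{1/2}\|$, and then write $T_K - T_{\hat\bx} = (T_K - T_\bx) + (T_\bx - T_{\hat\bx})$ inside the resolvent expansion, exactly as in the proof of Lemma \ref{lem:0}. The term involving $T_K - T_\bx$ produces, after multiplying by $(T_K+\lambda)^{-1/2}$ on the left, precisely the quantity $\cA_\bx(\lambda) = \|(T_K+\lambda\,\mathrm{Id})^{-1/2}(T_K - T_\bx)\|$ divided by $\lambda$; the term involving $T_\bx - T_{\hat\bx}$ is controlled by $\|T_\bx - T_{\hat\bx}\|$ times $\lambda^{-1}$ (using $\|(T_K+\lambda)^{-1/2}\|\le \lambda^{-1/2}$ and $\|(T_{\hat\bx}+\lambda)^{-1}\|\le\lambda^{-1}$), and Lemma \ref{lem:2} together with Jensen's inequality bounds $\mbe_{\hat D\mid D}\|T_\bx - T_{\hat\bx}\|$ by $c_\alpha^{1/2}\kappa L\gamma^\alpha N^{-\alpha/2}$. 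Collecting the three resulting contributions — $\cA_\bx(\lambda)/\lambda$, $\lambda^{-3/2}N^{-\alpha/2}$, and $\lambda^{-1/2}$ — reproduces exactly $\cC_\bx(\lambda)$ in \eqref{eq:cxs}, and squaring gives the claim with a constant $C_{\kappa,\gamma,L,\alpha}$ absorbing $\kappa$, $c_\alpha$, $L$, $\gamma^\alpha$, and the cross-terms from the square.

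The main obstacle I anticipate is handling the square and the expectation in the right order: the statement bounds $\mbe_{\hat D\mid D}[\,\|\cdot\|^2\,]$, so after the operator splitting I will have a product of a deterministic factor (depending on $D$ only, involving $\cA_\bx(\lambda)$) and a random factor (involving $\|T_\bx - T_{\hat\bx}\|$), and squaring mixes them; I would use $(a+b)^2\le 2a^2+2b^2$ to separate the $\cA_\bx$-part from the fluctuation part before taking $\mbe_{\hat D\mid D}$, apply Lemma \ref{lem:2} to the latter, and then recombine, at the cost of harmless absolute constants. A secondary technical point is justifying that $(T_{\hat\bx}+\lambda\,\mathrm{Id})^{-1/2}$ and the relevant products are well-defined and self-adjoint so that $\|AB^{-1/2}\|^2 = \|AB^{-1}A^*\|$ may be used — this is routine since $T_{\hat\bx}$ is positive self-adjoint and $\lambda>0$.
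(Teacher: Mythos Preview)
The paper does not give its own proof of this lemma: it is simply cited as \cite[Lemma~8]{fang2020optimal}. So there is no in-paper argument to compare against; I will only comment on your proposal.

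Your overall strategy is sound and would yield the result, but the intermediate splitting is more elaborate than needed and the bookkeeping in the middle does not quite line up. After your first reduction $\|T_K^{1/2}(T_{\hat\bx}+\lambda)^{-1}\|\le\|(T_K+\lambda)^{1/2}(T_{\hat\bx}+\lambda)^{-1}\|$, there is no need to pull out a factor $(T_{\hat\bx}+\lambda)^{-1/2}$ and invoke $\|AB^{-1/2}\|^2=\|AB^{-1}A^*\|$. Instead, write directly
\[
(T_K+\lambda)^{1/2}(T_{\hat\bx}+\lambda)^{-1}
=(T_K+\lambda)^{-1/2}\bigl[(T_K+\lambda)(T_{\hat\bx}+\lambda)^{-1}\bigr]
\]
and expand the bracket as in Lemma~\ref{lem:0}: $(T_K+\lambda)(T_{\hat\bx}+\lambda)^{-1}=\mathrm{Id}+(T_K-T_\bx)(T_{\hat\bx}+\lambda)^{-1}+(T_\bx-T_{\hat\bx})(T_{\hat\bx}+\lambda)^{-1}$. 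Taking norms term by term gives immediately
\[
\|T_K^{1/2}(T_{\hat\bx}+\lambda)^{-1}\|\le \lambda^{-1/2}+\frac{\cA_\bx(\lambda)}{\lambda}+\frac{\|T_\bx-T_{\hat\bx}\|}{\lambda^{3/2}}\;,
\]
which are exactly the three summands of $\cC_\bx(\lambda)$ (the last one still random). Now square, use $(a+b+c)^2\le 3(a^2+b^2+c^2)$, take $\mbe_{\hat D\mid D}$, and apply Lemma~\ref{lem:2} to $\mbe_{\hat D\mid D}[\|T_\bx-T_{\hat\bx}\|^2]$. This is precisely the separation-before-expectation step you flagged as the main obstacle, and it goes through cleanly; since $a^2+b^2+c^2\le(a+b+c)^2$ for nonnegative $a,b,c$, the right-hand side is bounded by $C^2_{\kappa,\gamma,L,\alpha}\,\cC_\bx(\lambda)^2$.

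Your route via $\|(T_K+\lambda)^{1/2}(T_{\hat\bx}+\lambda)^{-1/2}\|$ can also be made to work, but it leads to a self-referential bound (the factor $\|(T_{\hat\bx}+\lambda)^{-1/2}(T_K+\lambda)^{1/2}\|$ reappears on the right-hand side), which you would then have to resolve as a quadratic inequality in that norm. The direct decomposition above avoids this detour entirely.
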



\section{Solving Distribution Regression with Tail-Averaged Gradient Descent}
\label{app:GD}

In this section we derive the learning properties of tail-averaged two-stage Gradient Descent. This is a necessary step for deriving our learning bounds for 
SGD on distribution regression problems and is of independent interest.

Let us begin with introducing the gradient updates using the second-stage data  $ D=\{(\mu_{ \hat x_j} , y_j)\}_{j=1}^n \subset \mu(\cM^+(\cX)) \times \cY$ 
as  $\hat f_0 = 0$ and for $t \geq 1$ 
\[ \hat f_{t+1} = \hat f_t - \eta \frac{1}{n}\sum_{j=1}^n ( \hat f_t(\mu_{\hat x_j})- y_j)K_{\mu_{\hat x_j}} \;. \]
Here, $\eta>0$ is the constant step-size. We furthermore set 
\begin{equation}
\label{eq:tail-ava-GD}
 \bar{ \hat f }_T  := \frac{2}{T}\sum_{t= \floor{T/2} +1}^T \hat f_t\;.  
\end{equation}

Similarly, we introduce the Gradient Descent updates using the first-stage data $ D=\{(\mu_{ x_j} , y_j)\}_{j=1}^n \subset \mu(\cM^+(\cX)) \times \cY$  
with initialization $f_0=0$ as 
\begin{equation*}
\label{eq:GD-first}
f_{t+1} = f_{t} -\eta \frac{1}{n}\sum_{j=1}^n ( f_t(\mu_{ x_j})- y_j)K_{\mu_{ x_j}}  
\end{equation*} 
and 
\begin{equation*}
 \bar f_T := \frac{2}{T}\sum_{t= \floor{T/2} +1}^T  f_t \;. 
\end{equation*}

We analyze the learning properties of \eqref{eq:tail-ava-GD}  based on the decomposition

\begin{equation*}
\bar{ \hat f }_T - f_\rho =  
\underbrace{(\bar{ \hat f }_T - \bar{ f }_T )}_{GD \;\; Variance \;\; 2. \;\;stage} +  \underbrace{ (\bar{ f }_T - f_\rho )}_{Error \;\; GD \;\; first \;\; stage} \;. 
\end{equation*}

The bound for the second stage GD variance is derived in Section \ref{sec:GDvariance-second}. The error estimate for first stage Gradient Descent 
is known from previous results. For completeness sake we review the main results in our setting in Section \ref{sec:GDfirst}


\subsection{A General Result}
\label{sec:general}

In this section we state a general result for spectral regularization algorithms. Those bounds are known for some time 
in learning theory. We collect some results 
from \cite{optimalratesRLS, fischer2017sobolev, blanchard2018optimal, lin2020optimal, mucke2019beating}.

We let  $ \{ g_\lam : [0, ||T_K||] \to [0, \infty)\;:\; \lam \in (0, ||T_K||] \}$  be a family of filter functions (for the definition we refer to 
one of the above mentioned papers). We define  
\[ \hat u_\lam  := g_\lam(T_\bx)S^*_\bx\by \;, \quad u_\lam:=g_\lam(T_K)S_K^*f_\rho \;. \]

Our aim is to provide a bound of the estimation error in different norms $||T_K^a( \hat u_\lambda   - u_\lam  )  ||_{\cH_K}$, for  $a \in [0,1/2]$.  
To this end, we require a condition for the observation noise. 

\vspace{0.3cm}

\begin{assumption}[Bernstein Observation Noise]
\label{ass:Bernstein}
For some $\sigma>0$, $B>0$ and for all $m \geq 2$ we have almost surely 
\[   \int_\cY |y - f_\rho (x)|^m \; \rho(dy |x ) \leq \frac{1}{2}m! \sigma^2 B^{m-2}\;. \]
\end{assumption}

\vspace{0.3cm}

Finally, we need 

\vspace{0.3cm}

\begin{assumption}
\label{ass:n-big}
Let $\lam >0$. Suppose that 
\[ n \geq \frac{32\kappa^2 \log(4/\delta)}{\lambda} \log\left(  e\cN(\lambda )\left(1 + \frac{\lambda }{||T_K||} \right)   \right) \;.  \]
\end{assumption}

\begin{proposition}[Estimation Error]
\label{prop:GD-prelim}
Let $a \in [0, 1/2]$. 
Let further $\delta \in (0,1]$ and suppose Assumptions \ref{ass:Bernstein}, \ref{ass:n-big} are satisfied. 
Denote 
\[  B_\lam = \max\{B, ||  S_Ku_\lam - f_\rho  ||_\infty\} \;. \]
\begin{enumerate}
\item Assume $f_\rho \in Range(L_K^\zeta)$, for some $\zeta \in (0,1]$. With probability not less than $1-\delta$, 
\begin{align*}
||T_K^a( \hat u_\lambda   - u_\lam  )  ||_{\cH_K}&\leq C_1 \log(12/\delta )\frac{ \lam^{a-1/2}}{\sqrt n}\;
   \left(  \sigma\sqrt{ \cN(\lam)} +  \frac{||S_K u_\lam- f_\rho||_{L^2}}{\sqrt \lam} + \frac{B_\lam}{\sqrt{n \lam }} \right) \\
   & +  C_2 \lam^{a+1/2}\; || T_\lam^{-1/2} u_\lam ||_{\cH_K} +  C_3 \lam^{a-1/2} ||S_K u_\lam - f_\rho ||_{L^2} \;,
\end{align*}
for some $C_1 >0$, $C_2 >0$ and $C_3 >0$.
\item Assume $f_\rho  \in Range(L_K^\zeta)$, for some $\zeta>1$. With probability not less than $1-\delta$, 
\begin{align*}
||T_K^a( \hat u_\lambda   - u_\lam  )  ||_{\cH_K}&\leq  C'_1\log(12/\delta )\frac{ \lam^{a-1/2}}{\sqrt n}\;
   \left(  \sigma\sqrt{ \cN(\lam)} +  \frac{||S_K u_\lam- f_\rho||_{L^2}}{\sqrt \lam} + \frac{B_\lam}{\sqrt{n \lam }} \right) \\
   & +  C'_2 \lam^{a}  \; || T_K^{-\zeta } u_\lam ||_{\cH_K}  \left(  \frac{\log(4/\delta)}{\sqrt n}   + \lam^{\zeta }\right) 
   +  C'_3 \lam^{a-1/2} ||S_K u_\lam - f_\rho ||_{L^2} \;,
\end{align*}
for some $C'_1 >0$, $C'_2 >0$ and $C'_3 >0$.
\end{enumerate}
\end{proposition}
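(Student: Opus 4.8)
The plan is to prove Proposition~\ref{prop:GD-prelim} via the standard bias--variance decomposition for filter-based spectral regularization, combined with the concentration estimates that control the empirical operators $T_\bx$ and $S_\bx^*\by$ around their population counterparts. Write
\[
\hat u_\lam - u_\lam = \big(g_\lam(T_\bx)S_\bx^*\by - g_\lam(T_\bx)S_\bx^*S_\bx f_\rho\big) + \big(g_\lam(T_\bx)T_\bx f_\rho - g_\lam(T_K)T_K f_\rho\big) + \big(g_\lam(T_K)T_K - \mathrm{Id}\big)S_K^* f_\rho,
\]
where the three terms are respectively the \emph{sample error} (stochastic noise in the labels), the \emph{operator perturbation error} (from using $T_\bx$ instead of $T_K$), and the \emph{approximation error} (the deterministic bias of the filter). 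Inserting the regularizer $T_\lam^{1/2}$ (writing $T_\lam := T_K + \lam\,\mathrm{Id}$) in the appropriate places and using the qualification and boundedness properties of the filter family $g_\lam$, each term is reduced to a product of: a power of $\lam$ coming from the filter calculus, an ``effective noise'' factor, and a factor $\|T_\lam^{1/2}(T_\bx+\lam)^{-1/2}\|$ (and its inverse) that measures how close $T_\bx$ is to $T_K$ at scale $\lam$.

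The key analytic steps, in order, are as follows. First I would invoke the standard operator concentration inequality (Proposition~5.5 in \cite{blanchard2018optimal} or the analogous statement in \cite{optimalratesRLS}): under Assumption~\ref{ass:n-big} on $n$, with probability at least $1-\delta/3$ one has $\|T_\lam^{1/2}(T_\bx+\lam)^{-1/2}\| \le \sqrt 2$ and $\|T_\lam^{-1/2}(T_\bx+\lam)^{1/2}\| \le \sqrt 2$, so these factors are harmless constants. Second, I would bound the sample-error term using a Bernstein-type inequality for Hilbert-space valued random variables applied to $S_\bx^*(\by - S_\bx f_\rho)$: under Assumption~\ref{ass:Bernstein} this yields, with probability at least $1-\delta/3$,
\[
\|T_\lam^{-1/2}(g_\lam(T_\bx)S_\bx^*\by - g_\lam(T_\bx)T_\bx f_\rho)\|_{\cH_K} \lesssim \frac{\log(12/\delta)}{\sqrt n}\Big(\sigma\sqrt{\cN(\lam)} + \frac{B_\lam}{\sqrt{n\lam}}\Big) + \|S_Ku_\lam - f_\rho\|_{L^2},
\]
where the $\|S_Ku_\lam-f_\rho\|_{L^2}$ contribution tracks the mismatch between $f_\rho$ and its ``expected'' filtered version and where I must be careful that $B_\lam$ is defined precisely as $\max\{B, \|S_Ku_\lam - f_\rho\|_\infty\}$ to control the $L^\infty$ norm of the centered noise. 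Multiplying through by $\|T_K^aT_\lam^{-a}\| \le 1$ and $\|T_\lam^{a-1/2}\| \le \lam^{a-1/2}$ (valid since $a \le 1/2$) produces the first bracketed term of the claimed bound.

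For the remaining two terms I would split according to the smoothness regime. In case (1), $f_\rho = L_K^\zeta h_\rho$ with $\zeta \in (0,1]$: here I write $u_\lam = g_\lam(T_K)T_K S_K^* h_\rho'$ (transferring the source condition through $S_K$ via $L_K = S_K S_K^*$, $T_K = S_K^* S_K$) and use that $\zeta \le 1$ so the relevant operator power $T_K^{\zeta}$ is already well-behaved; the approximation error becomes $\lam^{a-1/2}\|S_Ku_\lam - f_\rho\|_{L^2}$ and the perturbation error is dominated via $\|T_\lam^{1/2}u_\lam\|$-type quantities, giving the $\lam^{a+1/2}\|T_\lam^{-1/2}u_\lam\|_{\cH_K}$ term. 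In case (2), $\zeta > 1$: the subtlety is that $T_K^{\zeta}$ requires iterating the concentration bound because $\zeta - 1 > 0$ forces an extra factor of $\|T_\lam(T_\bx+\lam)^{-1}\|$, and I would control this using the refined bound $\|(T_\bx+\lam)^{-1}(T_\bx - T_K)\| \lesssim \log(4/\delta)/\sqrt n$ under Assumption~\ref{ass:n-big}, which is exactly the origin of the $\big(\log(4/\delta)/\sqrt n + \lam^\zeta\big)$ factor multiplying $\lam^a\|T_K^{-\zeta}u_\lam\|_{\cH_K}$. Finally, collecting the three probabilistic events with a union bound over $\delta/3$ each, and renaming constants, yields the stated inequalities. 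The main obstacle I anticipate is the high-qualification case $\zeta > 1$: getting the dependence on $n$ and $\lam$ right in the perturbation term requires carefully tracking how many times the empirical-to-population operator comparison must be applied, and ensuring that the filter qualification is large enough to absorb the extra smoothness without losing the correct power of $\lam$ — this is where the distinction between the two parts of the proposition genuinely bites.
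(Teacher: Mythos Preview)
Your proposal follows essentially the same route as the paper: a three-term splitting, the operator concentration $\|T_\lam^{1/2}(T_\bx+\lam)^{-1/2}\|\le\sqrt 2$ under Assumption~\ref{ass:n-big}, a Bernstein-type bound for the noise term (the paper cites Lemma~6.10 of \cite{fischer2017sobolev}), and the same dichotomy $\zeta\le 1$ versus $\zeta>1$ for the bias/perturbation piece.

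There is one genuine difference worth flagging. The paper does \emph{not} center the decomposition at $f_\rho$ as you do; instead it centers at $u_\lam$, writing
\[
T_K^a(\hat u_\lam-u_\lam)=\underbrace{T_K^a g_\lam(T_\bx)\big((S_\bx^*\by-S_K^*f_\rho)-(T_\bx u_\lam-T_K u_\lam)\big)}_{\cT_1}
+\underbrace{T_K^a r_\lam(T_\bx)u_\lam}_{\cT_2}
+\underbrace{T_K^a g_\lam(T_\bx)(S_K^*f_\rho-T_K u_\lam)}_{\cT_3},
\]
with $r_\lam(T_\bx)=g_\lam(T_\bx)T_\bx-\mathrm{Id}$. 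This matters because your middle term $g_\lam(T_\bx)T_\bx f_\rho-g_\lam(T_K)T_K f_\rho$ and the surrounding telescoping are only well-defined when $f_\rho\in\cH_K$; for $\zeta<1/2$ the object $T_\bx f_\rho$ is not defined (and your sum does not actually telescope to $\hat u_\lam-u_\lam$). Centering at $u_\lam$ fixes this cleanly since $u_\lam\in\cH_K$ always, and it is also what makes the $\cT_2$ bound tractable: for $\zeta\le 1$ one inserts $T_{\bx,\lam}^{-1/2}T_\lam^{1/2}$ and reads off $\lam\|T_\lam^{-1/2}u_\lam\|$; for $\zeta>1$ one writes $u_\lam=T_K^\zeta(T_K^{-\zeta}u_\lam)$ and uses $\|T_K^\zeta-T_\bx^\zeta\|\lesssim\log(4/\delta)/\sqrt n$ together with $\|T_{\bx,\lam}^{1/2}r_\lam(T_\bx)T_\bx^\zeta\|\lesssim\lam^{\zeta+1/2}$. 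Your sketch of case~(2) captures the right mechanism, but routed through the $u_\lam$-centered $\cT_2$ rather than an operator-perturbation term in $f_\rho$.
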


\[\]

\begin{proof}[Proof of Proposition \ref{prop:GD-prelim}]
Let $a \in [0, 1/2]$. We write 
\begin{align*}
& T_K^a( \hat u_\lambda   - u_\lam  )  \\ 
&= \underbrace{T_K^a g_\lam(T_\bx) ( (S^*_\bx\by - S_K^*f_\rho ) -( T_\bx u_\lam - T_K u_\lam )) }_{\cT_1} + 
\underbrace{T_K^a(g_\lam(T_\bx)T_\bx - Id)u_\lam }_{\cT_2} + \underbrace{T_K^a g_\lam(T_\bx)(S_K^*f_\rho - T_K u_\lam )}_{\cT_3}  \;. 
\end{align*}
We further set $T_{\bx , \lam }:= T_\bx + \lam $ and $T_{\lam }:= T_K + \lam $.  

{\bf Bounding $\cT_1$.} 
We further decompose 
\begin{align*}
 ||\cT_1 ||_{\cH_K}  &\leq ||T_K^a T_\lam^{-1/2}|| \cdot  || T_\lam^{1/2} T_{\bx , \lam }^{-1/2}|| \cdot  ||T_{\bx , \lam }^{1/2} g_\lam(T_\bx)T_{\bx , \lam }^{1/2}||  
  \cdot  ||T_{\bx , \lam }^{-1/2}(S^*_\bx\by - S_K^*f_\rho) -( T_\bx u_\lam - T_K u_\lam ))||_{\cH_K} \;. 
\end{align*}
A short calculation shows that 
\[ ||T_K^a T_\lam^{-1/2}|| \leq \lam^{a-1/2} \;.  \]
By \cite{optimalratesRLS}, Proof of Theorem 4, with probability at least $1-\delta/6$ 
\[  || T_\lam^{1/2} T_{\bx , \lam }^{-1/2}|| \leq \sqrt 2 \;,  \]
provided Assumption \ref{ass:n-big} is satisfied. Moreover, according to \cite{bauer2007regularization}, Definition 1 we have almost surely 
\[  ||T_{\bx , \lam }^{1/2} g_\lam(T_\bx)T_{\bx , \lam }^{1/2}||  \leq E\;, \] 
for some $E>0$. Finally,  Lemma 6.10 in \cite{fischer2017sobolev}  shows that with probability at least $1-\delta/6$ 
\begin{align*}
 ||T_{\bx , \lam }^{-1/2}((S^*_\bx\by - S_K^*f_\rho ) -( T_\bx u_\lam - T_K u_\lam ) )||^2_{\cH_K} 
&\leq C_\kappa \log^2(12/\delta )\frac{1}{n}\; \left(  \sigma^2 \cN(T_K) +  \frac{||S_K u_\lam- f_\rho||^2_{L^2}}{\lam} + \frac{B^2_\lam}{n \lam} \right) \;,
\end{align*}
for some $C_\kappa>0$. Thus, combining the above gives us  with probability at least $1-\delta/3$ 
\begin{equation}
\label{eq:t1}
 ||\cT_1 ||_{\cH_K}  \leq C'_\kappa \log(12/\delta )\frac{ \lam^{a-1/2}}{\sqrt n}\;
   \left(  \sigma\sqrt{ \cN(T_K)} +  \frac{||S_K u_\lam- f_\rho||_{L^2}}{\sqrt \lam} + \frac{B_\lam}{\sqrt{n \lam }} \right) \;,
\end{equation}
for some $C'_\kappa>0$.

{\bf Bounding $\cT_2$.} 
Setting $ r_\lam(T_\bx) = g_\lam(T_\bx)T_\bx - Id$, we split once more and obtain with probability at least $1-\delta$  
\begin{align*}
 ||\cT_2 ||_{\cH_K}  &\leq  ||T_K^a T_\lam^{-1/2}|| \cdot  || T_\lam^{1/2} T_{\bx , \lam }^{-1/2}|| \cdot  ||T_{\bx , \lam }^{1/2} r_\lam(T_\bx) u_\lam || \\
 &\leq \sqrt 2 \lam^{a-1/2} ||T_{\bx , \lam }^{1/2} r_\lam(T_\bx) u_\lam ||_{\cH_K} \;. 
\end{align*}
Now we follow the proof of \cite{mucke2019beating}, Proposition 2, slightly adapted to our purposes, and consider two different cases: 

{\bf (a)} Assume $f_\rho \in Range(L_K^\zeta)$, for some $\zeta \in (0,1]$. Here, we write  with probability at least $1-\delta/3$  
\begin{align*}
||T_{\bx , \lam }^{1/2} r_\lam(T_\bx) u_\lam ||_{\cH_K} &\leq ||T_{\bx , \lam }r_\lam(T_\bx)|| \cdot  
      || T_{\bx , \lam }^{-1/2} T_\lam^{1/2} ||  \cdot || T_\lam^{-1/2} u_\lam ||_{\cH_K} \\
&\leq C_1 \lam   \; || T_\lam^{-1/2} u_\lam ||_{\cH_K}\;,
\end{align*}
for some $C_1>0$. Thus, 
\[  ||\cT_2 ||_{\cH_K}  \leq  C_1 \lam^{a+1/2}\; || T_\lam^{-1/2} u_\lam ||_{\cH_K} \;.  \]

{\bf (b)}  Assume $f_\rho  \in Range(L_K^\zeta)$, for some $\zeta>1$. In this case we let  $\zeta \geq 1$ and have for some $C_2>0$
\begin{align*}
||T_{\bx , \lam }^{1/2} r_\lam(T_\bx) u_\lam ||_{\cH_K}  &\leq || T_{\bx , \lam }^{1/2} r_\lam(T_\bx)|| \; ||T_K^\zeta -T_\bx^\zeta || \;  
    || T_K^{-\zeta } u_\lam ||_{\cH_K} + ||  T_{\bx , \lam }^{1/2} r_\lam(T_\bx) T_\bx^\zeta || \; || T_K^{-\zeta } u_\lam ||_{\cH_K} \\
    &\leq  \sqrt \lam \; || T_K^{-\zeta } u_\lam ||_{\cH_K} \left( C_2 \; \frac{\log(4/\delta)}{\sqrt n}   + \lam^{\zeta }\right) \;,
\end{align*}
holding with probability at least $1-\delta/3$. Thus, for some $\tilde C_2 >0$.
\[  ||\cT_2 ||_{\cH_K}  \leq \tilde C_2 \lam^{a}  \; || T_K^{-\zeta } u_\lam ||_{\cH_K}  \left(  \frac{\log(4/\delta)}{\sqrt n}   + \lam^{\zeta }\right) \;.  \]

{\bf Bounding $\cT_3$.} Applying standard arguments gives  with probability at least $1-\delta/3$
\begin{align*}
 ||\cT_3 ||_{\cH_K}  &\leq ||T_K^a g_\lam(T_\bx)(S_K^*f_\rho - T_K u_\lam )||_{\cH_K} \\
 &\leq C_3 \lam^{a-1/2} ||S_K u_\lam - f_\rho ||_{L^2}\;.
\end{align*}
Combining all of our findings leads to result.  
\end{proof}

\vspace{0.3cm}

We summarize some results under refined assumptions on $f_\rho$, see e.g. \cite{fischer2017sobolev}, Lemma 6.6. and Corollary 6.7\;.  

\begin{lemma}
\label{lem:some-bounds}
Suppose Assumptions \ref{ass:bounded} and \ref{ass:source} are satisfied. Then
\begin{enumerate}
\item $||u_\lam||_{\cH_K } \leq  R\lam^{r-\frac{1}{2}}$, 
\item $|| S_K u_\lam - f_\rho  ||_{L^2} \leq R \lam^{r}$, 
\item $||S_K u_\lam||_\infty \leq \kappa^2 R \lam^{-|1/2-r|_+}$, 
\item $||(T_K + \lam)^{-1/2} u_\lam||_{\cH_K} \leq CR \lam^{r-1}$, 
\item Assume $||f_\rho||_\infty < \infty$. Then $|| S_K u_\lam - f_\rho  ||_\infty \leq (||f_\rho||_\infty + \kappa^2 R)\lam^{-|1/2-r|_+}$. 
\end{enumerate}
\end{lemma}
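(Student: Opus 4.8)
The statement to prove is Lemma~\ref{lem:some-bounds}, which collects five bounds on the regularized approximation $u_\lam = g_\lam(T_K)S_K^*f_\rho$ under the source condition $f_\rho = L_K^r h_\rho$ with $\|h_\rho\|_{L^2}\le R$. These are standard ``filter function'' estimates from the spectral regularization toolbox; the reference points to Lemma~6.6 and Corollary~6.7 of \cite{fischer2017sobolev}, so the plan is essentially to reduce each item to those results via the dictionary $L_K = S_KS_K^*$, $T_K = S_K^*S_K$ and the intertwining identity $S_K\,\phi(T_K) = \phi(L_K)\,S_K$ for any bounded Borel function $\phi$.

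\textbf{Plan.} First I would record the basic ingredients: the qualification bound $\|r_\lam(T_K)\| \le \gamma_0$ and $\sup_{\sigma}\sigma^a|r_\lam(\sigma)| \le \gamma_a \lam^a$ for $a$ up to the qualification of the filter (true for Tikhonov and for gradient-descent/Landweber filters at all orders we need), together with $\sup_\sigma |g_\lam(\sigma)| \le E/\lam$ and $\sup_\sigma \sigma^{1/2}|g_\lam(\sigma)| \le E'/\sqrt\lam$. Using $S_K^*f_\rho = S_K^*L_K^r h_\rho = T_K^r S_K^* h_\rho$ (via the intertwining identity, since $S_K^*\psi(L_K) = \psi(T_K)S_K^*$), we get $u_\lam = g_\lam(T_K)T_K^r S_K^*h_\rho$. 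Item~1 then follows from $\|T_K^{1/2}S_K^*\|\le\ldots$ — more precisely write $u_\lam = g_\lam(T_K)T_K^{r-1/2}\cdot T_K^{1/2}S_K^*h_\rho$ when $r\ge 1/2$, bound $\|g_\lam(T_K)T_K^{r-1/2}\|\le C\lam^{r-1/2-1}\cdot\lam = C\lam^{r-1/2}$ by the filter bound, and use $\|T_K^{1/2}S_K^*h_\rho\|_{\cH_K} = \|L_K^{1/2}h_\rho\|_{L^2}\le\ldots$; for $r<1/2$ one instead keeps $T_K^r$ with the filter and uses $\|S_K^*\|\le\kappa$. Item~2 is the classical approximation-error (bias) estimate: $S_Ku_\lam - f_\rho = (L_Kg_\lam(L_K)-\mathrm{Id})L_K^r h_\rho = -r_\lam(L_K)L_K^r h_\rho$ by intertwining, and $\|r_\lam(L_K)L_K^r\|\le \gamma_r\lam^r$ gives the bound $R\lam^r$ (this needs the filter qualification to be at least $r$, which holds in our GD setting). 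Item~4 is the same computation as item~1 but with an extra $(T_K+\lam)^{-1/2}$ factor contributing $\lam^{-1/2}$, giving $\lam^{r-1}$.

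\textbf{Items 3 and 5} require leaving $L^2$ for $L^\infty$, and this is where Assumption~\ref{ass:bounded} (boundedness of $K$) enters through the bound $\|S_K\|_{\cH_K\to L^\infty}\le\kappa$, which in turn comes from $|f(\tilde\mu)| = |\langle f,K_{\tilde\mu}\rangle|\le\|f\|_{\cH_K}\sqrt{K(\tilde\mu,\tilde\mu)}\le\kappa\|f\|_{\cH_K}$. So $\|S_Ku_\lam\|_\infty\le\kappa\|u_\lam\|_{\cH_K}$, but to recover the exponent $-|1/2-r|_+$ one must be a bit more careful than plugging in item~1: when $r<1/2$ use $\|u_\lam\|_{\cH_K}\le R\kappa\lam^{r-1/2}$ (an extra $\kappa$ from $\|S_K^*\|$), and when $r\ge1/2$ use $\|u_\lam\|_{\cH_K}\le R\lam^{r-1/2}$ but then $\|S_Ku_\lam\|_\infty\le\kappa R\lam^{r-1/2}$ — wait, that is not $\lam^0$. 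The correct route for $r\ge1/2$ is $S_Ku_\lam = S_Kg_\lam(T_K)T_K^rS_K^*h_\rho = g_\lam(L_K)L_K^rS_Kh_\rho$... hmm, let me instead follow \cite{fischer2017sobolev} directly: write $S_Ku_\lam = L_K^{1/2}(T_K^{1/2}S_K^* )^*\cdots$; the cleanest is $\|S_Ku_\lam\|_\infty\le\kappa\|u_\lam\|_{\cH_K}$ combined with the sharper $\cH_K$-norm bound $\|u_\lam\|_{\cH_K}\le R\lam^{-|1/2-r|_+}$ valid because $\|g_\lam(T_K)T_K^{\min(r,1/2)}\|\cdot\|T_K^{(r-1/2)_+}\cdot\text{(}S_K^*\text{ or }L_K^{1/2}\text{)}\| $ telescopes appropriately. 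Item~5 then follows by the triangle inequality $\|S_Ku_\lam - f_\rho\|_\infty\le\|S_Ku_\lam\|_\infty + \|f_\rho\|_\infty\le\kappa^2R\lam^{-|1/2-r|_+}+\|f_\rho\|_\infty$, and since $\lam\le\|T_K\|$ one can factor out $\lam^{-|1/2-r|_+}\ge$ const to write it as $(\|f_\rho\|_\infty+\kappa^2R)\lam^{-|1/2-r|_+}$.

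\textbf{Main obstacle.} The only genuinely delicate point is getting the exponents exactly right in items~3 and~5 — in particular the appearance of $\kappa^2$ (rather than $\kappa$) and the exponent $-|1/2-r|_+$ rather than $r-1/2$ when $r<1/2$ — which forces one to interleave the filter bound, the operator $S_K^*$ (or $S_K$), and the intertwining identity in the right order instead of naively chaining items~1 and~2. All of this is routine once the filter-function axioms (qualification at least $1$ for GD, boundedness of $g_\lam$ and $r_\lam$) are invoked, so the proof is really just an assembly of \cite{fischer2017sobolev}, Lemma~6.6 and Corollary~6.7, adapted to the present notation; I would state it as such, spelling out only the two or three lines where $\kappa$ versus $\kappa^2$ matters.
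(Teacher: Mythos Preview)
Your proposal is correct and matches the paper's approach: the paper does not give a proof of this lemma at all but simply states it as a summary of known filter-function bounds and cites \cite{fischer2017sobolev}, Lemma~6.6 and Corollary~6.7, exactly as you conclude. Your spectral-calculus sketch (intertwining $S_K\phi(T_K)=\phi(L_K)S_K$, the filter bounds on $g_\lam$ and $r_\lam$, and the $L^\infty$ estimate via $\|S_K\|_{\cH_K\to L^\infty}\le\kappa$) is the standard route behind those cited results and in fact goes further than what the paper provides.
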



\subsection{Bounding First Stage Tail-Averaged Gradient Descent Error}
\label{sec:GDfirst}

Our aim is now to state an error bound for the first-stage tail-average GD algorithm, defined in \eqref{eq:GD-first}. According to the results 
in \cite{mucke2019beating},  \eqref{eq:GD-first} can be rewritten as 
\[  \bar f_T = G_T(T_\bx) S_\bx^* \by \;, \]
where $G_T$ is defined in \eqref{eq:GD-first}, Appendix B and constitutes a family of filter functions $\{G_\lam\}_\lam$, where we set $\lam = 1/(\eta T)$. 
We then get 
\[  || S_K\bar f_T - f_\rho ||_{L^2 } \leq   || S_K(\bar f_T - \bar u_T)  ||_{L^2 } + || S_K\bar u_T - f_\rho ||_{L^2 }   \;, \]
where we set $ \bar u_T = G_T(T_K) S_K^* f_\rho$. Thus, the first term corresponds the an estimation error which we bound by means of 
Proposition \ref{prop:GD-prelim} with $a=1/2$. Note that Assumption \ref{ass:Bernstein} is satisfied with $\sigma = B = 2M $, since $\cY \subseteq [-M,M]$  
by assumption.

\vspace{0.5 cm}

\begin{proposition}[Excess Risk Tail-Ave GD First Stage]
\label{prop:GD-usual}
Suppose Assumptions \ref{ass:bounded},  \ref{ass:n-big} are satisfied. Let $T \in \mbn$ and denote 
\[  B_T= \max\{2M, || S_K\bar u_T - f_\rho  ||_\infty\} \;. \]
Let further $\delta \in (0,1]$, $\lambda = (\eta T)^{-1}$. 
\begin{enumerate}
\item Assume $f_\rho \in Range(L_K^\zeta)$, for some $\zeta \in (0,1]$. With probability not less than $1-\delta$, 
\begin{align*}
|| S_K\bar f_T - f_\rho ||_{L^2 }   &\leq C_1 \log(12/\delta )\frac{1}{\sqrt n}\;
   \left(  M \sqrt{ \cN(\lam)} +  \frac{||S_K \bar u_T - f_\rho||_{L^2}}{\sqrt \lam} + \frac{B_T}{\sqrt{n \lam }} \right) \\
   & +  C_2 \lam\; || (T_K + \lam)^{-1/2} \bar u_T ||_{\cH_K} +  C_3  ||S_K \bar u_T- f_\rho ||_{L^2} \;,
\end{align*}
for some $C_1 >0$, $C_2 >0$ and $C_3 >0$.
\item Assume $f_\rho  \in Range(L_K^\zeta)$, for some $\zeta>1$. With probability not less than $1-\delta$, 
\begin{align*}
|| S_K\bar f_T - f_\rho ||_{L^2 }   &\leq  C'_1\log(12/\delta )\frac{ 1}{\sqrt n}\;
   \left(   M  \sqrt{ \cN(\lam)} +  \frac{||S_K \bar u_T - f_\rho||_{L^2}}{\sqrt \lam} + \frac{B_T}{\sqrt{n \lam }} \right) \\
   & +  C'_2 \lam^{1/2}  \; || T_K^{-\zeta } \bar u_T ||_{\cH_K}  \left(  \frac{\log(4/\delta)}{\sqrt n}   + \lam^{\zeta }\right) 
   +  C'_3 ||S_K \bar u_T - f_\rho ||_{L^2} \;,
\end{align*}
for some $C'_1 >0$, $C'_2 >0$ and $C'_3 >0$.
\end{enumerate}
\end{proposition}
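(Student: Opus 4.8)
The plan is to specialize Proposition \ref{prop:GD-prelim} to the setting of first-stage tail-averaged gradient descent. Recall from \cite{mucke2019beating} that the tail-averaged GD recursion \eqref{eq:GD-first} admits the filter representation $\bar f_T = G_T(T_\bx)S_\bx^*\by$ for a family of filter functions $\{G_\lam\}_\lam$ with regularization parameter $\lam = 1/(\eta T)$; this family satisfies the properties required in the general theory (qualification and the bound on $\|T_{\bx,\lam}^{1/2}g_\lam(T_\bx)T_{\bx,\lam}^{1/2}\|$), which is exactly the hypothesis used in the proof of Proposition \ref{prop:GD-prelim}. Hence with the identification $\hat u_\lam = \bar f_T$ and $u_\lam = \bar u_T = G_T(T_K)S_K^*f_\rho$, the decomposition
\[ \|S_K \bar f_T - f_\rho\|_{L^2} \leq \|S_K(\bar f_T - \bar u_T)\|_{L^2} + \|S_K\bar u_T - f_\rho\|_{L^2} \]
splits the excess risk into an estimation error and an approximation error. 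The first term is $\|S_K(\hat u_\lam - u_\lam)\|_{L^2} = \|T_K^{1/2}(\hat u_\lam - u_\lam)\|_{\cH_K}$, which is precisely the quantity bounded by Proposition \ref{prop:GD-prelim} with $a = 1/2$.

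Next I would verify that the hypotheses of Proposition \ref{prop:GD-prelim} hold here. Assumption \ref{ass:n-big} is assumed directly. Assumption \ref{ass:Bernstein} (Bernstein observation noise) holds with $\sigma = B = 2M$, since $\cY \subseteq [-M,M]$ forces $|y - f_\rho(x)| \leq 2M$ almost surely and thus all higher moments are trivially controlled. With these in place, applying Proposition \ref{prop:GD-prelim} case 1 (when $f_\rho \in \mathrm{Range}(L_K^\zeta)$ for $\zeta \in (0,1]$) and case 2 (when $\zeta > 1$) with $a = 1/2$, and substituting $\lam^{a-1/2} = 1$, $\lam^{a+1/2} = \lam$ in case 1, and $\lam^a = \lam^{1/2}$ in case 2, and replacing the abstract noise level $\sigma$ by $M$ (absorbing constants), yields exactly the two displayed inequalities in Proposition \ref{prop:GD-usual}, where $B_T$ plays the role of $B_\lam = \max\{B, \|S_Ku_\lam - f_\rho\|_\infty\}$. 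The appearance of $\cN(\lam)$ rather than $\cN(T_K)$ in the statement follows from combining the $\cN(T_K)$ in \eqref{eq:t1} with the effective-dimension notation; under Assumption \ref{ass:eigenvalue} these agree up to the capacity quantity, and in any case $\cN(T_K)$ should be read as $\cN(\lam)$ in the general proposition (the proof of Proposition \ref{prop:GD-prelim} via Lemma 6.10 of \cite{fischer2017sobolev} produces $\cN(\lam)$).

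Concretely the steps are: (i) cite the filter-function representation of tail-averaged GD and check it is an admissible spectral regularization family; (ii) record that Assumption \ref{ass:Bernstein} holds with $\sigma = B = 2M$; (iii) invoke Proposition \ref{prop:GD-prelim} with $a=1/2$ in both regimes for $\zeta$; (iv) simplify the powers of $\lam$ and rename $B_\lam$ as $B_T$ to obtain the claimed bounds. The main obstacle — really the only non-bookkeeping point — is step (i): one must make sure the tail-averaging operator $G_T$, despite being only the average of the last half of the GD iterates rather than a classical filter, still satisfies the structural properties (bounded $\|T_{\bx,\lam}^{1/2}G_T(T_\bx)T_{\bx,\lam}^{1/2}\|$ and the residual bound $\|T_{\bx,\lam}^{1/2}r_\lam(T_\bx)u_\lam\|$ estimates) that the proof of Proposition \ref{prop:GD-prelim} relies on; this is where the reference to \cite{mucke2019beating}, Proposition 2, does the heavy lifting, and I would cite it explicitly rather than reprove it. Everything else is substitution of the exponent $a = 1/2$ and relabeling of constants.
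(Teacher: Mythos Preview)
Your proposal is correct and follows essentially the same approach as the paper: the paper's derivation (given in the paragraph immediately preceding the proposition rather than in a separate proof block) likewise invokes the filter-function representation $\bar f_T = G_T(T_\bx)S_\bx^*\by$ from \cite{mucke2019beating}, splits $\|S_K\bar f_T - f_\rho\|_{L^2}$ into the estimation term $\|S_K(\bar f_T - \bar u_T)\|_{L^2}$ plus the approximation term, and then applies Proposition \ref{prop:GD-prelim} with $a=1/2$, noting that Assumption \ref{ass:Bernstein} holds with $\sigma = B = 2M$ since $\cY \subseteq [-M,M]$. Your extra remark on reading $\cN(T_K)$ as $\cN(\lam)$ is a helpful clarification of a notational slip in the proof of Proposition \ref{prop:GD-prelim}, but otherwise the arguments coincide.
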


\[\]

\begin{corollary}[Rate of Convergence First Stage Tail-Averaged Gradient Descent]
\label{cor:rates-first-stage-GD}
Suppose all Assumptions of Proposition \ref{prop:GD-usual} are satisfied. Let additionally Assumptions \ref{ass:source} and \ref{ass:eigenvalue}  hold.  
Then with probability not less than $1-\delta$, 
the excess risk for the first stage tail-averaged Gradient Descent satisfies with probability not less than $1-\delta$:
\begin{enumerate}
\item If $2r+\nu >1$: Let $\eta_n T_n = \left( \frac{R^2}{M^2}n\right)^{\frac{1}{2r + \nu}}$, then 
\[  || S_K\bar f_T - f_\rho ||_{L^2 } \leq C \; \log (12/\delta ) R \left(\frac{M^2}{R^2n} \right)^{\frac{r}{2r + \nu}} \;.\]
for some constant $C < \infty$. 
\item  If $2r+\nu \leq 1$: Let $\eta_n T_n = \frac{R^2 n}{M^2 \log^K(n)}$ for some $K>1$, then  
\[  || S_K\bar f_T - f_\rho ||_{L^2 } \leq C' \; \log (12/\delta ) \left( \frac{M^2 \log^K(n)}{R^2n} \right)^r \;.\]
for some constant $C' < \infty$.
\end{enumerate}
\end{corollary}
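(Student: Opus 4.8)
The plan is to derive Corollary \ref{cor:rates-first-stage-GD} directly from the two bounds in Proposition \ref{prop:GD-usual}, by inserting the smoothness estimates from Lemma \ref{lem:some-bounds} and then optimizing the parameter $\lambda = (\eta_n T_n)^{-1}$ against the sample size $n$. First I would treat the easy case $2r + \nu > 1$ (part 1). Since $f_\rho \in \cR ange(L_K^r)$ with $r$ possibly larger than $1$, I split into $r \le 1$ (use bound 1 of Proposition \ref{prop:GD-usual} with $\zeta = r$) and $r > 1$ (use bound 2 with $\zeta = r$). In either case Lemma \ref{lem:some-bounds} gives $\|S_K \bar u_T - f_\rho\|_{L^2} \le R\lambda^r$, $\|(T_K+\lambda)^{-1/2}\bar u_T\|_{\cH_K} \le CR\lambda^{r-1}$ (so the term $\lambda \cdot \|(T_K+\lambda)^{-1/2}\bar u_T\|$ is $O(R\lambda^r)$), $\|T_K^{-\zeta}\bar u_T\|_{\cH_K}$ is bounded (so the $r>1$ term $\lambda^{1/2}\|T_K^{-\zeta}\bar u_T\|(\,\log(4/\delta)/\sqrt n + \lambda^\zeta\,)$ contributes $O(\lambda^{1/2+\zeta}) = O(\lambda^{r+1/2})$ plus a lower-order $O(\lambda^{1/2}/\sqrt n)$ piece), and $B_T = \max\{2M, \|S_K\bar u_T - f_\rho\|_\infty\} \le C(M + \kappa^2 R)\lambda^{-|1/2-r|_+}$. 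Using Assumption \ref{ass:eigenvalue}, $\cN(\lambda) \le c_\nu \lambda^{-\nu}$, the dominant contributions are the bias $R\lambda^r$ and the variance $M\sqrt{\cN(\lambda)/n} \lesssim M\sqrt{\lambda^{-\nu}/n}$; the remaining terms ($\|S_K\bar u_T - f_\rho\|_{L^2}/\sqrt{n\lambda} \sim R\lambda^{r-1/2}/\sqrt n$ and $B_T/(n\sqrt\lambda) $) are of lower order once $n$ is large and $\lambda$ is chosen below. Balancing $R\lambda^r \asymp M\lambda^{-\nu/2}/\sqrt n$ yields $\lambda \asymp (M^2/(R^2 n))^{1/(2r+\nu)}$, i.e. $\eta_n T_n \asymp (R^2 n/M^2)^{1/(2r+\nu)}$ as stated, and the resulting rate is $R(M^2/(R^2 n))^{r/(2r+\nu)}$, up to the $\log(12/\delta)$ prefactor. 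I would record explicitly that with this choice of $\lambda$ the ``lower-order'' terms are indeed dominated for $n$ sufficiently large (this is where $2r+\nu>1$ is used, ensuring e.g. $\lambda^{r-1/2}/\sqrt n = o(\lambda^r)$, i.e.\ $\lambda^{-1/2}/\sqrt n \to 0$, equivalently $\eta_n T_n = o(n)$, which holds since $1/(2r+\nu) < 1$).

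For the hard case $2r + \nu \le 1$ (part 2), necessarily $r \le 1/2$, so I use bound 1 of Proposition \ref{prop:GD-usual} with $\zeta = r \le 1$. The structure is the same, but now the constraint from the noise/variance term forces a different balance: the term $\|S_K\bar u_T - f_\rho\|_{L^2}/(\sqrt{n}\sqrt\lambda) \sim R\lambda^{r-1/2}/\sqrt n$ is no longer negligible relative to the bias $R\lambda^r$ unless $\lambda$ is chosen smaller; tracking it shows the effective condition becomes $\lambda \gtrsim 1/n$ up to a logarithmic correction, and choosing $\lambda = M^2\log^K(n)/(R^2 n)$ (i.e. $\eta_n T_n = R^2 n/(M^2\log^K(n))$) makes all variance-type terms (including $B_T/(n\sqrt\lambda)$ and $M\sqrt{\cN(\lambda)/n}$, since $\cN(\lambda)/n \lesssim \lambda^{-\nu}/n \lesssim \lambda^{1-2r}/n \lesssim \lambda^{2r}$ after the choice — using $2r+\nu\le 1$) bounded by a constant multiple of the bias $R\lambda^r$. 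The $\log^K$ factor with $K>1$ is exactly what is needed to absorb the $\sqrt{\cN(\lambda)}$ and the $1/\sqrt n$ pieces cleanly. This yields the claimed rate $(M^2\log^K(n)/(R^2n))^r$ with the $\log(12/\delta)$ prefactor.

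The main obstacle I anticipate is not any single hard inequality but the careful bookkeeping of which terms are genuinely lower order under each parameter regime — in particular verifying in case 2 that the $B_T/(n\sqrt\lambda)$ term and the $\lambda^{r-1/2}/\sqrt n$ term are controlled, since there $B_T$ may grow like $\lambda^{r-1/2}$ and one is dividing by only a single power of $\sqrt n$; this is precisely why the logarithmic inflation of $\eta_n T_n$ is introduced and why $K>1$ (rather than $K \ge 1$) is required. A secondary point requiring care is the case split $r \le 1$ versus $r > 1$ in part 1, i.e. choosing the correct branch of Proposition \ref{prop:GD-usual}; this is the mechanism by which tail-averaged GD avoids the saturation that KRR suffers, so I would state it explicitly. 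Everything else — substituting Lemma \ref{lem:some-bounds}, applying Assumption \ref{ass:eigenvalue}, and collecting constants into $C$, $C'$ (absorbing $\gamma,\kappa,\alpha,L,M$-dependence as in the statement) — is routine.
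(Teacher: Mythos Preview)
Your approach is essentially identical to the paper's, which simply states that the corollary follows from ``standard calculations by applying Lemma \ref{lem:some-bounds} and Proposition \ref{prop:GD-usual} with $\zeta = r$''; you have correctly fleshed out those calculations, including the case split $r\le 1$ versus $r>1$ and the bias--variance balancing in each regime. One minor correction: $\|T_K^{-r}\bar u_T\|_{\cH_K}$ is not bounded but rather $O(R\lambda^{-1/2})$ (write $S_K^* h_\rho = T_K^{1/2}U^* h_\rho$ via the polar decomposition and use the filter bound $\sup_\sigma|g_\lambda(\sigma)\sigma^{1/2}|\le C\lambda^{-1/2}$), so the $r>1$ term contributes $O(R\lambda^r)+O(R/\sqrt n)$ rather than $O(\lambda^{r+1/2})$---still of the correct order, so your conclusion is unaffected.
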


\begin{proof}[Proof of Corollary \ref{cor:rates-first-stage-GD}]
The proof follows from standard calculations by applying Lemma \ref{lem:some-bounds} and Proposition \ref{prop:GD-usual} with $\zeta = r$. 
\end{proof}


\subsection{Bounding Second Stage Tail-Averaged Gradient Descent Variance}
\label{sec:GDvariance-second}

Based on the notation introduced in Section \ref{sec:notation}, the GD updates can be rewritten as 
\begin{equation}
\label{eq:GD-update-second}
  \hat f_{t+1} = \hat f_t - \eta (T_{\hat \bx}\hat f_t - g_{\hat \bz}) 
\end{equation}  
and 
\[ f_{t+1} =  f_t - \eta (T_{ \bx} f_t - g_{ \bz}) \;. \]
We thus find for any $t \geq 1$
\begin{equation*}
\hat f_{t+1} - f_{t+1} = (Id - \eta T_{\hat \bx} )(\hat f_{t}-f_{t}) + \eta \hat \xi_{t} \;,
\end{equation*}
where we define the noise variables as 
\begin{equation}
\label{eq:noise}
 \hat \xi_t := \hat \xi^{(1)}_{t} + \hat \xi^{(2)} \; \quad \hat \xi^{(1)}_{t} := (T_{\bx} - T_{\hat \bx})f_t \;, \quad \hat \xi^{(2)} := g_{\hat \bz } - g_\bz \;. 
\end{equation}  
By induction we easily find 
\begin{equation}
\label{eq:diffx}
\hat f_{t+1} - f_{t+1} = \eta\sum_{s=0}^t (Id - \eta T_{\hat \bx} )^{t-s} \hat \xi_s 
\end{equation}
and 
\begin{equation*}
\label{eq:mean}
\bar{ \hat f }_T - \bar f_T   = \frac{2\eta }{T} \sum_{t= \floor{T/2} +1}^T \sum_{s=0}^{t-1} (Id - \eta T_{\hat \bx} )^{t-1-s} \hat \xi_s \;.
\end{equation*}

\vspace{0.3cm}

As a first step we need to bound the norm of the noise variables \eqref{eq:noise}. To this end, let us introduce the GD updates $u_t=g_t(T_K)S_K^*f_\rho$, 
where $\{g_t\}_t$ is a filter function, given in Eq. $(23)$ in  \cite{mucke2019beating}.

\vspace{0.3cm}

\begin{proposition}
\label{prop:norm-bound}
Let further $\delta \in (0,1]$ and suppose Assumptions \ref{ass:Bernstein}, \ref{ass:n-big} are satisfied. 
Denote 
\[  B_t = \max\{M, || S_K u_t -  f_\rho ||_\infty\} \;. \]
\begin{enumerate}
\item Assume $f_\rho \in Range(L_K^\zeta)$, for some $\zeta \in (0,1]$. With probability not less than $1-\delta$, 
\begin{align*}
|| f_t   ||_{\cH_K}&\leq C_1 \log(12/\delta )\sqrt{\frac{ \eta t}{ n}}\;
   \left(  M\sqrt{ \cN(1/(\eta t))} +  \sqrt{\eta t} ||S_K u_t- f_\rho||_{L^2}+ B_t \sqrt{\frac{ \eta t}{ n}}  \right) \\
   & +  C_2 (\eta t)^{-1/2}\; || (T_K +1/(\eta t))^{-1/2} u_t ||_{\cH_K} +  C_3 \sqrt{\eta t} ||S_K u_t - f_\rho ||_{L^2} +  ||u_t||_{\cH_K} \;,
\end{align*}
for some $C_1 >0$, $C_2 >0$ and $C_3 >0$.
\item Assume $f_\rho  \in Range(L_K^\zeta)$, for some $\zeta\geq 1$. With probability not less than $1-\delta$, 
\begin{align*}
||f_t    ||_{\cH_K}&\leq  C'_1\log(12/\delta )\sqrt{\frac{ \eta t}{ n}}\;
   \left(  M\sqrt{ \cN(1/(\eta t))} +  \sqrt{\eta t}||S_K u_t- f_\rho||_{L^2} + B_t \sqrt{\frac{ \eta t}{ n}}  \right) \\
   & +  C'_2   \; || T_K^{-\zeta } u_t ||_{\cH_K}  \left(  \frac{\log(4/\delta)}{\sqrt n}   + (\eta t)^{-\zeta }\right) 
   +  C'_3 \sqrt{\eta t}||S_K u_t - f_\rho ||_{L^2} + ||u_t||_{\cH_K} \;,
\end{align*}
for some $C'_1 >0$, $C'_2 >0$ and $C'_3 >0$.
\end{enumerate}
\end{proposition}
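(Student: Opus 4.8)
\textbf{Proof proposal for Proposition~\ref{prop:norm-bound}.}

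The plan is to bound $\|f_t\|_{\cH_K}$ by comparing the first-stage GD iterate $f_t$ with its population counterpart $u_t = g_t(T_K)S_K^* f_\rho$ and controlling the population term directly. First I would write the triangle inequality
\[
\|f_t\|_{\cH_K} \leq \|f_t - u_t\|_{\cH_K} + \|u_t\|_{\cH_K},
\]
which isolates the last summand already appearing in the claimed bounds. The term $\|f_t - u_t\|_{\cH_K}$ is an estimation error of exactly the type handled by Proposition~\ref{prop:GD-prelim}: recalling that $f_t = g_t(T_\bx)S_\bx^*\by$ and $u_t = g_t(T_K)S_K^*f_\rho$ with $\{g_t\}_t$ the GD filter family, this is $\|\hat u_\lambda - u_\lambda\|_{\cH_K}$ with regularization parameter $\lambda = 1/(\eta t)$ and smoothing exponent $a=0$. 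So I would apply Proposition~\ref{prop:GD-prelim} with $a=0$, $\lambda = (\eta t)^{-1}$, and $\sigma = B = M$ (valid since $\cY \subseteq [-M,M]$, so Assumption~\ref{ass:Bernstein} holds), splitting into the two cases $\zeta \in (0,1]$ and $\zeta \geq 1$ exactly as in that proposition. Substituting $\lambda^{-1/2} = \sqrt{\eta t}$ into each term of Proposition~\ref{prop:GD-prelim} produces the factors $\sqrt{\eta t/n}$, $\sqrt{\eta t}\,\|S_K u_t - f_\rho\|_{L^2}$, $B_t\sqrt{\eta t/n}$, $(\eta t)^{-1/2}\|(T_K + (\eta t)^{-1})^{-1/2}u_t\|_{\cH_K}$ (resp.\ $\|T_K^{-\zeta}u_t\|_{\cH_K}((\log(4/\delta))/\sqrt n + (\eta t)^{-\zeta})$), matching the stated bounds line by line.

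The remaining task is the population term $\|u_t\|_{\cH_K}$, which under Assumptions~\ref{ass:bounded} and \ref{ass:source} is controlled by part~1 of Lemma~\ref{lem:some-bounds}, giving $\|u_t\|_{\cH_K} \leq R\lambda^{r-1/2} = R(\eta t)^{1/2-r}$; this is absorbed into the final $\|u_t\|_{\cH_K}$ term as written (the statement leaves it in closed form). One should also check that the quantity $B_t = \max\{M, \|S_K u_t - f_\rho\|_\infty\}$ here plays the role of $B_\lambda$ in Proposition~\ref{prop:GD-prelim} — indeed $B_\lambda = \max\{B, \|S_K u_\lambda - f_\rho\|_\infty\}$ with $B=M$, so the identification is immediate — and that $\cN(1/(\eta t)) = \cN(\lambda)$ is literally the effective dimension at $\lambda = (\eta t)^{-1}$.

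I do not expect a serious obstacle here: the proposition is essentially a change of variables ($\lambda \leftrightarrow 1/(\eta t)$, $a \leftrightarrow 0$) applied to the already-established Proposition~\ref{prop:GD-prelim}, combined with a triangle inequality. The only point requiring a little care is the bookkeeping of the failure probabilities — Proposition~\ref{prop:GD-prelim} already delivers its bound with probability $1-\delta$ (Assumption~\ref{ass:n-big} being invoked once for the $\|T_\lambda^{1/2}T_{\bx,\lambda}^{-1/2}\|$ estimate), so no additional union bound is needed since $\|u_t\|_{\cH_K}$ is deterministic. The proof is therefore short: invoke the triangle inequality, cite Proposition~\ref{prop:GD-prelim} with the stated substitutions in each of the two regularity regimes, and note that the population term is deterministic (optionally bounded via Lemma~\ref{lem:some-bounds}).

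\begin{proof}[Proof of Proposition~\ref{prop:norm-bound}]
Write $\|f_t\|_{\cH_K} \leq \|f_t - u_t\|_{\cH_K} + \|u_t\|_{\cH_K}$. Recall $f_t = g_t(T_\bx)S_\bx^*\by$ and $u_t = g_t(T_K)S_K^*f_\rho$, where $\{g_t\}_t$ is the GD filter family; thus $f_t - u_t = \hat u_\lambda - u_\lambda$ in the notation of Section~\ref{sec:general} with $\lambda = 1/(\eta t)$. Since $\cY \subseteq [-M,M]$, Assumption~\ref{ass:Bernstein} holds with $\sigma = B = M$. Applying Proposition~\ref{prop:GD-prelim} with $a = 0$ and $\lambda = (\eta t)^{-1}$ (so that $\lambda^{a-1/2} = \sqrt{\eta t}$, $\lambda^{a+1/2} = (\eta t)^{-1/2}$, $\lambda^{a} = 1$), and using that $\|u_t\|_{\cH_K}$ is deterministic, yields in case~1 (for $f_\rho \in \mathrm{Range}(L_K^\zeta)$, $\zeta \in (0,1]$), with probability at least $1-\delta$,
\begin{align*}
\|f_t\|_{\cH_K} &\leq C_1 \log(12/\delta)\sqrt{\tfrac{\eta t}{n}}\left( M\sqrt{\cN(1/(\eta t))} + \sqrt{\eta t}\,\|S_K u_t - f_\rho\|_{L^2} + B_t\sqrt{\tfrac{\eta t}{n}} \right)\\
&\quad + C_2 (\eta t)^{-1/2}\,\|(T_K + 1/(\eta t))^{-1/2}u_t\|_{\cH_K} + C_3 \sqrt{\eta t}\,\|S_K u_t - f_\rho\|_{L^2} + \|u_t\|_{\cH_K},
\end{align*}
and in case~2 ($\zeta \geq 1$), with probability at least $1-\delta$,
\begin{align*}
\|f_t\|_{\cH_K} &\leq C'_1 \log(12/\delta)\sqrt{\tfrac{\eta t}{n}}\left( M\sqrt{\cN(1/(\eta t))} + \sqrt{\eta t}\,\|S_K u_t - f_\rho\|_{L^2} + B_t\sqrt{\tfrac{\eta t}{n}} \right)\\
&\quad + C'_2\,\|T_K^{-\zeta}u_t\|_{\cH_K}\left( \tfrac{\log(4/\delta)}{\sqrt n} + (\eta t)^{-\zeta} \right) + C'_3 \sqrt{\eta t}\,\|S_K u_t - f_\rho\|_{L^2} + \|u_t\|_{\cH_K}.
\end{align*}
Here we used $B_\lambda = \max\{B, \|S_K u_\lambda - f_\rho\|_\infty\} = \max\{M, \|S_K u_t - f_\rho\|_\infty\} = B_t$, and $\cN(T_K)$ at scale $\lambda = 1/(\eta t)$ equals $\cN(1/(\eta t))$. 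This proves both assertions.
\end{proof}
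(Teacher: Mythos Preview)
Your proposal is correct and follows essentially the same approach as the paper: decompose via the triangle inequality $\|f_t\|_{\cH_K} \leq \|f_t - u_t\|_{\cH_K} + \|u_t\|_{\cH_K}$ and then invoke the general estimation-error bound with $a=0$, $\lambda = (\eta t)^{-1}$, and $\sigma = B = M$. The paper's proof is a two-line sketch citing Proposition~\ref{prop:GD-usual}, which is itself just Proposition~\ref{prop:GD-prelim} specialized to $a=1/2$; your explicit appeal to Proposition~\ref{prop:GD-prelim} with $a=0$ is the more precise reference, and your substitution bookkeeping is exactly what is needed.
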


\begin{proof}[Proof of Proposition \ref{prop:norm-bound}]
We decompose as 
\[ ||f_t  ||_{\cH_K} \leq ||f_t  - u_t    ||_{\cH_K}  + ||u_t||_{\cH_K} \;. \]
The proof follows then from Proposition \ref{prop:GD-usual} with $a=0$, $\sigma=B=M$. 
\end{proof}

\vspace{0.5cm}

\begin{corollary}
\label{prop:GD-bound2}
In addition to all assumptions of Proposition \ref{prop:norm-bound}, 
suppose Assumptions \ref{ass:bounded}, \ref{ass:source} and \ref{ass:eigenvalue} are satisfied. 
\begin{enumerate}
\item Let $0<r\leq 1/2$ and assume that  
\begin{equation}
\label{eq:eff-mis-wellx}
  n \geq   64 \kappa^2 \log(12/\delta) (\eta t) \log((\eta t)^\nu) \;. 
\end{equation}
Then Assumption \ref{ass:n-big} is satisfied and with probability not less than $1-\delta$
\[ || f_t   ||_{\cH_K} \leq C'_{\kappa, M, R}   (\eta t)^{\frac{1}{2}\max \{ \nu, 1-2r \} } \;,  \]
for some $ C'_{\kappa, M, R} >0$. 
\item Let $1/2 \leq r \leq 1$ and assume that 
\begin{equation}
\label{eq:eff-dim-wellx}
  n \geq   64 e \kappa^2 \log^2(12/\delta) (\eta t)^{1+\nu} \;. 
\end{equation}  
Then Assumption \ref{ass:n-big} is satisfied and 
\[ || f_t   ||_{\cH_K} \leq C_{\kappa, M, R}\;, \] 
with probability not less than $1-\delta$ and for some $ C_{\kappa, M, R} >0$.
\item Let $1 <r $ and assume that \eqref{eq:eff-dim-wellx} is satisfied. Then 
\[ || f_t   ||_{\cH_K} \leq C'_{\kappa, M, R}  \;,  \]
with probability not less than $1-\delta$ and for some $ C'_{\kappa, M, R} >0$.
\end{enumerate}
\end{corollary}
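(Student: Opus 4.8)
\textbf{Proof plan for Corollary \ref{prop:GD-bound2}.}
The strategy is to specialize the general norm bound from Proposition \ref{prop:norm-bound} by plugging in the decay rates of the filter-function quantities collected in Lemma \ref{lem:some-bounds}, then choose the threshold on $n$ so that Assumption \ref{ass:n-big} holds along the way. Throughout we write $\lam = 1/(\eta t)$, so that $\eta t = \lam^{-1}$, and recall that under Assumption \ref{ass:source} with exponent $r$ we may take $\zeta = r$ in Proposition \ref{prop:norm-bound}. First I would verify that the hypotheses \eqref{eq:eff-mis-wellx} and \eqref{eq:eff-dim-wellx} each imply Assumption \ref{ass:n-big} with $\lam = 1/(\eta t)$: substituting $\cN(\lam) \leq c_\nu \lam^{-\nu} = c_\nu (\eta t)^\nu$ (Assumption \ref{ass:eigenvalue}) and bounding $1 + \lam/\|T_K\| \leq 1 + \lam/\|L_K\|$, the logarithmic factor $\log(e\cN(\lam)(1+\lam/\|L_K\|))$ is of order $\log((\eta t)^\nu)$ up to constants, and the stated inequalities are exactly what is needed for $n \geq (32\kappa^2\log(4/\delta)/\lam)\cdot(\text{that log})$; in the well-specified regimes the cruder bound $\log((\eta t)^\nu) \lesssim \log(\eta t) \lesssim (\eta t)$ lets one absorb everything into $(\eta t)^{1+\nu}$.

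Next I would insert the Lemma \ref{lem:some-bounds} estimates into the right-hand side of Proposition \ref{prop:norm-bound}. With $\lam = 1/(\eta t)$: item 1 gives $\|u_t\|_{\cH_K} \leq R\lam^{r-1/2} = R(\eta t)^{1/2-r}$; item 2 gives $\|S_K u_t - f_\rho\|_{L^2} \leq R\lam^r = R(\eta t)^{-r}$; item 4 gives $\|(T_K+\lam)^{-1/2}u_t\|_{\cH_K} \leq CR\lam^{r-1} = CR(\eta t)^{1-r}$; item 5 (using $\|f_\rho\|_\infty \leq M$) gives $\|S_K u_t - f_\rho\|_\infty \leq (M+\kappa^2R)(\eta t)^{|1/2-r|_+}$, so $B_t \leq C_{\kappa,M,R}(\eta t)^{|1/2-r|_+}$; and $\sqrt{\cN(1/(\eta t))} \leq \sqrt{c_\nu}(\eta t)^{\nu/2}$. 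Substituting these and using $\eta t/n \leq$ const (from the $n$-thresholds, since $(\eta t)^{1+\nu} \leq n$ or $(\eta t)\log((\eta t)^\nu) \leq n$ forces $\eta t \lesssim n$) collapses the first big bracket: the term $\sqrt{\eta t/n}\,M\sqrt{\cN} \lesssim (\eta t)^{(1+\nu)/2}/\sqrt n$, the term $\sqrt{\eta t/n}\cdot\sqrt{\eta t}\cdot R(\eta t)^{-r} \lesssim (\eta t)^{1-r}/\sqrt n$, and the term $\sqrt{\eta t/n}\cdot B_t\sqrt{\eta t/n} = B_t(\eta t)/n$. In case 1 ($r \leq 1/2$), each of these — after using $(\eta t)^{1+\nu} \lesssim n$ in the well-specified bound or the weaker threshold combined with $\nu \leq 1$, $r \leq 1/2$ — is dominated by $(\eta t)^{\frac12\max\{\nu,1-2r\}}$; similarly $C_2(\eta t)^{-1/2}\cdot(\eta t)^{1-r} = (\eta t)^{1/2-r} \leq (\eta t)^{\frac12\max\{\nu,1-2r\}}$, $C_3\sqrt{\eta t}\cdot(\eta t)^{-r} = (\eta t)^{1/2-r}$, and $\|u_t\|_{\cH_K} = (\eta t)^{1/2-r}$, so the whole expression is $\leq C'_{\kappa,M,R}(\eta t)^{\frac12\max\{\nu,1-2r\}}$. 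In cases 2 and 3 ($r \geq 1/2$) the exponents $1/2-r \leq 0$, $1-r \leq 1/2$, and the threshold $(\eta t)^{1+\nu} \lesssim n$ makes every term bounded by a constant; for $r > 1$ one uses item 2 of Proposition \ref{prop:norm-bound} with $\zeta = r > 1$, where $\|T_K^{-\zeta}u_t\|_{\cH_K}$ is finite (it equals $\|h_\rho\|_{L^2}$-type quantity up to the filter residual) and the factor $(\log(4/\delta)/\sqrt n + (\eta t)^{-\zeta})$ is bounded, again giving a constant.

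The main obstacle I anticipate is bookkeeping the case distinctions cleanly — in particular making sure that in the mis-specified case 1 the first-bracket ``sampling'' terms really are absorbed by $(\eta t)^{\frac12\max\{\nu,1-2r\}}$ under the \emph{weaker} hypothesis \eqref{eq:eff-mis-wellx} (which only gives $(\eta t)\log((\eta t)^\nu) \lesssim n$, not $(\eta t)^{1+\nu} \lesssim n$), so one must check e.g. that $(\eta t)^{(1+\nu)/2}/\sqrt n \lesssim (\eta t)^{(1+\nu)/2}/\sqrt{\eta t} = (\eta t)^{\nu/2} \leq (\eta t)^{\frac12\max\{\nu,1-2r\}}$, which works because $\log((\eta t)^\nu) \geq 1$ for $\eta t$ large; and that $B_t(\eta t)/n \lesssim (\eta t)^{1/2-r+1}/n = (\eta t)^{3/2-r}/n$ is controlled, which needs $n \gtrsim (\eta t)^{3/2-r-\frac12\max\{\nu,1-2r\}}$ — a consequence of the threshold since $3/2 - r - \frac12\max\{\nu,1-2r\} \leq 1$ when $r \leq 1/2$, $\nu \in (0,1]$. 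Getting these inequalities among exponents exactly right, uniformly over the ranges of $r$ and $\nu$, is the delicate part; the rest is substitution and collecting constants into $C_{\kappa,M,R}$, $C'_{\kappa,M,R}$.
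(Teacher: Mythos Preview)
Your proposal follows essentially the same route as the paper's proof: specialize Proposition \ref{prop:norm-bound} (with $\zeta = r$, using part 1 for $r\leq 1$ and part 2 for $r>1$), insert the Lemma \ref{lem:some-bounds} bounds, and absorb the sampling terms using the respective thresholds on $n$. Your exponent bookkeeping in case 1 is in fact more explicit than the paper's, which simply asserts that \eqref{eq:eff-mis-wellx} together with the displayed intermediate bound yields the claim.

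There is one point in case 3 where your description is slightly off and would cause trouble if taken literally. You write that $\|T_K^{-\zeta}u_t\|_{\cH_K}$ is ``finite (it equals $\|h_\rho\|_{L^2}$-type quantity up to the filter residual)'' and then treat the whole term as bounded. In fact $T_K^{-r}u_t = g_t(T_K)S_K^*h_\rho$, and since $\sup_\sigma |\sigma^{1/2}g_t(\sigma)| \lesssim (\eta t)^{1/2}$ one only gets $\|T_K^{-r}u_t\|_{\cH_K} \leq CR(\eta t)^{1/2}$, which \emph{grows} in $t$. The paper handles this by writing
\[
(\eta t)^{1/2}\left(\frac{\log(4/\delta)}{\sqrt n} + (\eta t)^{-r}\right)
\]
and using the threshold \eqref{eq:eff-dim-wellx} to bound $(\eta t)^{1/2}/\sqrt n \leq (\eta t)^{-\nu/2} \leq $ const, while $(\eta t)^{1/2-r}\leq$ const since $r>1$. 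So you do need the threshold here, not just finiteness of $\|T_K^{-\zeta}u_t\|_{\cH_K}$; once you make this adjustment your argument goes through exactly as in the paper.
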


\begin{proof}[Proof of Corollary \ref{prop:GD-bound2}]
Recall that by Lemma \ref{lem:some-bounds} we have 
\[  || S_K u_t - f_\rho  ||_{L^2} \leq R (\eta t)^{-r} \,, \quad || S_K u_t - f_\rho  ||_\infty \leq (M + \kappa^2 R) (\eta t)^{|1/2-r|_+}  \]
and 
\[ ||u_t||_{\cH_K } \leq  R(\eta t)^{\frac{1}{2}-r} \;, \quad ||(T_K + 1/(\eta t))^{-1/2} u_t|| \leq CR (\eta t)^{1-r} \;.  \] 
\begin{enumerate}
\item
Now suppose that $0<r\leq 1/2$. 
The first part of Proposition \ref{prop:norm-bound} yields with probability not less than $1-\delta$ 
\begin{align}
\label{eq:cor1}
|| f_t   ||_{\cH_K} &\leq C_1 \log(12/\delta )\sqrt{\frac{ \eta t}{ n}}\;
   \left(  M (\eta t)^{\nu/2}  +  R (\eta t)^{1/2-r} +(\eta t)^{|1/2-r|_+} \sqrt{\frac{ \eta t}{ n}}  \right) +  C_2 \; R (\eta t)^{1/2-r} \;.
\end{align}
Then \eqref{eq:cor1} and \eqref{eq:eff-mis-wellx} give with $\log(12/\delta) \geq 1$ the bound 
\begin{align*}
|| f_t   ||_{\cH_K} &\leq C_{\kappa, M, R} \log(12/\delta )  \sqrt{\frac{ \eta t}{ n}}  (\eta t)^{\frac{1}{2}\max\{\nu, 1-2r\}} 
 +  C'_{\kappa, M, R} (\eta t)^{1/2-r} \\
 &\leq  C''_{\kappa, M, R}  (\eta t)^{\frac{1}{2}\max \{ \nu, 1-2r \} } \;, 
\end{align*}
with probability not less than $1-\delta$. 

\item
Specifically, if $1/2 \leq r \leq 1$ and $1/(\eta t )\leq \kappa^2$, we have 
\begin{align*}
|| f_t   ||_{\cH_K} &\leq  C'_{\kappa, r} \max\{M, R\}\log(12/\delta )\;
   \left(   \frac{ 1}{\sqrt n}(\eta t)^{1/2 +\nu/2}  + 1+ \frac{ \eta t}{ n}  \right) +  C'_{\kappa, r} \; R \\
&\leq  C_{\kappa, M, R} \;,
\end{align*}
for some $C_{\kappa, M, R} >0$, provided \eqref{eq:eff-dim-wellx} is satisfied. 

\item
The second part of Proposition \ref{prop:norm-bound} then gives with $\zeta=r$ and $||T_K^{-r}u_t||_{\cH_K} \leq C R (\eta t)^{1/2}$, 
with probability not less than $1-\delta$ 
\begin{align*}
|| f_t   ||_{\cH_K} &\leq C_{1} \log(12/\delta )\;
   \left(   \frac{ 1}{\sqrt n}(\eta t)^{1/2 +\nu/2}  + 1+ \frac{ \eta t}{ n}  \right) + 
 C_2 (\eta t)^{1/2}\left( \frac{\log(4/\delta)}{\sqrt n} + (\eta t)^{-r}\right)  + C_3(\eta t)^{1/2-r}\\
 &\leq C_4 +  C_2 (\eta t)^{1/2}\left( \frac{\log(4/\delta)}{\sqrt n} + (\eta t)^{-r}\right) \\
 &\leq C_5 +  C_2 \log(4/\delta) \frac{(\eta t)^{1/2}}{\sqrt n}\\
 &\leq C_6 \;,
\end{align*}
for some $C_6>0$, depending on $\kappa, M, R$.
\end{enumerate}
\end{proof}

\vspace{0.5cm}

We now come to the main result of this subsection.

\vspace{0.3cm}

\begin{proposition}[Second Stage GD Variance]
\label{prop:GD-variance}
Suppose Assumptions \ref{ass:bounded}, \ref{ass:lipschitz} and \ref{ass:n-big} are satisfied. Let $\eta < 1/\kappa^2$, $T\geq 3$ and define 
\begin{equation}
\label{eq:B}
\cB(1/\eta T ):=  \left(  \frac{2 \eta T}{n  } +  \sqrt{\frac{\eta T\cN(1/\eta T)}{n }}    + 
\frac{\eta T}{ N^{\frac{\alpha}{2}}} + 1 \right) \;. 
\end{equation}
\begin{enumerate}
\item {\bf If $f_\rho \in Ran(S_K)$: } 
The GD variance satisfies with probability not less than $1-\delta$ with respect to the data $D$ 
\[ \mbe_{\hat D | D}[  ||S_K ( \bar{ \hat f }_T - \bar f_T  ) ||_{L^2}  ] \leq C \log(4/\delta)
\log (T) \frac{\sqrt{\eta T}}{N^{\frac{\alpha}{2}}} \cB(1/ \eta T )  \;, \]
for some $C < \infty$, depending on $\kappa, \gamma, L, \alpha$.

\item {\bf If $f_\rho \not \in Ran(S_K)$: } Let us define 
\begin{align}
\label{eq:small-phi}
\varphi (\eta s) &= (\eta s)^{\frac{1}{2}\max\{ \nu , 1-2r \}} \;.
\end{align} 
With probability not less than $1-\delta$ with respect to  the data $D$ 
\begin{align*}
 \mbe_{\hat D | D}[ ||S_K ( \bar{ \hat f }_T - \bar f_T  ) ||_{L^2} ] &\leq   C \log(4/\delta) \log(T) \frac{ \sqrt{\eta T}\cB(1/\eta T) }{ N^{\frac{\alpha}{2}} } 
\left( 1  +  \varphi(\eta T)   \right)   \;,
\end{align*}
for some $C < \infty$, depending on $\kappa, \gamma, L, \alpha, r$.  
\end{enumerate}
\end{proposition}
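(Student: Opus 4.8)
The plan is to start from the explicit representation
\[
\bar{\hat f}_T - \bar f_T = \frac{2\eta}{T}\sum_{t=\floor{T/2}+1}^T\sum_{s=0}^{t-1}(Id - \eta T_{\hat\bx})^{t-1-s}\hat\xi_s
\]
derived in Section \ref{sec:GDvariance-second}, and to push it through $S_K$ so as to measure it in $L^2$, i.e.\ bound $\|S_K(\bar{\hat f}_T - \bar f_T)\|_{L^2} = \|T_K^{1/2}(\bar{\hat f}_T - \bar f_T)\|_{\cH_K}$. First I would insert the regularized operators $T_{\hat\bx,\lambda}=T_{\hat\bx}+\lambda$ with $\lambda = 1/(\eta T)$ and split
\[
\|T_K^{1/2}(\bar{\hat f}_T-\bar f_T)\|_{\cH_K}\le \|T_K^{1/2}T_{\hat\bx,\lambda}^{-1/2}\|\cdot\Big\|T_{\hat\bx,\lambda}^{1/2}\,\frac{2\eta}{T}\sum_{t=\floor{T/2}+1}^T\sum_{s=0}^{t-1}(Id-\eta T_{\hat\bx})^{t-1-s}\hat\xi_s\Big\|_{\cH_K}.
\]
The first factor is controlled in expectation (conditionally on $D$) by Lemma \ref{lem:3}: $\mbe_{\hat D|D}\|T_K^{1/2}T_{\hat\bx,\lambda}^{-1/2}\|^2 \le C^2\cC_\bx(\lambda)^2$, and under Assumption \ref{ass:n-big} the quantity $\cA_\bx(\lambda)$ is $O(1)$ with high $D$-probability (by \cite{optimalratesRLS}, Proof of Theorem 4), so $\cC_\bx(\lambda)\lesssim \lambda^{-1/2}=\sqrt{\eta T}$; this is the source of the $\sqrt{\eta T}$ prefactor. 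For the second factor I would use the standard filter-function estimates on the GD residual polynomials: $\|T_{\hat\bx,\lambda}^{1/2}(Id-\eta T_{\hat\bx})^{k}\|\lesssim (\eta k)^{-1/2}$ for $k\ge 1$ (and $\lesssim \lambda^{1/2}$ for $k=0$), so that the double sum $\frac{2\eta}{T}\sum_t\sum_{s=0}^{t-1}(\eta(t-1-s))^{-1/2}$ telescopes to a factor $O(\log T)$ after summing $s$ and then averaging over $t$ — this produces the $\log T$ in the statement.

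Next I would bound the noise terms $\hat\xi_s = \hat\xi_s^{(1)} + \hat\xi^{(2)}$ from \eqref{eq:noise} in $\cH_K$-norm, conditionally on $D$. For $\hat\xi^{(2)} = g_{\hat\bz}-g_\bz$ Lemma \ref{lem:1} gives $\mbe_{\hat D|D}\|\hat\xi^{(2)}\|^2_{\cH_K}\lesssim \gamma^{2\alpha}L^2M^2/N^\alpha$. For $\hat\xi_s^{(1)} = (T_\bx - T_{\hat\bx})f_s$ I would factor $\|\hat\xi_s^{(1)}\|_{\cH_K}\le \|T_\bx - T_{\hat\bx}\|\cdot\|f_s\|_{\cH_K}$, apply Lemma \ref{lem:2} to the operator norm (giving another $N^{-\alpha/2}$), and apply Corollary \ref{prop:GD-bound2} to control $\|f_s\|_{\cH_K}$: in the well-specified case $f_\rho\in \mathrm{Ran}(S_K)$ (i.e.\ $r\ge 1/2$) this norm is $O(1)$, while in the mis-specified case it grows like $\varphi(\eta s) = (\eta s)^{\frac12\max\{\nu,1-2r\}}$, which is exactly why $\varphi(\eta T)$ appears only in part (2). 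Combining, every $\hat\xi_s$ contributes a factor $N^{-\alpha/2}$ (times $1$ or $\varphi(\eta s)$), and since $\varphi$ is increasing in $s$ one can pull $\varphi(\eta T)$ out of the sums; the remaining $s,t$ sums reproduce the $\log T$ already accounted for. The leftover algebraic factor — collecting $\lambda^{-1/2}=\sqrt{\eta T}$ from Lemma \ref{lem:3}, $N^{-\alpha/2}$ from the noise, and the $\cC_\bx(\lambda)$ expansion, and absorbing the high-probability control of $\cA_\bx(\lambda)$ — is precisely $\cB(1/\eta T)$ as defined in \eqref{eq:B}; the terms $2\eta T/n$, $\sqrt{\eta T\cN(1/\eta T)/n}$ and $\eta T/N^{\alpha/2}$ there come from substituting the high-probability bound on $\cA_\bx(\lambda)$ (via $\|(T_K+\lambda)^{-1/2}(T_K-T_\bx)\|$, Proposition 5.5 in \cite{blanchard2018optimal}) and on $\mbe_{\hat D|D}\|T_\bx-T_{\hat\bx}\|$ into $\cC_\bx(\lambda)^2$.

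The main obstacle I anticipate is handling the coupling between the operator $(Id-\eta T_{\hat\bx})$ appearing in the iteration and the noise variables $\hat\xi_s^{(1)}=(T_\bx - T_{\hat\bx})f_s$, which both depend on the second-stage sample: one must be careful to take the conditional expectation $\mbe_{\hat D|D}$ after the operator-norm factorization so that the deterministic-in-$\hat D$ quantities (the $f_s$, built only from first-stage data) separate cleanly from the $\hat D$-dependent operators, and to move freely between the operator-valued Jensen step and the scalar bounds of Lemmas \ref{lem:1}–\ref{lem:3}. The second delicate point is bookkeeping the double sum $\frac{2\eta}{T}\sum_{t=\floor{T/2}+1}^T\sum_{s=0}^{t-1}(\eta(t-1-s))^{-1/2}$ together with the extra $\varphi(\eta s)$ weight in the mis-specified case — one needs $\varphi$ monotone and the $s$-sum of $(\eta(t-1-s))^{-1/2}$ to be $O(\sqrt{\eta t})$, then the $t$-average to contribute the logarithmic factor rather than an extra power of $\eta T$; getting the exponent of $\sqrt{\eta T}$ (and not $\eta T$) in the final bound hinges on this. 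Everything else is routine spectral calculus with filter functions as in \cite{mucke2019beating}.
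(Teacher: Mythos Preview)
Your proposal has a genuine gap in the factorization step. You insert $T_{\hat\bx,\lambda}^{\pm 1/2}$ and then use $\|T_{\hat\bx,\lambda}^{1/2}(Id-\eta T_{\hat\bx})^k\|\lesssim (\eta k)^{-1/2}$; this estimate is correct, but the resulting double sum
\[
\frac{2\eta}{T}\sum_{t=\lfloor T/2\rfloor+1}^T\sum_{s=0}^{t-1}(\eta(t-1-s))^{-1/2}
\]
is of order $\sqrt{\eta T}$, \emph{not} $O(\log T)$ as you assert: the inner sum is $\eta^{-1/2}\sum_{k\geq 1}k^{-1/2}\sim 2\sqrt{t/\eta}$, and the outer average then yields $O(\sqrt{\eta T})$. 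So your half-power split cannot produce the $\log T$ factor in the statement, and your final power of $\eta T$ would be off. Relatedly, Lemma~\ref{lem:3} bounds $\mbe_{\hat D|D}\|T_K^{1/2}(T_{\hat\bx}+\lambda)^{-1}\|^2$ --- the \emph{full} inverse, not the square-root version you invoke; the quantity $\cC_\bx(\lambda)$, and with it $\cB(1/\eta T)$, comes from that full inverse.

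The paper inserts the full inverse instead: writing $A=T_K^{1/2}(T_{\hat\bx}+\lambda)^{-1}$ and $B_{t,s}=(T_{\hat\bx}+\lambda)(Id-\eta T_{\hat\bx})^{t-1-s}$ gives $\|B_{t,s}\|\leq \frac{1}{e\eta(t-1-s)}+\lambda$ for $s<t-1$, and the \emph{harmonic} sum $\sum_{k\geq 1}k^{-1}\sim\log T$ is what delivers the logarithm (see Lemma~\ref{lem:phi-bounds}, which also handles the weighted sum with $\varphi(\eta s)$ in the mis-specified case). The $\sqrt{\eta T}$ prefactor and the whole $\cB(1/\eta T)$ then both come from the $A$-factor, via $\mbe_{\hat D|D}[\|A\|]\leq C\log(2/\delta)\,\lambda^{-1/2}\cB(\lambda)$, obtained by combining Lemma~\ref{lem:3} with the high-probability bound on $\cA_\bx(\lambda)$ from \cite[Proposition~5.3]{blanchard2018optimal}. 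Your treatment of the noise variables $\hat\xi_s$ (Lemmas~\ref{lem:1},~\ref{lem:2} together with Corollary~\ref{prop:GD-bound2} for $\|f_s\|_{\cH_K}$) matches the paper; the problem is solely the exponent in the $A$/$B$ split and the mistaken claim that your double sum yields $\log T$.
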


\[\]

\begin{proof}[Proof of Proposition \ref{prop:GD-variance}]
By \eqref{eq:mean} we may write 
\begin{align*}
T^{\frac{1}{2}}_K ( \bar{ \hat f }_T - \bar f_T  ) &=  
\frac{2\eta }{T} \sum_{t= \floor{T/2} +1}^T \sum_{s=0}^{t-1} T^{\frac{1}{2}}_K(Id - \eta T_{\hat \bx} )^{t-1-s} \hat \xi_s \\
&= \frac{2\eta }{T} \sum_{t= \floor{T/2} +1}^T \sum_{s=0}^{t-1} A\cdot B_{t,s} \;  \hat \xi_s\;,
\end{align*}
where for $\lambda >0$ we introduce 
\begin{align*}
 A &:= T^{\frac{1}{2}}_K ( T_{\hat \bx} + \lambda Id )^{-1} \\
B_{t,s} &:= ( T_{\hat \bx} + \lambda Id ) (Id - \eta T_{\hat \bx} )^{t-1-s}  \;.
\end{align*}

{\bf (1) Bounding the operator $A$:} This follows directly from Lemma \ref{lem:3}. Indeed, 
\[  \mbe_{\hat D | D}[ ||T^{\frac{1}{2}}_K ( T_{\hat \bx} + \lambda Id )^{-1}|| ]  \leq C_{\kappa, \gamma, L, \alpha}
\left(  \frac{\cA_\bx(\lambda)}{\lambda} + \frac{1}{\lambda^{\frac{3}{2}} N^{\frac{\alpha}{2}}} + \frac{1}{\sqrt \lambda} \right)  \;,\]
where by \cite[Proposition 5.3]{blanchard2018optimal} the term $\cA_\bx(\lambda)$ satisfies with probability at least $1-\delta$ with respect to the data $D$ 
\begin{align}
\label{eq:Alambda}
\cA_\bx(\lambda )&=  || (T_K + \lambda Id)^{-\frac{1}{2}}(T_K - T_\bx )|| \\
&\leq 2\log(2/\delta) \left( \frac{2}{n \sqrt{\lambda}} + \sqrt{\frac{\cN(\lambda)}{n}} \right)\;. 
\end{align}
Hence, since $1\leq  2\log(2/\delta)$ for any $\delta \in (0,1]$ we obtain with probability at least $1-\delta$
\begin{equation}
\label{eq:A}
 \mbe_{\hat D | D}[ ||T^{\frac{1}{2}}_K ( T_{\hat \bx} + \lambda Id )^{-1}|| ]  \leq C_{\kappa, \gamma, L, \alpha}\log(2/\delta)\frac{\cB(\lambda ) }{\sqrt \lambda} \;.  
\end{equation}

{\bf (2) Bounding the operators $B_{t,s}$:}
We write 
\[  ||B_{t,s}|| \leq  ||T_{\hat \bx} (Id - \eta T_{\hat \bx} )^{t-1-s}|| + \lambda ||(Id - \eta T_{\hat \bx} )^{t-1-s}||  \;. \]
Denoting $\sigma_1 \geq \sigma_2 \geq ... $ the sequence of eigenvalues of $T_{\hat \bx}$, we have  for any $s = 0, ... ,t-1$, $t=\floor{\frac{T}{2}}, ..., T$ 
the upper bound 
\begin{align*}
||(Id - \eta T_{\hat \bx} )^{t-1-s}|| &\leq \sup_{j} |(1-\eta \sigma_j )^{t-1-s}| \leq 1 \;,
\end{align*}
since $\eta < 1/\kappa^2$.

For bounding the first term note that for $s=t-1$ we have 
\begin{equation*}
 ||T_{\hat \bx} (Id - \eta T_{\hat \bx} )^{t-1-s}|| =  ||T_{\hat \bx} || \leq \kappa^2 \;. 
\end{equation*}
If $0\leq s < t-1$ we use the inequality $1+\sigma \leq e^\sigma$ for any $\sigma \geq -1$. Then a short calculation 
gives\footnote{The function $h(\sigma)=\sigma e^{-c\sigma}$, $c>0$, achieves it's maximum at $\sigma=1/c$.} 
\begin{align*}
 ||T_{\hat \bx} (Id - \eta T_{\hat \bx} )^{t-1-s}|| &\leq \sup_{j}|\sigma_j (1-\eta \sigma_j )^{t-1-s}| \\
&\leq \sup_{j} \sigma_j e^{-\eta (t-1-s) \sigma_j }  \\
&\leq \frac{1}{e \eta (t-1-s)}\;.
\end{align*}
Thus, combining the above findings yields 
\begin{equation}
\label{eq:Bt-1}
 ||B_{t, t-1}|| \leq  \kappa^2 + \lambda \;,  
\end{equation}
and for $0\leq s < t-1$ 
\begin{equation}
\label{eq:Bt-2}
 ||B_{t,s}|| \leq  \frac{1}{e \eta (t-1-s)} + \lambda  \;. 
\end{equation}

{\bf (3) Bounding the noise variables $\hat \xi_s$:}
Recall that $\hat \xi_s := \hat \xi^{(1)}_{s} + \hat \xi^{(2)} $ with 
\[  \hat \xi^{(1)}_{s} := (T_{\bx} - T_{\hat \bx})f_s \;, \qquad \hat \xi^{(2)} := g_{\hat \bz } - g_\bz \;. \]
Applying Lemma \ref{lem:1}  gives for some $c_\alpha < \infty$ the bound 
\[   \mbe_{\hat D | D}[ ||\xi^{(2)} ||_{\cH_K} ] =  \mbe_{\hat D | D}[ ||g_{\hat \bz } - g_\bz ||_{\cH_K} ] 
\leq c_\alpha LM\frac{\gamma^{\alpha}}{N^{\frac{\alpha}{2}}} \;.  \]
\begin{itemize}
\item {\bf $f_\rho \in Ran(S_K)$: }
Lemma \ref{lem:2} and Proposition \ref{prop:GD-variance} gives  with probability at least $1-\delta$
\[   \mbe_{\hat D | D}[ ||\xi_s^{(1)} ||_{\cH_K} ] = 
\mbe_{\hat D | D}[ || (T_{\hat \bx} - T_\bx)f_s ||_{\cH_K } ] \leq  \frac{C}{N^{\frac{\alpha}{2}}} \;, \]
with $C= c'_\alpha \gamma^{\alpha} \kappa L(2 ||w_\rho||_{\cH_K}  + 1)$, for some $c_\alpha' < \infty$. 
Combining both bounds finally leads to 
\begin{align}
\label{eq:xi1}
  \mbe_{\hat D | D}[ ||\hat \xi_s ||_{\cH_K} ] &\leq \frac{c''_\alpha}{N^{\frac{\alpha}{2}}} \;,
\end{align}
where $c''_\alpha = 2\max\{C, c_\alpha \gamma^{\alpha} LM \}$ and holding with probability at least $1-\delta$.

\item {\bf $f_\rho \not \in Ran(S_K)$: } In this case we apply Lemma \ref{lem:2} and Corollary \ref{prop:GD-bound2} to get  with 
probability at least $1-\delta$ with respect to the data $D$ 
\[   \mbe_{\hat D | D}[ ||\xi_s^{(1)} ||_{\cH_K} ] \leq \kappa  c'_\alpha \gamma^{\alpha} L \log(6/\delta) \frac{\varphi (\eta s) }{N^{\frac{\alpha}{2}}} \;, \]
for some $c'_\alpha  < \infty$ and where  $ \varphi$ is defined in \eqref{eq:small-phi}. 
Hence, with probability at least $1-\delta$ with respect to the data $D$ one has 
\begin{equation}
\label{eq:xi2}
  \mbe_{\hat D | D}[ ||\hat \xi_s ||_{\cH_K} ] \leq  \frac{ \tilde c''_\alpha }{ N^{\frac{\alpha}{2}} } ( 1 + \varphi(\eta s)  ) \;,  
\end{equation}  
for some $c''_\alpha  < \infty$. 
\end{itemize}

We complete the proof now by collecting the above findings. Let us write 
\begin{align*}
|| T^{\frac{1}{2}}_K ( \bar{ \hat f }_T - \bar f_T  ) ||_{\cH_K} &\leq 
\underbrace{ \frac{2\eta }{T} \sum_{t= \floor{T/2} +1}^T \sum_{s=0}^{t-2} || A || \cdot ||B_{t,s} || \cdot || \hat \xi_s ||_{\cH_K} }_{\cI_1}+ 
\underbrace{ \frac{2\eta }{T} \sum_{t= \floor{T/2} +1}^T ||A|| \cdot ||B_{t,t-1}|| \cdot || \hat \xi_{t-1} ||_{\cH_K} }_{\cI_2}  \;. 
\end{align*}
We again distinguish between the two cases: 
\begin{itemize}
\item {\bf $f_\rho \in Ran(S_K)$: } From \eqref{eq:A}, \eqref{eq:Bt-1} and  \eqref{eq:xi1} we obtain with $\lambda \leq \kappa^2$
\begin{align}
\label{eq:I2}
 \mbe_{\hat D | D}[  \cI_2  ] &\leq \eta \tilde C_{\kappa, \gamma, L, \alpha}\log(2/\delta) \frac{\cB(\lambda )}{\lam^{\frac{1}{2}} N^{\frac{\alpha}{2}}} \;,
\end{align}
for some $ \tilde C_{\kappa, \gamma, L, \alpha} < \infty$. 
Additionally, by 
 \eqref{eq:A}, \eqref{eq:Bt-2}, \eqref{eq:xi1} and Lemma \ref{lem:phi-bounds} we find with $\lambda = (\eta T)^{-1}$
\begin{align}
\label{eq:I1}
 \mbe_{\hat D | D}[  \cI_1  ] &\leq 
  \tilde C'_{\kappa, \gamma, L, \alpha}\log(2/\delta) \frac{\cB(\lambda )}{\lam^{\frac{1}{2}}N^{\frac{\alpha}{2}}}
\frac{2\eta }{T} \sum_{t= \floor{T/2} +1}^T  \sum_{s=0}^{t-2}  \left( \frac{1}{e \eta (t-1-s)} + \lambda \right)  \nonumber \\
&\leq 4 \tilde C'_{\kappa, \gamma, L, \alpha}\log(2/\delta) \log(T) \frac{\sqrt{\eta T} \cB(1/\eta T)}{N^{\frac{\alpha}{2}}} \;,
\end{align} 
for some $ \tilde C'_{\kappa, \gamma, L, \alpha} < \infty$.

Combining \eqref{eq:I2} and \eqref{eq:I1} gives with $\lam=(\eta T)^{-1}$ 
\begin{align*}
 \mbe_{\hat D | D}[ ||S_K ( \bar{ \hat f }_T - \bar f_T  ) ||_{L^2} ] 
&\leq C''_{\kappa, \gamma, L, \alpha}\log(2/\delta) \log(T) \frac{\sqrt{\eta T}\cB((\eta T)^{-1} )}{N^{\frac{\alpha}{2}}}   \;,
\end{align*}
with probability at least $1-\delta$, for some $C''_{\kappa, \gamma, L, \alpha} < \infty$.

\[\]

\item {\bf $f_\rho \not \in Ran(S_K)$: } From \eqref{eq:A}, \eqref{eq:Bt-1}, \eqref{eq:xi2} we find 
\begin{align*}
 \mbe_{\hat D | D}[  \cI_2  ] &\leq   
\tilde  C_{\kappa, \gamma, L, \alpha}\log(2/\delta)  \frac{ \cB(\lambda) }{ \sqrt{\lambda} N^{\frac{\alpha}{2}} } 
\frac{2\eta }{T} \sum_{t= \floor{T/2} +1}^T ( 1 + \varphi ( \eta t )) \\
&\leq 2\tilde  C_{\kappa, \gamma, L, \alpha}\log(2/\delta)  \frac{\cB(\lambda) }{ \sqrt \lambda N^{\frac{\alpha}{2}} } 
\eta \left( 1 +  \bar \varphi(\eta T) \right)   \;.
\end{align*}
for some $\tilde  C_{\kappa, \gamma, L, \alpha} < \infty$ and where by Lemma \ref{lem:phi-bounds} for some $C_r < \infty$
\begin{equation}
\label{eq:phi}
 \bar \varphi(\eta T) :=  \frac{2}{T}\sum_{t= \floor{T/2} +1}^T\varphi ( \eta t ) \leq C_r \varphi(\eta T) \;. 
\end{equation} 
By \eqref{eq:A}, \eqref{eq:Bt-2}, \eqref{eq:xi2} and Lemma \ref{lem:phi-bounds} we get with $\lambda = (\eta T)^{-1}$ and $\eta < 1/\kappa^2$
\begin{align*}
 \mbe_{\hat D | D}[  \cI_1 ]  
&\leq \tilde C_{\kappa, \gamma, L, \alpha} \log(2/\delta)\frac{ \sqrt{\eta T} \cB(1/\eta T ) }{ N^{\frac{\alpha}{2}} } 
\frac{2\eta }{T} \sum_{t= \floor{T/2} +1}^T \sum_{s=0}^{t-2} \left( \frac{1}{e \eta (t-1-s)} + \lambda \right) (1+ \varphi(\eta s)) \\
&\leq 2\tilde C_{\kappa, \gamma, L, \alpha} \log(2/\delta)\frac{ \sqrt{\eta T} \cB(1/\eta T ) }{ N^{\frac{\alpha}{2}} }  
(4\log(T) + C'_r\log(T)\varphi(\eta T))  \;,
\end{align*}
for some $C'_r < \infty$.

Combining the bounds for $\cI_1$ and $\cI_2$ finally gives with $\eta < 1/\kappa^2$, $1 \leq \log(T)$ 
\begin{align*}
 \mbe_{\hat D | D}[ ||S_K ( \bar{ \hat f }_T - \bar f_T  ) ||_{L^2} ] &\leq \tilde  C \log(4/\delta) \frac{ \sqrt{\eta T}\cB(1/\eta T) }{ N^{\frac{\alpha}{2}} } 
\left( \log(T)  +  \log(T)\varphi(\eta T)   \right)  \;.
\end{align*}
for some $\tilde C < \infty$, depending on $\kappa, \gamma, L, \alpha, r$ and holding with probability at least $1-\delta$. 
\end{itemize}
\end{proof}


\subsection{Main Result Second Stage Tail-Averaged Gradient Descent}

We now derive the final error for the excess risk of the second-stage tail-ave GD estimator for tackling distribution regression. Recall that we have 
the decomposition
\begin{equation*}
\bar{ \hat f }_T - f_\rho =  
\underbrace{(\bar{ \hat f }_T - \bar{ f }_T )}_{GD \;\; Variance \;\; 2. \;\;stage} +  \underbrace{ (\bar{ f }_T - f_\rho )}_{Error \;\; GD \;\; first \;\; stage} \;. 
\end{equation*}
Our main results follows then immediately from Proposition \ref{prop:GD-usual} and Proposition \ref{prop:GD-variance}.

\vspace{0.5cm}


\begin{theorem}[Excess Risk Second-Stage tail-ave GD; Part I]
\label{theo:main-GD-partI}
Suppose Assumptions \ref{ass:bounded},  \ref{ass:n-big} are satisfied. Let additionally Assumptions \ref{ass:source} and \ref{ass:eigenvalue}  hold.   
Let $T \in \mbn$ and denote 
\[  B_T= \max\{2M, || S_K\bar u_T - f_\rho  ||_\infty\} \;. \]
Let further $\delta \in (0,1]$, $\lambda = (\eta T)^{-1}$, assume $0<r \leq 1$ and recall the definition of $ \cB(1/ \eta T )$ in \eqref{eq:B} 
and of $\varphi$ in \eqref{eq:small-phi}.  
With probability not less than $1-\delta$, 
the excess risk for the second-stage tail-averaged Gradient Descent satisfies:
\begin{enumerate}
\item {\bf If $1/2 \leq r \leq 1$: } 
\begin{align*}
\mbe_{\hat D | D}[ || S_K\bar{ \hat f }_T  - f_\rho ||_{L^2 }] &\leq C_1 \log(24/\delta )\frac{1}{\sqrt n}\;
   \left(  M \sqrt{ \cN(\lam)} +  \frac{||S_K \bar u_T - f_\rho||_{L^2}}{\sqrt \lam} + \frac{B_T}{\sqrt{n \lam }} \right) \\
   & +  C_2 \lam\; || (T_K + \lam)^{-1/2} \bar u_T ||_{\cH_K} +  C_3  ||S_K \bar u_T- f_\rho ||_{L^2} \\
   & + C_4 \log(8/\delta)
\log (T) \frac{\sqrt{\eta T}}{N^{\frac{\alpha}{2}}} \cB(1/ \eta T ) \;,
\end{align*}
for some constants $C_1>0$, $C_2>0$, $C_3>0$, $C_4>0$.
\item {\bf If $0<r \leq 1/2$: }
 \begin{align*}
\mbe_{\hat D | D}[ || S_K\bar{ \hat f }_T  - f_\rho ||_{L^2 }] &\leq C_1 \log(24/\delta )\frac{1}{\sqrt n}\;
   \left(  M \sqrt{ \cN(\lam)} +  \frac{||S_K \bar u_T - f_\rho||_{L^2}}{\sqrt \lam} + \frac{B_T}{\sqrt{n \lam }} \right) \\
   & +  C_2 \lam\; || (T_K + \lam)^{-1/2} \bar u_T ||_{\cH_K} +  C_3  ||S_K \bar u_T- f_\rho ||_{L^2} \\
& +   C_4 \log(8/\delta) \log(T) \frac{ \sqrt{\eta T}\cB(1/\eta T) }{ N^{\frac{\alpha}{2}} } 
\left( 1  +  \varphi(\eta T)   \right) \;,
\end{align*}
for some constants $C_1>0$, $C_2>0$, $C_3>0$, $C_4>0$.
\end{enumerate}
\end{theorem}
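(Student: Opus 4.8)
The proof rests on the additive decomposition displayed just above the statement,
\[
\bar{\hat f}_T - f_\rho = \bigl(\bar{\hat f}_T - \bar f_T\bigr) + \bigl(\bar f_T - f_\rho\bigr),
\]
in which the first summand is the second-stage gradient-descent variance and the second is the pure first-stage error. Applying $S_K$, taking $L^2$-norms, using the triangle inequality and passing to $\mbe_{\hat D | D}$, I would first observe that $\bar f_T$ depends only on the first-stage data $D$, so $\|S_K\bar f_T - f_\rho\|_{L^2}$ is constant under the conditional expectation, while $\mbe_{\hat D | D}\|S_K(\bar{\hat f}_T - \bar f_T)\|_{L^2}$ is exactly the object controlled in Proposition~\ref{prop:GD-variance}. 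I would also record the two elementary facts that make the cited propositions applicable here: Assumption~\ref{ass:Bernstein} holds with $\sigma=B=2M$ because $\cY\subseteq[-M,M]$ (already noted in Section~\ref{sec:GDfirst}), and $T_K$ and $L_K$ have the same nonzero spectrum, so Assumption~\ref{ass:n-big} coincides with the sample-size hypothesis imposed in the theorem and the quantity $\cN(\lambda)$ is unambiguous.

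For the first-stage error I would invoke the first alternative of Proposition~\ref{prop:GD-usual} with $\zeta=r$ (legitimate since $0<r\le 1$) and $a=1/2$, recalling $\bar f_T = G_T(T_\bx)S_\bx^*\by$, $\bar u_T = G_T(T_K)S_K^*f_\rho$ and $\lambda=(\eta T)^{-1}$; this produces, with probability at least $1-\delta/2$, precisely the first three right-hand-side lines appearing in both cases of the theorem, with $B_T=\max\{2M,\|S_K\bar u_T-f_\rho\|_\infty\}$ as defined. For the second-stage variance I would apply Proposition~\ref{prop:GD-variance} (which uses the standing Assumptions~\ref{ass:bounded2} and~\ref{ass:lipschitz} as well): its case $f_\rho\in Ran(S_K)$ is exactly the regime $r\ge 1/2$, since $Range(L_K^r)\subseteq\cH_K$ for $r\ge 1/2$, and yields the term $C_4\log(8/\delta)\log(T)\tfrac{\sqrt{\eta T}}{N^{\alpha/2}}\cB(1/\eta T)$; its case $f_\rho\notin Ran(S_K)$ covers $0<r<1/2$ and contributes the same term multiplied by $1+\varphi(\eta T)$ with $\varphi$ as in~\eqref{eq:small-phi}. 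Both bounds hold with probability at least $1-\delta/2$ with respect to $D$.

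A union bound then makes the first-stage and second-stage estimates hold simultaneously with probability at least $1-\delta$; adding the two right-hand sides and relabelling constants gives the asserted inequality, the prefactors $\log(24/\delta)$ and $\log(8/\delta)$ arising from evaluating the $\log(12/\cdot)$ and $\log(4/\cdot)$ factors of the two propositions at level $\delta/2$. There is no genuinely hard step here; the only point requiring care is bookkeeping, namely checking that the case split ``$r\ge 1/2$ versus $0<r\le 1/2$'' in the statement is simultaneously the split ``$\zeta\in(0,1]$'' used in Proposition~\ref{prop:GD-usual} and ``$f_\rho\in Ran(S_K)$ versus $f_\rho\notin Ran(S_K)$'' used in Proposition~\ref{prop:GD-variance}, and that the quantities $B_T$, $\cB(1/\eta T)$ and $\varphi$ appearing here are exactly the ones defined in those statements.
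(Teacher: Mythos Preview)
Your proposal is correct and follows precisely the approach of the paper: the paper states explicitly that the result ``follows then immediately from Proposition~\ref{prop:GD-usual} and Proposition~\ref{prop:GD-variance}'' via the decomposition $\bar{\hat f}_T - f_\rho = (\bar{\hat f}_T - \bar f_T) + (\bar f_T - f_\rho)$, and you have faithfully unpacked this, including the bookkeeping on the case split, the Bernstein constants, and the $\delta/2$ union bound that produces the $\log(24/\delta)$ and $\log(8/\delta)$ prefactors.
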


\vspace{0.5cm}

\begin{theorem}[Excess Risk Second-Stage tail-ave GD; Part II]
\label{theo:main-GD-partII}
Suppose Assumptions \ref{ass:bounded},  \ref{ass:n-big} are satisfied. Let additionally Assumptions \ref{ass:source} and \ref{ass:eigenvalue}  hold.   
Let $T \in \mbn$ and denote 
\[  B_T= \max\{2M, || S_K\bar u_T - f_\rho  ||_\infty\} \;. \]
Let further $\delta \in (0,1]$, $\lambda = (\eta T)^{-1}$, assume that $r \geq 1$ and recall the definition of $ \cB(1/ \eta T )$ in \eqref{eq:B}. 
Then with probability not less than $1-\delta$, 
the excess risk for the second-stage tail-averaged Gradient Descent satisfies 
\begin{align*}
\mbe_{\hat D | D}[ || S_K\bar{ \hat f }_T  - f_\rho ||_{L^2 }] &\leq C'_1\log(24/\delta )\frac{ 1}{\sqrt n}\;
   \left(   M  \sqrt{ \cN(\lam)} +  \frac{||S_K \bar u_T - f_\rho||_{L^2}}{\sqrt \lam} + \frac{B_T}{\sqrt{n \lam }} \right) \\
   & +  C'_2 \lam^{1/2}  \; || T_K^{-r } \bar u_T ||_{\cH_K}  \left(  \frac{\log(4/\delta)}{\sqrt n}   + \lam^{\zeta }\right) 
   +  C'_3 ||S_K \bar u_T - f_\rho ||_{L^2} \\
   & + C'_4 \log(8/\delta)
\log (T) \frac{\sqrt{\eta T}}{N^{\frac{\alpha}{2}}} \cB(1/ \eta T ) \;,
\end{align*}
for some constants $C'_1>0$, $C'_2>0$, $C'_3>0$, $C'_4>0$.
\end{theorem}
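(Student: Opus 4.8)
The plan is to combine the error decomposition
\[
\bar{\hat f}_T - f_\rho \;=\; \underbrace{(\bar{\hat f}_T - \bar f_T)}_{\text{second-stage GD variance}} \;+\; \underbrace{(\bar f_T - f_\rho)}_{\text{first-stage GD error}}
\]
with the two bounds already established in Proposition \ref{prop:GD-usual} and Proposition \ref{prop:GD-variance}. First I would apply the triangle inequality in $L^2$ and take $\mbe_{\hat D | D}$; since $\bar f_T$ depends only on the first-stage data $D$, its distance to $f_\rho$ is deterministic given $D$, so
\[
\mbe_{\hat D | D}[\, \|S_K\bar{\hat f}_T - f_\rho\|_{L^2}\,] \;\le\; \mbe_{\hat D | D}[\, \|S_K(\bar{\hat f}_T - \bar f_T)\|_{L^2}\,] \;+\; \|S_K\bar f_T - f_\rho\|_{L^2}\;.
\]

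For the first-stage term I would invoke Proposition \ref{prop:GD-usual} with smoothness exponent $\zeta = r$: since $r \ge 1$ this is the $\zeta > 1$ branch (part 2), which is precisely where one uses that the tail-averaged GD filter $\{G_\lambda\}_\lambda$ has unbounded qualification, so that no saturation occurs at high smoothness. Taking $\lambda = (\eta T)^{-1}$, writing $\bar u_T = G_T(T_K)S_K^* f_\rho$ for the population iterate, and recalling that Assumption \ref{ass:Bernstein} holds with $\sigma = B = 2M$ because $\cY\subseteq[-M,M]$ (whence $B_T=\max\{2M,\|S_K\bar u_T - f_\rho\|_\infty\}$), this supplies the first three lines of the claimed bound, at confidence level $\delta/2$ (so $\log(12/(\delta/2))=\log(24/\delta)$).

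For the second-stage variance term I would note that $r \ge 1 \ge \tfrac12$ forces $f_\rho \in \mathrm{Range}(L_K^r)\subseteq \mathrm{Range}(L_K^{1/2}) = \cH_K$, i.e. $f_\rho\in \mathrm{Ran}(S_K)$, so it is the "easy" branch of Proposition \ref{prop:GD-variance} (part 1) that applies. Used at confidence $\delta/2$ it contributes the last line $C_4'\,\log(8/\delta)\log(T)\,\frac{\sqrt{\eta T}}{N^{\alpha/2}}\,\cB(1/\eta T)$ with $\lambda=(\eta T)^{-1}$. A union bound over the two $D$-events, together with a relabelling of constants (now depending on $\gamma,\kappa,\alpha,L,M$) and the common choice $\lambda=(\eta T)^{-1}$, then yields the statement.

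I do not expect a genuine obstacle: the theorem is a bookkeeping combination of Propositions \ref{prop:GD-usual} and \ref{prop:GD-variance}. The only points needing care are (i) that the single choice $\lambda=(\eta T)^{-1}$ makes the hypotheses of both propositions hold at once — which is immediate, since Assumption \ref{ass:n-big} is stated exactly for that $\lambda$; (ii) verifying $\mathrm{Range}(L_K^r)\subseteq\cH_K$ so that the $f_\rho\in\mathrm{Ran}(S_K)$ variance estimate is the correct one to invoke (at $r=1$ one may equally use either branch of Proposition \ref{prop:GD-usual}); and (iii) tracking the $\delta$-splitting so the logarithmic prefactors come out as $\log(24/\delta)$ and $\log(8/\delta)$ as in the statement.
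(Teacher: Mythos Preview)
Your proposal is correct and matches the paper's own argument essentially verbatim: the paper states that the theorem ``follows then immediately from Proposition \ref{prop:GD-usual} and Proposition \ref{prop:GD-variance}'' via exactly the decomposition $\bar{\hat f}_T - f_\rho = (\bar{\hat f}_T - \bar f_T) + (\bar f_T - f_\rho)$ you wrote down. Your identification of the relevant branches (part 2 of Proposition \ref{prop:GD-usual} with $\zeta=r\ge 1$, and the $f_\rho\in\mathrm{Ran}(S_K)$ case of Proposition \ref{prop:GD-variance}) and your $\delta/2$ splitting to obtain the $\log(24/\delta)$ and $\log(8/\delta)$ prefactors are precisely what the paper intends.
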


\vspace{0.5cm}

\begin{corollary}[Rate of Convergence Second-Stage Tail-Ave GD; mis-specified Case]
\label{cor:rate-dist-reg-miss}
Suppose all Assumptions of Proposition \ref{prop:GD-usual} are satisfied. Assume additionally that $r\leq 1/2$ and 
\[  n \geq  64 e \kappa^2 \log^2(24/\delta) (\eta t)\log((\eta T)^\nu) \;. \]
The excess risk for  second stage tail-averaged Gradient Descent satisfies in the mis-specified case with probability not less than $1-\delta$: 
\begin{enumerate}
\item If $2r+\nu > 1$, $\eta_n T_n = \left( \frac{R^2}{M^2}n\right)^{\frac{1}{2r + \nu}}$ and 
$N_n \geq \log^{2/\alpha}(n) \left(\frac{R^2 n}{M^2 }\right)^{\frac{2+\nu}{\alpha(2r+\nu)}}$, then  
\[  \mbe_{\hat D | D}[  || S_K\bar{ \hat f }_{T_n} - f_\rho ||_{L^2 } ] \leq C \; \log (24/\delta )  R\left(\frac{M^2}{R^2 n}\right)^{\frac{r}{2r + \nu}}  \;,\]
for some $C<\infty$, provided $n$ is sufficiently large. 
\item If $2r+\nu \leq 1$, $\eta_n T_n =  \frac{R^2n}{M^2 \log^K(n)}$ for some $K>1$ and 
$N_n \geq  \log^{2/\alpha}(n) \left(\frac{R^2 n}{M^2\log^K(n) }\right)^{\frac{2+\nu}{\alpha}}$, then  
\[  \mbe_{\hat D | D}[  || S_K\bar{ \hat f }_{T_n} - f_\rho ||_{L^2 } ] \leq C' \; \log (6/\delta ) R\left( \frac{M^2 \log^K(n)}{R^2n} \right)^r\;,\]
for some $C'<\infty$, provided $n$ is sufficiently large. 
\end{enumerate}
\end{corollary}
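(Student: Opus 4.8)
The plan is to combine the excess-risk decomposition already packaged in Theorem~\ref{theo:main-GD-partI} (the second case, $0<r\le 1/2$) with the explicit source-condition estimates of Lemma~\ref{lem:some-bounds} and the capacity bound of Assumption~\ref{ass:eigenvalue}, and then insert the prescribed parameter schedules. First I would record that, since $r\le 1/2$, one has $|1/2-r|_+=1/2-r$, so Lemma~\ref{lem:some-bounds} (applied to $\bar u_T$ with $\lambda=(\eta T)^{-1}$) gives $\|S_K\bar u_T-f_\rho\|_{L^2}\le R\lambda^{r}$, $\|S_K\bar u_T-f_\rho\|_\infty\le (M+\kappa^2 R)\lambda^{r-1/2}$ (hence $B_T\le (2M+\kappa^2R)\lambda^{r-1/2}$ once $n$ is large enough that $\lambda\le1$), and $\lambda\,\|(T_K+\lambda)^{-1/2}\bar u_T\|_{\cH_K}\le CR\lambda^{r}$, together with $\cN(\lambda)\le c_\nu\lambda^{-\nu}$. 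Substituting these into Theorem~\ref{theo:main-GD-partI} with $\lambda=(\eta T)^{-1}$ collapses the bound to
\[
C\log(24/\delta)\Big(R\lambda^{r}+\frac{M\lambda^{-\nu/2}}{\sqrt n}+\frac{R\lambda^{r-1/2}}{\sqrt n}+\frac{\lambda^{r-1}}{n}+\log(T)\,\frac{\sqrt{\eta T}\,\cB(1/\eta T)}{N^{\alpha/2}}\big(1+\varphi(\eta T)\big)\Big),
\]
with $\varphi(\eta T)=(\eta T)^{\frac12\max\{\nu,1-2r\}}$ and $\cB$ as in~\eqref{eq:B}. I would also note that ``$n$ sufficiently large'' together with the hypothesis $n\ge 64e\kappa^2\log^2(24/\delta)(\eta T)\log((\eta T)^\nu)$ implies Assumption~\ref{ass:n-big} (as in Corollary~\ref{prop:GD-bound2}), so that Theorem~\ref{theo:main-GD-partI} applies.

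Next I would plug in the two schedules. In case~1, $\eta_n T_n=(R^2n/M^2)^{1/(2r+\nu)}$ gives $\lambda_n=(M^2/(R^2n))^{1/(2r+\nu)}$, so the bias-type term is $R\lambda_n^{r}=R(M^2/(R^2n))^{r/(2r+\nu)}$, precisely the target rate. A bookkeeping of $n$-exponents shows the classical ``sample'' term $M\lambda_n^{-\nu/2}/\sqrt n$ has exactly the same order (this is the bias--variance balance dictating the choice of $\lambda_n$), and that $R\lambda_n^{r-1/2}/\sqrt n$ and $\lambda_n^{r-1}/n$ are of equal or smaller order thanks to $2r+\nu>1$. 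Case~2 is handled identically with $\lambda_n=M^2\log^K(n)/(R^2n)$: the bias term is $R(M^2\log^K(n)/(R^2n))^{r}$, and the extra $\log^K$, $K>1$, is exactly what makes the otherwise borderline lower-order terms subdominant when $2r+\nu\le1$ --- the same mechanism as in Corollary~\ref{cor:rates-first-stage-GD}(2).

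The main obstacle is the genuinely two-stage term $\log(T)\,\sqrt{\eta T}\,\cB(1/\eta T)(1+\varphi(\eta T))/N^{\alpha/2}$. Unlike in the well-specified case, $\cB(1/\eta T)$ does \emph{not} remain bounded when $r<1/2$: its summand $\sqrt{(\eta T)^{1+\nu}/n}$ grows with $n$. I would therefore expand $\sqrt{\eta T}\,\cB(1/\eta T)(1+\varphi(\eta T))$ into its monomials in $\eta T$, $n$, $N$, and require each to be $\lesssim R\lambda_n^{r}=R(\eta T)^{-r}$. Using $n=(M^2/R^2)(\eta T)^{2r+\nu}$ in case~1, the dominant monomial after division by $N^{\alpha/2}$ is $(\eta T)^{1+\nu}/(\sqrt n\,N^{\alpha/2})$; forcing it below the target rate yields $N^{\alpha/2}\gtrsim (R/M)(\eta T)^{1+\nu/2}$, i.e. $N_n\gtrsim \log^{2/\alpha}(n)\,(R^2n/M^2)^{(2+\nu)/(\alpha(2r+\nu))}$, the $\log^{2/\alpha}(n)$ absorbing the $\log(T)$ prefactor (note $\log(T)\asymp\log(n)$). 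All remaining monomials are then automatically controlled because $2r\le1\le1+\nu$ and $\nu+r>1/2$. The analogous computation in case~2 produces $N_n\gtrsim\log^{2/\alpha}(n)\,(R^2n/(M^2\log^K(n)))^{(2+\nu)/\alpha}$. Collecting the constants into $C$ and $C'$ finishes the proof.
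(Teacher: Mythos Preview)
Your proposal is correct and follows essentially the same route as the paper's proof: both invoke the second case of Theorem~\ref{theo:main-GD-partI}, reduce the first-stage contributions to $R\lambda^{r}$ via Lemma~\ref{lem:some-bounds} and Assumption~\ref{ass:eigenvalue} (the paper packages this step as Corollary~\ref{cor:rates-first-stage-GD}), and then determine the threshold on $N_n$ by forcing the second-stage term $\log(T)\sqrt{\eta T}\,\cB(1/\eta T)(1+\varphi(\eta T))/N^{\alpha/2}$ below the target rate. The only cosmetic difference is that the paper first bounds $\cB(1/\eta_nT_n)$ as a whole (showing it is $\lesssim (\eta_nT_n)^{(1-2r)/2}$ in case~1 and $\lesssim (\eta_nT_n)^{\nu/2}$ in case~2 under an intermediate constraint on $N_n$) and then multiplies, whereas you expand into individual monomials; both computations land on the same exponent $(2+\nu)/(\alpha(2r+\nu))$ respectively $(2+\nu)/\alpha$.
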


\begin{proof}[Proof of Corollary \ref{cor:rate-dist-reg-miss}]
Assume $0<r\leq 1/2$. 
\begin{enumerate}
\item Let  $2r+\nu > 1$ and $\eta_n T_n = \left( \frac{R^2}{M^2}n\right)^{\frac{1}{2r + \nu}}$. The first part of Corollary \ref{cor:rates-first-stage-GD} 
gives a rate for the first-stage GD of order 
\[  || S_K\bar f_T - f_\rho ||_{L^2 } \leq C \; \log (12/\delta ) R \left(\frac{M^2}{R^2n} \right)^{\frac{r}{2r + \nu}} \;, \]
provided $n$ is sufficiently large. It remains to bound the term  
\[  \log(T) \frac{ \sqrt{\eta T}\cB(1/\eta T) }{ N^{\frac{\alpha}{2}} } 
\left( 1  +  \varphi(\eta T)   \right)  \] 
from the second part of Theorem \ref{theo:main-GD-partI} for an appropriate choice on $N$. Note that in this case we have 
\[ \varphi(\eta T) = (\eta T)^{\frac{1}{2}\max\{\nu , 1-2r\}} =   (\eta T)^{\nu/2} \;. \]
Moreover, by the definition \eqref{eq:B}, the choice of $\eta_nT_n$ shows that for sufficiently large $n$ 
\begin{align*}
 \cB(1/\eta_n T_n) &\lesssim 1+ \frac{2\eta_n T_n}{n} + \sqrt{\frac{(\eta_n T_n)^{\nu+1}}{n}} +\frac{\eta_n T_n}{N^{\alpha/2}} \\
 &\lesssim 2+ \left( \frac{R^2}{M^2}n\right)^{\frac{1-2r}{2(2r+\nu)}} +  N^{-\alpha/2}\left( \frac{R^2}{M^2}n\right)^{\frac{1}{2r + \nu}} \;.
\end{align*}
Thus, letting $N_n\geq \left( \frac{R^2}{M^2}n\right)^{\frac{1+2r}{\alpha(2r + \nu)}}$ gives  for sufficiently large $n$ 
\begin{align*}
 \cB(1/\eta_n T_n) &\lesssim  2+ \left( \frac{R^2}{M^2}n\right)^{\frac{1-2r}{2(2r+\nu)}}\\
 &\lesssim \left( \frac{R^2}{M^2}n\right)^{\frac{1-2r}{2(2r+\nu)}}\;.
\end{align*}
Hence, give these choices, 
\begin{align*}
\log(T_n) \frac{ \sqrt{\eta_n T_n}\cB(1/\eta_n T_n) }{ N_n^{\frac{\alpha}{2}} } \left( 1  +  \varphi(\eta_n T_n)   \right) 
&\lesssim  \log(T_n)   N^{-\alpha/2}  \sqrt{\eta_n T_n}   \left( \frac{R^2}{M^2}n\right)^{\frac{1-2r}{2(2r+\nu)}}   (\eta_n T_n)^{\nu/2} \\
&\lesssim  N^{-\alpha/2}   \log(T_n)  \left( \frac{R^2}{M^2}n\right)^{\frac{2-2r+\nu}{2(2r+\nu)}} \;.
\end{align*}
Thus, if $N_n\geq \log^{2/\alpha}(n)\left( \frac{R^2}{M^2}n\right)^{\frac{2+\nu}{\alpha(2r + \nu)}}$ gives 
\[   N^{-\alpha/2}   \log(T_n)  \left( \frac{R^2}{M^2}n\right)^{\frac{2-2r+\nu}{2(2r+\nu)}} \lesssim   R \left(\frac{M^2}{R^2n} \right)^{\frac{r}{2r + \nu}} \;. \]
Hence, to obtain the given rate of convergence we need to choose 
\[ N_n\geq  \log^{2/\alpha}(n)  \left( \frac{R^2}{M^2}n\right)^{\beta} \;, \quad 
\beta = \max\left\{ \frac{1+2r}{\alpha(2r + \nu)} , \frac{2+\nu}{\alpha(2r + \nu)}\right\} = \frac{2+\nu}{\alpha(2r + \nu)} \;.  \]

\item Let $2r+\nu \leq 1$, $\eta_n T_n =  \frac{R^2n}{M^2 \log^K(n)}$ for some $K>1$. We again have to bound the expression 
\[  \log(T) \frac{ \sqrt{\eta T}\cB(1/\eta T) }{ N^{\frac{\alpha}{2}} } 
\left( 1  +  \varphi(\eta T)   \right) \]
for a suitable choice of $N$. Note that we have in this case 
\[ \varphi(\eta T) = (\eta T)^{\frac{1}{2}\max\{\nu , 1-2r\}} =   (\eta T)^{\frac{1}{2}-r} \;.  \] 
Moreover, for sufficiently large $n$ 
\begin{align*}
 \cB(1/\eta_n T_n) &\lesssim 1+ \frac{2\eta_n T_n}{n} + \sqrt{\frac{(\eta_n T_n)^{\nu+1}}{n}} +\frac{\eta_n T_n}{N^{\alpha/2}}  \\
 &\lesssim 2+ \left( \frac{R^2n}{M^2 \log^K(n)}\right)^{\nu/2}  + N^{-\alpha/2}  \frac{R^2n}{M^2 \log^K(n)}  \;.
\end{align*}
Thus, if $N_n \geq \left(\frac{R^2n}{M^2 \log^K(n)}\right)^{\frac{2-\nu}{\alpha}}$ we have 
\[ N_n^{-\alpha/2}  \frac{R^2n}{M^2 \log^K(n)} \lesssim  \left( \frac{R^2n}{M^2 \log^K(n)}\right)^{\nu/2} \]
and therefore 
\[  \cB(1/\eta_n T_n) \lesssim   \left( \frac{R^2n}{M^2 \log^K(n)}\right)^{\nu/2} \;. \]
We thus obtain for sufficiently large $n$ 
\begin{align*}
 \log(T_n) \frac{ \sqrt{\eta_n T_n}\cB(1/\eta_n T_n) }{ N_n^{\frac{\alpha}{2}} }\left( 1  +  \varphi(\eta_n T_n)   \right)  &\lesssim 
 \log(T_n) N_n^{-\alpha/2}  \left(\frac{R^2n}{M^2 \log^K(n)} \right)^{1+\nu/2-r} \;.
\end{align*}
Hence, with  $N_n \geq  \log^{2/\alpha}(n) \left(\frac{R^2n}{M^2 \log^K(n)}\right)^{\frac{2+\nu}{\alpha}}$ we find 
\[ \log(T_n) \frac{ \sqrt{\eta_n T_n}\cB(1/\eta_n T_n) }{ N_n^{\frac{\alpha}{2}} }\left( 1  +  \varphi(\eta_n T_n)   \right) 
\lesssim  \left( \frac{M^2 \log^K(n)}{R^2n} \right)^r \;. \]
\end{enumerate}
\end{proof}

\vspace{0.5cm}

\begin{corollary}[Rate of Convergence Second-Stage Tail-Ave GD; well-specified Case]
\label{cor:rate-dist-reg-well}
Suppose all Assumptions of Proposition \ref{prop:GD-usual} are satisfied. Assume additionally that $r\geq 1/2$ and 
\[  n \geq  64 e \kappa^2 \log^2(24/\delta) (\eta T)^{1+\nu} \;. \]
Let $\eta_n T_n =  \left( \frac{R^2}{M^2}n\right)^{\frac{1}{2r + \nu}}$ and 
$N_n \geq \log^{2/\alpha}(n)\left(\frac{R^2n}{M^2}\right)^{\frac{2r+1}{\alpha(2r+\nu)}}$. 
The excess risk for  second stage tail-averaged Gradient Descent satisfies in the well-specified case with probability not less than $1-\delta$ 
\[  \mbe_{\hat D | D}[  || S_K\bar{ \hat f }_{T_n} - f_\rho ||_{L^2 } ]\leq C \; \log (24/\delta ) R\left(\frac{M^2}{R^2 n}\right)^{\frac{r}{2r + \nu}} \;,\]
for some $C<\infty$, provided $n$ is sufficiently large. 
\end{corollary}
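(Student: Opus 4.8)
The plan is to base everything on the error decomposition
\[
\bar{\hat f}_{T_n} - f_\rho
= \underbrace{(\bar{\hat f}_{T_n} - \bar f_{T_n})}_{\text{2nd-stage GD variance}}
+ \underbrace{(\bar f_{T_n} - f_\rho)}_{\text{1st-stage GD error}},
\]
so that, applying $\|S_K\cdot\|_{L^2}$ together with the triangle inequality and linearity of $S_K$,
\begin{align*}
\mbe_{\hat D | D}[\|S_K\bar{\hat f}_{T_n} - f_\rho\|_{L^2}]
&\le \mbe_{\hat D | D}[\|S_K(\bar{\hat f}_{T_n} - \bar f_{T_n})\|_{L^2}] \\
&\quad + \|S_K\bar f_{T_n} - f_\rho\|_{L^2}.
\end{align*}
The second summand is bounded directly by Corollary \ref{cor:rates-first-stage-GD}: since $r\ge\frac12$ and $\nu\in(0,1]$ we have $2r+\nu>1$, so part~1 applies with the stated $\eta_n T_n=(R^2n/M^2)^{1/(2r+\nu)}$ and gives, for $n$ large enough, $\|S_K\bar f_{T_n}-f_\rho\|_{L^2}\lesssim \log(12/\delta)\,R\,(M^2/(R^2n))^{r/(2r+\nu)}$. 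The first summand is controlled by Proposition \ref{prop:GD-variance}, part~1 — applicable since $r\ge\frac12$ forces $f_\rho\in\mathrm{Ran}(S_K)$, the H\"older property (Assumption \ref{ass:lipschitz}) is in force, and the hypothesis on $n$ implies Assumption \ref{ass:n-big} (as recorded in Corollary \ref{prop:GD-bound2}, part~2) — yielding
\[
\mbe_{\hat D | D}[\|S_K(\bar{\hat f}_{T_n}-\bar f_{T_n})\|_{L^2}]
\lesssim \log(4/\delta)\,\log(T_n)\,\frac{\sqrt{\eta_n T_n}}{N_n^{\alpha/2}}\,\cB(1/\eta_n T_n),
\]
with $\cB$ as in \eqref{eq:B}. (Equivalently, both pieces can be read off Theorem \ref{theo:main-GD-partI} when $\frac12\le r\le1$ and Theorem \ref{theo:main-GD-partII} when $r\ge1$.)

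Next I would estimate $\cB(1/\eta_n T_n)$. With $\lambda_n:=(\eta_n T_n)^{-1}$, Assumption \ref{ass:eigenvalue} gives $\cN(\lambda_n)\le c_\nu(\eta_n T_n)^\nu$, and the sample-size hypothesis $n\ge 64e\kappa^2\log^2(24/\delta)(\eta_n T_n)^{1+\nu}$ makes $\eta_n T_n/n$ and $\sqrt{(\eta_n T_n)\,\cN(\lambda_n)/n}$ in \eqref{eq:B} bounded; hence $\cB(1/\eta_n T_n)\lesssim 1+\eta_n T_n\,N_n^{-\alpha/2}$. Since $r\ge\frac12$ gives $2r+1\ge2$, the prescribed $N_n\ge\log^{2/\alpha}(n)(R^2n/M^2)^{(2r+1)/(\alpha(2r+\nu))}$ already forces $N_n^{\alpha/2}\gtrsim\eta_n T_n=(R^2n/M^2)^{1/(2r+\nu)}$, so $\cB(1/\eta_n T_n)\lesssim 1$.

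It then remains to see that the same $N_n$ balances the variance term against the target rate. Plugging $\cB\lesssim1$ into the variance bound and using $\sqrt{\eta_n T_n}=(R^2n/M^2)^{1/(2(2r+\nu))}$ and $\log(T_n)\lesssim\log(n)$, the variance term is $\lesssim\log(4/\delta)\,\log(n)\,(R^2n/M^2)^{1/(2(2r+\nu))}N_n^{-\alpha/2}$; requiring this to be $\lesssim R(M^2/(R^2n))^{r/(2r+\nu)}$ is equivalent, once the $\log^{2/\alpha}(n)$ factor of $N_n$ absorbs $\log(n)$, to $N_n\gtrsim(R^2n/M^2)^{(2r+1)/(\alpha(2r+\nu))}$ — precisely the stated choice. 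Summing the first-stage and variance estimates and collecting all constants depending only on $\kappa,\gamma,L,\alpha,M,R$ into a single $C$ then gives the claim with probability at least $1-\delta$. For $r\ge1$ the argument is identical except that the first-stage term goes through the $\zeta=r>1$ branch (part~2 of Proposition \ref{prop:GD-usual}, used inside Corollary \ref{cor:rates-first-stage-GD}), while the variance estimate is unchanged because Proposition \ref{prop:GD-variance}, part~1, only uses $f_\rho\in\mathrm{Ran}(S_K)$. The main obstacle is the bookkeeping around $\cB(1/\eta_n T_n)$: one must check that the single exponent $(2r+1)/(\alpha(2r+\nu))$ simultaneously makes $\cB\lesssim1$ and makes the residual $\sqrt{\eta_n T_n}/N_n^{\alpha/2}$ no larger than the first-stage rate, and that the $\log^{2/\alpha}(n)$ prefactor on $N_n$ indeed absorbs the $\log(T_n)$ produced by Proposition \ref{prop:GD-variance}.
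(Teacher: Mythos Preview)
Your proposal is correct and follows essentially the same approach as the paper: the paper's own proof simply says to repeat the argument of Corollary~\ref{cor:rate-dist-reg-miss}, which is exactly the decomposition into the first-stage GD error (handled by Corollary~\ref{cor:rates-first-stage-GD}) plus the second-stage GD variance (handled by Proposition~\ref{prop:GD-variance}, via Theorems~\ref{theo:main-GD-partI}/\ref{theo:main-GD-partII}), followed by the verification that the stated $N_n$ makes $\cB(1/\eta_nT_n)\lesssim 1$ and balances the variance against the target rate. Your bookkeeping on the exponent $(2r+1)/(\alpha(2r+\nu))$ and the $\log^{2/\alpha}(n)$ prefactor is accurate.
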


\begin{proof}[Proof of Corollary \ref{cor:rate-dist-reg-well}]
The proof follows the same lines as the proof of Corollary \ref{cor:rate-dist-reg-miss} and can be obtained from 
standard calculations.
\end{proof}


\subsection{Additional Material}

\begin{lemma}
\label{lem:phi-bounds}
Let $\varphi$ be defined by \eqref{eq:small-phi}.  
\begin{enumerate}
\item Let $ \bar \varphi(\eta T)$ be defined by \eqref{eq:phi}. For some $C_r \in \mbr_+$ we have the bound 
\[ \bar \varphi(\eta T) \leq C_r \varphi(\eta T) \;. \] 


\item  For some $C'_r \in \mbr_+$ we have the bound 
\[  \frac{2 }{T} \sum_{t= \floor{T/2} +1}^T \sum_{s=0}^{t-2} \frac{ \varphi ( \eta s ) }{t-1-s} \leq C'_r \log(T) \varphi(\eta T) \;. \]

\item With $\lambda = (\eta T )^{-1}$ we have 
\[ \frac{2\eta }{T} \sum_{t= \floor{T/2} +1}^T  \sum_{s=0}^{t-2}  \left( \frac{1}{e \eta (t-1-s)} + \lambda \right) \leq 4\log(T) \;.  \]

\item With $\lambda = (\eta T )^{-1}$ we have for some $C'_r < \infty$
\[ \frac{2\eta }{T} \sum_{t= \floor{T/2} +1}^T \sum_{s=0}^{t-2} \left( \frac{1}{e \eta (t-1-s)} + \lambda \right) (1+ \varphi(\eta s))
   \leq 4\log(T) + C'_r \log(T)\varphi(\eta T)  \;. \]
\end{enumerate}
\end{lemma}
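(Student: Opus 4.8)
The plan is to reduce all four estimates to two elementary facts: the map $s \mapsto \varphi(\eta s)$ is nondecreasing, and partial sums of the harmonic series grow only logarithmically. Write $\beta := \tfrac12\max\{\nu, 1-2r\}$, so that $\varphi(\eta s) = (\eta s)^{\beta}$; since $\nu \in (0,1]$ we have $\beta \in (0,\tfrac12]$, whence $\varphi$ is nondecreasing on $[0,\infty)$, $\varphi(0)=0$, and in particular $\varphi(\eta s) \leq \varphi(\eta T)$ for every $0 \leq s \leq T$. I will also use that the outer index $t$ ranges over at most $T$ values and that $T \geq 3$, so $\log T \geq 1$.

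Part (1) is then immediate: each of the (at most $T$, in fact $\approx T/2$) summands satisfies $\varphi(\eta t) \leq \varphi(\eta T)$, so $\bar\varphi(\eta T) \leq \tfrac{2}{T}\cdot T\cdot\varphi(\eta T) = 2\varphi(\eta T)$. For Part (2) I would first substitute $k = t-1-s$ in the inner sum, turning it into $\sum_{k=1}^{t-1} k^{-1}\varphi(\eta(t-1-k))$, bound $\varphi(\eta(t-1-k)) \leq \varphi(\eta T)$, and use $\sum_{k=1}^{t-1} k^{-1} \leq 1 + \log(t-1) \leq 1 + \log T$; averaging over the $\leq T$ values of $t$ then yields $\tfrac{2}{T}\cdot T\cdot(1+\log T)\varphi(\eta T) = 2(1+\log T)\varphi(\eta T) \leq 4\log(T)\varphi(\eta T)$.

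Part (3) follows by splitting $\frac{1}{e\eta(t-1-s)}+\lambda$ inside the sum: after the same substitution the first term contributes at most $\frac{1}{e\eta}(1+\log T)$ to the inner sum, while the second contributes $(t-1)\lambda \leq T\lambda = 1/\eta$ by the choice $\lambda = (\eta T)^{-1}$. Hence $\tfrac{2\eta}{T}\sum_{t}[\cdots] \leq \tfrac{2\eta}{T}\cdot T\cdot\tfrac{1}{\eta}\bigl(\tfrac{1+\log T}{e}+1\bigr) = 2\bigl(\tfrac{1+\log T}{e}+1\bigr) \leq 4\log T$ once $T \geq 3$. For Part (4) I would expand $1+\varphi(\eta s)$: the ``$1$''-part is exactly the left side of Part~(3), hence $\leq 4\log T$; the ``$\varphi(\eta s)$''-part splits into $\tfrac{2}{eT}\sum_t\sum_s \frac{\varphi(\eta s)}{t-1-s}$, which is $\tfrac1e$ times the quantity of Part~(2) and so $\leq \tfrac{4}{e}\log(T)\varphi(\eta T)$, plus $\tfrac{2\eta\lambda}{T}\sum_t\sum_s\varphi(\eta s) \leq \tfrac{2\eta\lambda}{T}\cdot T\cdot T\,\varphi(\eta T) = 2\eta\lambda T\,\varphi(\eta T) = 2\varphi(\eta T) \leq 2\log(T)\varphi(\eta T)$, again using $\lambda = (\eta T)^{-1}$ and $\varphi(\eta s)\leq\varphi(\eta T)$. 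Collecting terms, the bound is $4\log T + \bigl(\tfrac{4}{e}+2\bigr)\log(T)\varphi(\eta T)$, i.e.\ the claim with $C_r' = \tfrac{4}{e}+2$.

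I do not expect a genuine obstacle here; the work is entirely bookkeeping. The only points requiring a little care are tracking the numerical constants (it suffices to use the crude count ``at most $T$ terms'' for the outer sum), verifying that every invoked partial sum is over a nonempty range once $T \geq 3$ (e.g.\ $t \geq \floor{T/2}+1 \geq 2$, so $\sum_{s=0}^{t-2}$ is nonempty), and noting that $\beta > 0$ — which relies on $\nu > 0$ — so that $\varphi$ is genuinely monotone.
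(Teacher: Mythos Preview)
Your proposal is correct and covers all four parts cleanly. The route you take differs slightly from the paper's, though. For Part~(1) the paper uses an integral comparison $\sum_{t=S}^T t^{\beta} \leq \int_S^{T+1} t^{\beta}\,dt \leq \frac{2^{\beta+1}}{\beta+1}T^{\beta+1}$ to obtain $C_r = 2^{\beta+2}/(\beta+1)$, and for Part~(2) it invokes the auxiliary estimate $\sum_{s=0}^{t-2} s^{\beta}/(t-1-s) \leq 4\,t^{\beta}\log t$ (stated without proof) before feeding the result back into Part~(1). You instead exploit only the monotonicity of $\varphi(\eta\,\cdot)$, bounding every $\varphi(\eta s)$ by $\varphi(\eta T)$ and reducing the inner sum to a bare harmonic sum. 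This is more elementary, yields slightly sharper constants (e.g.\ $C_r=2$ in Part~(1)), and in fact works for any nondecreasing $\varphi$, not just power functions; the paper's argument, by contrast, is tied to the explicit form $\varphi(\eta s)=(\eta s)^{\beta}$. Parts~(3) and~(4) are handled essentially identically in both approaches. The only minor point to watch is that your final numerical check $2\bigl(\tfrac{1+\log T}{e}+1\bigr)\leq 4\log T$ in Part~(3) does need $T\geq 3$, which is the standing assumption where the lemma is applied.
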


\begin{proof}[Proof of Lemma \ref{lem:phi-bounds}]
\begin{enumerate}
\item Here we use the fact that for any $\alpha >0$, $1 \leq S \leq T$ 
\[ \sum_{t=S}^T t^\alpha \leq \int_{S}^{T+1} t^\alpha \; dt \leq \frac{2^{\alpha+1}}{\alpha+1}T^{\alpha +1} \;.\]
Hence,
\[ \frac{2}{T}  \sum_{t=\floor{T/2} + 1}^T t^\alpha \leq \frac{2^{\alpha+2}}{\alpha+1}T^{\alpha } \;.\]

\item Observe that for $\alpha \geq 0$
\[ \sum_{s=0}^{t-1}\frac{s^\alpha}{t-1-s} \leq 4 t^\alpha \log(t) \;. \]
Thus, by the first part of the Lemma we find 
\begin{align*}
 \frac{2 }{T} \sum_{t= \floor{T/2} +1}^T \sum_{s=0}^{t-2} \frac{ \varphi ( \eta s ) }{t-1-s}  &\leq \frac{8 C_r}{T} \sum_{t= \floor{T/2} +1}^T \log(t)\varphi ( \eta t ) \\
 &\leq 4 C_r \log(T) \bar \varphi ( \eta T ) \\
 &\leq C'_r \log(T) \varphi ( \eta T )  \;.
\end{align*} 

\item 
Note  that for any $t\geq 3$
\[   \sum_{s=0}^{t-2}   \frac{1}{ (t-1-s)} \leq 4 \; log(t) \]
\begin{align*}
\frac{2\eta }{T} \sum_{t= \floor{T/2} +1}^T  \sum_{s=0}^{t-2}  \left( \frac{1}{e \eta (t-1-s)} + \lambda \right) &= 
\frac{2\lambda \eta }{T} \sum_{t= \floor{T/2} +1}^T  \sum_{s=0}^{t-2} 1    + 
\frac{2}{eT} \sum_{t= \floor{T/2} +1}^T  \sum_{s=0}^{t-2} \frac{1}{t-1-s} \nonumber \\
&\leq \lambda \eta T + \frac{8}{eT} \sum_{t= \floor{T/2} +1}^T  log(t) \nonumber \\
&\leq \lambda \eta T + 2\log(T) \;. 
\end{align*}
The result follows by setting $\lambda = (\eta T )^{-1}$ and with $1\leq 2\log(T)$.
 
\item 
This follows immediately from the other parts of the Lemma. 
\end{enumerate}
\end{proof}



\section{Results for Tail-Averaged SGD}
\label{app:results-SGD-second}

This section is devoted to providing our final error bound for the second-stage SGD algorithm. Here, we write

\begin{equation}
\label{eq:error-final}
\mbe_{\hat D | D}[ || S_K\bar h_T - f_\rho ||_{L^2 }] \leq \underbrace{\mbe_{\hat D | D}[ || S_K\bar f_T - f_\rho ||_{L^2 }]}_{2. \;\; stage\;\; GD} + 
\underbrace{\mbe_{\hat D | D} [ S_K(   \bar{ \hat h }_T  -  \bar{ \hat f }_T ) ||_{L^2 } ] }_{2. \;\; stage\;\; SGD \;\; variance}  \;.
\end{equation}

\subsection{Bounding Second Stage SGD Variance}

A short calculation shows that the second stage SGD variance can be rewritten as 
\begin{equation*}
\hat h_{t+1} - \hat f_{t+1} = (Id - \eta \hat T_{t+1})(\hat h_{t} - \hat f_{t}) + \eta  \hat \zeta _{t+1} 
\end{equation*}
where we set $J_t:=\{ b(t-1)+1 , ..., bt\}$ and  define  
\[  \hat T_{t} := \frac{1}{b}\sum_{i \in J_t} K_{\mu_{\hat x_{j_i}}} \otimes  K_{\mu_{\hat x_{j_i}}} \;, 
\quad \hat g_t :=  \frac{1}{b}\sum_{i \in J_t}  y_{j_i}  K_{\mu_{\hat x_{j_i}}}\]
and 
\[  \hat \zeta _{t}  := (T_{\hat \bx} - \hat T_{t})\hat f_t  + ( \hat g_t - g_{\hat \bx}) \;. \]
This gives 
\[ \mbe_{J_t} [\hat \zeta _{t} | \hat D , D ] = 0 \]
and by Lemma 6 in \cite{mucke2019beating} we find 
\begin{equation}
\label{eq:noise:bound}
 \mbe_{J_t} [\hat \zeta _{t} \otimes \hat \zeta _{t}| \hat D , D ] \preceq 
\frac{1}{b} \left( \kappa^4 \sup_{ t }||\hat f_t||_{\cH_K}^2 + M^2 \right) T_{\hat \bx} \;.
\end{equation}

\vspace{0.3cm} 

As a preliminary step we need to bound the norm of the second stage GD updates.

\vspace{0.3cm}

\begin{proposition}
\label{lem:boundhatft}
Suppose Assumptions \ref{ass:bounded2},  \ref{ass:bounded}, \ref{ass:lipschitz}, \ref{ass:source} and \ref{ass:eigenvalue} are satisfied and let
$\eta < 1/\kappa^2$. 
\begin{enumerate}
\item {\bf If $f_\rho \in Ran(S_K)$: } Assume that 
\begin{equation}
\label{eq:nlarge2}
n \geq   64 e \kappa^2 \log^2(12/\delta) (\eta t)^{1+\nu}  \;.
\end{equation}
Then 
\[ \mbe_{\hat D | D}[ ||\hat f_{t+1}||_{\cH_K}^2  ]  \leq  C_{\alpha,\kappa, M,R} \left( \frac{\eta^2 (t+1)^2}{N^{\alpha}}  + 1 \right) \;, \]
with probability at least $1-\delta$ w.r.t. the data $D$, for some $ C_{\alpha,\kappa, M,R}< \infty$. 

\item {\bf If $f_\rho \not \in Ran(S_K)$: } 
Assume 
\[ n \geq   64 \kappa^2 \log(12/\delta) (\eta t) \log((\eta t)^\nu)  \;.  \]
With probability at least $1-\delta$ w.r.t. the data $D$ we have 
\[   \mbe_{\hat D | D}[ ||\hat f_{t+1}||_{\cH_K}^2  ]  \leq   C'_{\alpha,\kappa, M,R} \frac{\eta^2 (t+1)^2}{N^{\alpha}} \left(  1  +  \varphi^2(\eta t) \right)\;, \]
for some $ C'_{\alpha,\kappa, M,R}< \infty$  and where 
\begin{equation}
\label{eq:phi22}
\varphi (\eta t) = (\eta t)^{\frac{1}{2}\max\{ \nu , 1-2r \}}   \;. 
\end{equation}  
\end{enumerate}
\end{proposition}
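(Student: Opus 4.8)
The plan is to reduce everything to the first-stage gradient iterates $f_s$, whose $\cH_K$-norm is already controlled by Corollary \ref{prop:GD-bound2}, plus a perturbation term that measures the cost of replacing $T_\bx,g_\bz$ by their second-stage empirical versions $T_{\hat \bx},g_{\hat \bz}$. Concretely, I would start from
\[ ||\hat f_{t+1}||_{\cH_K}^2 \le 2\,||\hat f_{t+1}-f_{t+1}||_{\cH_K}^2 + 2\,||f_{t+1}||_{\cH_K}^2 , \]
so it suffices to bound $\mbe_{\hat D | D}[\,||\hat f_{t+1}-f_{t+1}||_{\cH_K}^2\,]$ and to invoke Corollary \ref{prop:GD-bound2} for the last term. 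This is exactly why the two cases of the Proposition mirror the two sample-size hypotheses there: in the well-specified regime $r\ge 1/2$ one uses parts 2--3 of Corollary \ref{prop:GD-bound2} under \eqref{eq:nlarge2} (which is \eqref{eq:eff-dim-wellx}) to get $||f_{t+1}||_{\cH_K}\le C_{\kappa,M,R}$, while in the mis-specified regime $0<r\le 1/2$ one uses part 1 under \eqref{eq:eff-mis-wellx} to get $||f_{t+1}||_{\cH_K}\le C_{\kappa,M,R}\,\varphi(\eta(t+1))\le C'_{\kappa,M,R}\,\varphi(\eta t)$, using that $\varphi$ from \eqref{eq:phi22} is increasing.

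For the perturbation term I would use the explicit representation \eqref{eq:diffx}, namely $\hat f_{t+1}-f_{t+1}=\eta\sum_{s=0}^{t}(Id-\eta T_{\hat \bx})^{t-s}\hat \xi_s$ with $\hat \xi_s=(T_\bx-T_{\hat \bx})f_s+(g_{\hat \bz}-g_\bz)$. Since $\eta<1/\kappa^2$ and $||T_{\hat \bx}||\le\kappa^2$ by Assumption \ref{ass:bounded}, every factor $(Id-\eta T_{\hat \bx})^{t-s}$ has operator norm $\le 1$, so the triangle inequality followed by Cauchy--Schwarz gives $||\hat f_{t+1}-f_{t+1}||_{\cH_K}^2\le\eta^2(t+1)\sum_{s=0}^{t}||\hat \xi_s||_{\cH_K}^2$. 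Crucially, $f_s$ is a deterministic function of the first-stage data $D$, hence a constant under $\mbe_{\hat D | D}$, so
\[ \mbe_{\hat D | D}[\,||\hat \xi_s||_{\cH_K}^2\,]\le 2\,||f_s||_{\cH_K}^2\,\mbe_{\hat D | D}[\,||T_{\hat \bx}-T_\bx||^2\,]+2\,\mbe_{\hat D | D}[\,||g_{\hat \bz}-g_\bz||_{\cH_K}^2\,]\le\frac{c_\alpha\gamma^{2\alpha}L^2}{N^\alpha}\big(2\kappa^2||f_s||_{\cH_K}^2+2M^2\big), \]
by Lemma \ref{lem:2} and Lemma \ref{lem:1}. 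Summing over $0\le s\le t$ yields
\[ \mbe_{\hat D | D}[\,||\hat f_{t+1}-f_{t+1}||_{\cH_K}^2\,]\le\frac{C_{\alpha,\kappa,M}\,\eta^2(t+1)^2}{N^\alpha}\Big(\max_{0\le s\le t}||f_s||_{\cH_K}^2+M^2\Big). \]

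It then remains to control $\max_{0\le s\le t}||f_s||_{\cH_K}$ on a single event of probability $\ge 1-\delta$. The point is that the conditions \eqref{eq:eff-dim-wellx} and \eqref{eq:eff-mis-wellx} are increasing in $\eta s$, so assuming them at $\eta t$ makes them hold for every $s\le t$; moreover the high-probability estimates underlying Corollary \ref{prop:GD-bound2} (essentially bounds on $||(T_K+\lambda Id)^{-1/2}(T_K-T_\bx)||$ and on $||(T_K+\lambda Id)^{1/2}(T_\bx+\lambda Id)^{-1/2}||$ with $\lambda=1/(\eta s)$) are monotone in $\lambda$, so the event for $s=t$ (the smallest $\lambda$) delivers the corresponding bounds for all $s\le t$ at once. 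Plugging $\max_{s\le t}||f_s||_{\cH_K}\le C_{\kappa,M,R}$ in the well-specified case, resp. $\le C_{\kappa,M,R}\,\varphi(\eta t)$ in the mis-specified case, into the last display and combining with the bound on $2\,||f_{t+1}||_{\cH_K}^2$ gives the two claimed inequalities.

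I expect the main obstacle to be purely organisational rather than conceptual: correctly matching each smoothness regime to the right sample-size hypothesis and the right norm bound for $f_s$ coming out of Corollary \ref{prop:GD-bound2}, and upgrading the pointwise-in-$s$ high-probability statements there to one bound that is uniform over $s\le t$ without paying a union-bound factor. The only genuinely substantive inputs — Lemmas \ref{lem:1}--\ref{lem:2} and Corollary \ref{prop:GD-bound2} — are already established, so no fresh concentration argument is needed; the $(t+1)^2$ factor is simply the price of the crude Cauchy--Schwarz step, and the $\varphi^2(\eta t)$ in the mis-specified bound is inherited verbatim from $\max_{s\le t}||f_s||_{\cH_K}^2$.
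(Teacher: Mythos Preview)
Your proposal is correct and follows essentially the same approach as the paper: the same splitting $||\hat f_{t+1}||_{\cH_K}^2\le 2||\hat f_{t+1}-f_{t+1}||_{\cH_K}^2+2||f_{t+1}||_{\cH_K}^2$, the same use of \eqref{eq:diffx} with the contraction $||Id-\eta T_{\hat\bx}||\le 1$ and Cauchy--Schwarz to obtain the factor $\eta^2(t+1)\sum_{s}$, the same invocation of Lemmas \ref{lem:1}--\ref{lem:2} for the noise, and Corollary \ref{prop:GD-bound2} for $||f_s||_{\cH_K}$. Your explicit discussion of why the high-probability event can be taken uniformly in $s\le t$ (via monotonicity of the sample-size conditions and of the underlying operator bounds in $\lambda$) is a point the paper glosses over, so your write-up is in that respect slightly more careful.
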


\vspace{0.3cm}

\begin{proof}[Proof of Proposition \ref{lem:boundhatft}]
We split 
\begin{align}
\label{eq:split}
 ||\hat f_t||_{\cH_K}^2 &\leq 2|| \hat f_t - f_t ||_{\cH_K}^2 + 2||f_t  ||_{\cH_K}^2 \;.
\end{align}
According to \eqref{eq:diffx} we have 
\[ \hat f_{t+1} - f_{t+1} = \eta\sum_{s=0}^t (Id - \eta T_{\hat \bx} )^{t-s} \hat \xi_s  \;,  \]
where $ \xi_s $ is defined in \eqref{eq:noise}. We proceed by using convexity to obtain 
\begin{align*}
||\hat f_{t+1} - f_{t+1}||^2 &= \eta^2 (t+1)^2 \left| \left| \frac{1}{t+1} \sum_{s=0}^t (Id - \eta T_{\hat \bx} )^{t-s} \hat \xi_s    \right|\right|_{\cH_K}^2 \\
&\leq \eta^2 (t+1)  \sum_{s=0}^t \left| \left|   (Id - \eta T_{\hat \bx} )^{t-s} \hat \xi_s    \right|\right|_{\cH_K}^2 \\
&\leq \eta^2 (t+1)\sum_{s=0}^t || \hat \xi_s  ||_{\cH_K}^2 \;.  
\end{align*}
For bounding the noise variables we follow the proof of Proposition \ref{prop:GD-variance} and distinguish between the two cases: 
\begin{itemize}
\item {\bf $Pf_\rho \in Ran(S_K)$: } By Lemma \ref{lem:2}, Lemma \ref{lem:3} and Corollary \ref{prop:GD-bound2} we have 
\[   \mbe_{\hat D | D}[ ||\hat \xi_s ||^2_{\cH_K} ] \leq \frac{c_\alpha}{N^{\alpha}} \;, \]
for some $c_\alpha < \infty$ and holding with probability at least $1-\delta$ w.r.t. the data $D$, provided \eqref{eq:nlarge2} is satisfied. 
Thus, 
\begin{equation*}
\mbe_{\hat D | D}[ ||\hat f_{t+1} - f_{t+1}||_{\cH_K}^2  ] \leq c_\alpha \frac{\eta^2 (t+1)^2}{N^{\alpha}} 
\end{equation*}
in this case. Combining with \eqref{eq:split} and  Corollary \ref{prop:GD-bound2} once more  leads to 
\[ \mbe_{\hat D | D}[ ||\hat f_{t+1}||_{\cH_K}^2  ]  \leq  \tilde c_\alpha \left( \frac{\eta^2 (t+1)^2}{N^{\alpha}}  + 1 \right) \;, \]
with probability at least $1-\delta$ w.r.t. the data $D$, for some $\tilde c_\alpha < \infty$.

\item {\bf $Pf_\rho \not \in Ran(S_K)$: }  From Corollary \ref{prop:GD-bound2}, Lemma \ref{lem:1} and  Lemma \ref{lem:2} we obtain with 
probability at least $1-\delta$ with respect to the data $D$ 
\[ \mbe_{\hat D | D}[ ||\hat \xi_s ||^2_{\cH_K} ] \leq  \log^2(6/\delta)\frac{ \tilde c'_\alpha }{ N^{\alpha} } ( 1 + \varphi(\eta s)  )^2 \;, \] 
\end{itemize}
for some $c'_\alpha < \infty$. 
Thus, 
\begin{equation*}
\mbe_{\hat D | D}[ ||\hat f_{t+1} - f_{t+1}||_{\cH_K} ^2  ] \leq 2\tilde c'_\alpha  \frac{\eta^2 (t+1)}{N^{\alpha}} \sum_{s=0}^t (1 + \varphi^2(\eta s)) 
\end{equation*}
and by \eqref{eq:split}, since $\varphi$ is non-decreasing in $s$
\begin{align*}
 \mbe_{\hat D | D}[ ||\hat f_{t+1}||_{\cH_K}^2  ] &\leq 2\tilde c'_\alpha  \frac{\eta^2 (t+1)}{N^{\alpha}} \sum_{s=0}^t (1 + \varphi^2(\eta s))  
  +  c_{\kappa, M,R} \varphi^2(\eta t) \\
&\leq \tilde c''_{\alpha,\kappa, M,R}  \frac{\eta^2 (t+1)^2}{N^{\alpha}} \left(  1  +  \frac{1}{t+1} \sum_{s=0}^t \varphi^2(\eta s) \right)   \\
&\leq   \tilde c''_{\alpha,\kappa, M,R}  \frac{\eta^2 (t+1)^2}{N^{\alpha}} \left(  1  +  \varphi^2(\eta t) \right)   \;,  
\end{align*}  
for some $\tilde c''_{\alpha,\kappa, M,R}< \infty$ and with 
probability at least $1-\delta$ with respect to the data $D$. 
\end{proof}

\vspace{0.5cm}

\begin{proposition}[Second Stage SGD Variance]
\label{prop:SGD-var-second-stage}
Suppose Assumptions \ref{ass:bounded} and \ref{ass:lipschitz} are satisfied and let 
$\eta \kappa^2 < 1/4$, $\nu \in (0,1]$. Assume further that $Trace[T^\nu_{\hat \bx}] \leq C_\nu$ almost surely for some $C_\nu \in \mbr_+$. 
The second stage SGD variance satisfies with probability at least $1-\delta$ w.r.t. the data $D$
\begin{align*}
 \mbe_{\hat D | D} [ ||T_K^{1/2}(   \bar{ \hat h }_T  -  \bar{ \hat f }_T ) ||_{\cH_K} ]  &\leq  
\tilde C_{\nu, \kappa, M} 6\log(4/\delta)\;  \sqrt{ \frac{\eta}{b} (\eta T)^{\nu -1}}\;
  \left( 1 + \mbe_{\hat D | D}\left[   ||\hat f_T||_{\cH_K}^2  \right]^{1/2} \right)  \\
&    \left( \frac{\eta T}{\sqrt n} + 
 c_\alpha \gamma^{\alpha}LM \frac{\sqrt{\eta T}}{ N^{\frac{\alpha}{2}}}  + 1 \right)^{1/2}   \;,
\end{align*} 
for some $\tilde C_{\nu, \kappa, M} < \infty$.
\end{proposition}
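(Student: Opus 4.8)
The plan is to control the gap between the two second-stage recursions directly. Write $e_t := \hat h_t - \hat f_t$ with $\hat h_0 = \hat f_0 = 0$; the recursion displayed just before \eqref{eq:noise:bound} gives $e_{t+1} = (Id - \eta\hat T_{t+1})e_t + \eta\hat\zeta_{t+1}$, hence $\bar{\hat h}_T - \bar{\hat f}_T = \frac{2}{T}\sum_{t=\floor{T/2}+1}^T e_t$. Set $\lambda = (\eta T)^{-1}$ and $T_{\hat\bx,\lambda} := T_{\hat\bx} + \lambda Id$. The first move is the factorization
\[ \|T_K^{1/2}(\bar{\hat h}_T - \bar{\hat f}_T)\|_{\cH_K} \;\leq\; \|T_K^{1/2}T_{\hat\bx,\lambda}^{-1/2}\| \cdot \|T_{\hat\bx,\lambda}^{1/2}(\bar{\hat h}_T - \bar{\hat f}_T)\|_{\cH_K}. \]
The first factor depends on $\hat D$ only, so conditioning on $(\hat D, D)$, taking the expectation over the mini-batch draws first and then applying Cauchy--Schwarz (and Jensen) over $\hat D$ reduces the statement to bounding $\mbe_{\hat D | D}[\|T_K^{1/2}T_{\hat\bx,\lambda}^{-1/2}\|^2]$ and $\mbe_{\hat D | D}[\|T_{\hat\bx,\lambda}^{1/2}(\bar{\hat h}_T - \bar{\hat f}_T)\|_{\cH_K}^2]$ and multiplying their square roots.

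For the operator factor I would use that $T_K T_{\hat\bx,\lambda}^{-1}$ is conjugate (via $T_{\hat\bx,\lambda}^{1/2}$) to the positive operator $T_{\hat\bx,\lambda}^{-1/2}T_K T_{\hat\bx,\lambda}^{-1/2}$, so they share the same spectrum and
\[ \|T_K^{1/2}T_{\hat\bx,\lambda}^{-1/2}\|^2 \;=\; \|T_{\hat\bx,\lambda}^{-1/2}T_K T_{\hat\bx,\lambda}^{-1/2}\| \;\leq\; \|T_K T_{\hat\bx,\lambda}^{-1}\|. \]
By the second part of Lemma \ref{lem:0} with $\lambda = (\eta T)^{-1}$, the quantity $\mbe_{\hat D | D}[\|T_K T_{\hat\bx,\lambda}^{-1}\|]$ is at most $6\log(2/\delta)\frac{\eta T}{\sqrt n} + c_\alpha\gamma^\alpha LM\frac{\sqrt{\eta T}}{N^{\alpha/2}} + 1$ with probability at least $1-\delta$ w.r.t.\ $D$, i.e.\ exactly the square of the last parenthesis in the claim.

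For the variance factor I would condition on $(\hat D, D)$: then $T_{\hat\bx}$ is frozen and the recursion for $e_t$ is precisely the one studied for one-stage mini-batch tail-averaged SGD in \cite{mucke2019beating} --- zero initialization, covariance $T_{\hat\bx}$, step size $\eta$ with $\eta\kappa^2 < 1/4$, and a conditionally centred noise with $\mbe[\hat\zeta_t\otimes\hat\zeta_t\mid\cdot] \preceq \frac{\sigma^2}{b}T_{\hat\bx}$, $\sigma^2 := \kappa^4\sup_{t\leq T}\|\hat f_t\|_{\cH_K}^2 + M^2$, by \eqref{eq:noise:bound}. Invoking the tail-averaged variance estimate of that reference in the weighted norm $\|T_{\hat\bx,\lambda}^{1/2}\cdot\|$ with $\lambda = (\eta T)^{-1}$ yields a bound of order $\frac{\sigma^2}{bT}\cN_{\hat\bx}(\lambda)$, where $\cN_{\hat\bx}(\lambda) := \tr(T_{\hat\bx}T_{\hat\bx,\lambda}^{-1})$; the almost-sure hypothesis $\tr[T_{\hat\bx}^\nu]\leq C_\nu$ together with $\frac{x}{x+\lambda}\leq(x/\lambda)^\nu$ gives $\cN_{\hat\bx}(\lambda)\leq C_\nu(\eta T)^\nu$, so that, since $\frac{(\eta T)^\nu}{T} = \eta(\eta T)^{\nu-1}$, the batch-conditional expectation of $\|T_{\hat\bx,\lambda}^{1/2}(\bar{\hat h}_T - \bar{\hat f}_T)\|_{\cH_K}^2$ is $\lesssim C_\nu\frac{\eta\sigma^2}{b}(\eta T)^{\nu-1}$. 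Taking $\mbe_{\hat D | D}$, using $\sqrt{\kappa^4 a + M^2}\leq\kappa^2\sqrt a + M$, and bounding $\mbe_{\hat D | D}[\sup_{t\leq T}\|\hat f_t\|_{\cH_K}^2]$ by a constant times $1 + \mbe_{\hat D | D}[\|\hat f_T\|_{\cH_K}^2]$ via Proposition \ref{lem:boundhatft} (its per-$t$ bounds being nondecreasing in $t$) gives $\mbe_{\hat D | D}[\|T_{\hat\bx,\lambda}^{1/2}(\bar{\hat h}_T - \bar{\hat f}_T)\|_{\cH_K}^2] \lesssim \frac{\eta}{b}(\eta T)^{\nu-1}\big(1 + \mbe_{\hat D | D}[\|\hat f_T\|_{\cH_K}^2]\big)$ with an implied constant depending on $\nu,\kappa,M$. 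Multiplying the square roots of the two bounds and merging the logarithmic terms (a union bound over the two high-probability events yields $6\log(4/\delta)$) then produces the claimed inequality.

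The genuinely substantial ingredient is the one-stage tail-averaged mini-batch SGD variance estimate imported in the $T_{\hat\bx,\lambda}^{1/2}$-weighted norm: its proof must absorb the multiplicative fluctuations of the mini-batch covariance $\hat T_{t+1}$ around $T_{\hat\bx}$, which is where $\eta\kappa^2 < 1/4$ enters and what produces the $\frac{\eta}{b}(\eta T)^{\nu-1}\cN_{\hat\bx}$-type scaling. The passage from $\mbe[\sup_{t\leq T}\|\hat f_t\|_{\cH_K}^2]$ to $\mbe[\|\hat f_T\|_{\cH_K}^2]$ also needs the short monotonicity argument through Proposition \ref{lem:boundhatft}; the remaining steps use only the operators of Section \ref{sec:notation} and Lemma \ref{lem:0}.
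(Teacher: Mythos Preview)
Your approach is essentially the paper's: the same H\"older/Cauchy--Schwarz splitting via $(T_{\hat\bx}+\lambda)^{1/2}$, the same use of Lemma \ref{lem:0} for the operator factor (the paper writes the pointwise inequality $\|T_K^{1/2}(T_{\hat\bx}+\lambda)^{-1/2}\|^2 \leq \|T_K(T_{\hat\bx}+\lambda)^{-1}\|$ and then applies that lemma exactly as you do), and the same import of Proposition~5 from \cite{mucke2019beating} for the variance factor, with the almost-sure assumption $\mathrm{Trace}[T_{\hat\bx}^\nu]\leq C_\nu$ replacing the effective-dimension term. The one place you are more explicit than the paper is the passage from $\sup_{t\leq T}\|\hat f_t\|$ (which is what \eqref{eq:noise:bound} actually provides) to $\|\hat f_T\|$ (which is what the statement records): the paper simply writes $\|\hat f_T\|$ at this step without comment, while your route through Proposition \ref{lem:boundhatft} would technically import that proposition's extra assumptions and needs a touch more care (per-$t$ bounds on $\mbe[\|\hat f_t\|^2]$ do not by themselves control $\mbe[\sup_t\|\hat f_t\|^2]$) --- but this is a minor wrinkle shared by both arguments and is harmless at the level of the downstream corollaries.
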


\vspace{0.3cm}

\begin{proof}[Proof of Proposition \ref{prop:SGD-var-second-stage}]
H\"older's inequality allows us to write for any $\lambda >0$
\begin{align}
\label{eq:2}
  \mbe_{\hat D | D} [ ||T_K^{1/2}(   \bar{ \hat h }_T  -  \bar{ \hat f }_T ) ||_{\cH_K} ]  
&\leq   \left[ \mbe_{\hat D | D} [ ||T_K^{1/2} (T_{\hat \bx} + \lambda)^{-1/2}  ||^2 ] \right]^{\frac{1}{2}} 
   \left[ \mbe_{\hat D | D} [ ||(T_{\hat \bx} + \lambda)^{1/2}(   \bar{ \hat h }_T  -  \bar{ \hat f }_T ) ||_{\cH_K}^2 ] \right]^{\frac{1}{2}} \;.
\end{align}
For bounding the first term let us firstly observe that by Lemma \ref{lem:0} with probability at least $1-\delta$ 
\begin{align}
\label{eq:1}
\mbe_{\hat D | D} [ ||T_K^{1/2} (T_{\hat \bx} + \lambda)^{-1/2} ||^2 ]  &\leq ||T_K (T_{\hat \bx} + \lambda)^{-1} || \nonumber \\
&\leq   6\log(2/\delta)\frac{1}{\lambda \sqrt n} +  \frac{c_\alpha \gamma^{\alpha}LM}{\sqrt{\lam} N^{\frac{\alpha}{2}}}  + 1 \;. 
\end{align}
For bounding the second term we write 
\[ ||(T_{\hat \bx} + \lambda)^{1/2}(   \bar{ \hat h }_T  -  \bar{ \hat f }_T ) ||_{\cH_K}^2 =  ||T_{\hat \bx} ^{1/2}(   \bar{ \hat h }_T  -  \bar{ \hat f }_T ) ||_{\cH_K}^2 + 
 \lambda ||   \bar{ \hat h }_T  -  \bar{ \hat f }_T  ||_{\cH_K}^2 \;. \]
Applying Proposition 5 in \cite{mucke2019beating} with 
$\sigma^2 =  \frac{1}{b} \mbe_{\hat D | D} \left[ \kappa^4||\hat f_T||^2 + M^2 \right]$ then gives 
with $\lambda = (\eta T)^{-1}$ and for any $\nu \in (0,1]$
\begin{align*}
  \mbe_{\hat D | D} [ ||(T_{\hat \bx} + \lambda)^{1/2}(   \bar{ \hat h }_T  -  \bar{ \hat f }_T ) ||_{\cH_K}^2 ] &\leq   
C \frac{\eta}{b} (\eta T)^{\nu -1}  \mbe_{\hat D | D} \left[  \left( \kappa^4 ||\hat f_T||^2 + M^2 \right) Trace[T^\nu_{\hat \bx}] \right] \\
&\leq \tilde C_{\nu, \kappa, M} \frac{\eta}{b} (\eta T)^{\nu -1} \left( \mbe_{\hat D | D}[||\hat f_T||_{\cH_K}^2]  + 1\right) \;,
\end{align*}  
for some $\tilde C_{\nu, \kappa, M}<\infty$. Combining this with \eqref{eq:1} and \eqref{eq:2} finally leads to the result. 
\end{proof}

\vspace{0.3cm}

From  Proposition \ref{lem:boundhatft} and Proposition \ref{prop:SGD-var-second-stage} we immediately obtain: 

\vspace{0.3cm}

\begin{corollary}[Second Stage SGD Variance]
\label{cor:SGD-var-second-stage}
In addition to the Assumptions from Proposition \ref{prop:SGD-var-second-stage} suppose that Assumptions \ref{ass:source}, \ref{ass:eigenvalue}  are satisfied. 
\begin{enumerate}
\item {\bf If $Pf_\rho \in Ran(S_K)$: } 
Assume that 
\[ n \geq   64 e \kappa^2 \log^2(12/\delta) (\eta t)^{1+\nu}  \;. \]
Then 
\[ \mbe_{\hat D | D} [ ||T_K^{1/2}(   \bar{ \hat h }_T  -  \bar{ \hat f }_T ) ||_{\cH_K}  ]   \leq 
 C_{\nu, \kappa,\gamma,  M,\alpha, L} \; \left(  1 + \frac{\eta T}{\sqrt n} +  \frac{\sqrt{\eta T}}{ N^{\frac{\alpha}{2}}}   \right)^{1/2} 
\sqrt{ \frac{\eta}{b} (\eta T)^{\nu -1}}\;
  \left( 1 +   \frac{\eta T}{N^{\frac{\alpha}{2}}}  \right)  \;, \]
with probability at least $1-\delta$ w.r.t. the data $D$, for some $C_{\nu, \kappa,\gamma,  M,\alpha, L} < \infty$. 

\item {\bf If $Pf_\rho \not \in Ran(S_K)$: } 
Assume that
\[  n \geq  64 e \kappa^2 \log^2(24/\delta) (\eta t)\log((\eta T)^\nu) \;. \]
Then, with probability at least $1-\delta$ w.r.t. the data $D$ we have 
\[   \mbe_{\hat D | D} [ ||T_K^{1/2}(   \bar{ \hat h }_T  -  \bar{ \hat f }_T ) ||_{\cH_K}  ]  \leq   
\tilde  C_{\nu, \kappa,\gamma,  M,\alpha, L}\log(6/\delta) \; \left( 1 + \frac{\eta T}{\sqrt n} +  \frac{\sqrt{\eta T}}{ N^{\frac{\alpha}{2}}}   \right)^{1/2} 
 \sqrt{ \frac{\eta}{b} (\eta T)^{\nu -1}}\;
  \left( 1 +   \varphi(\eta T) \frac{\eta T}{N^{\frac{\alpha}{2}}}   \right)
\;, \]
 for some $\tilde C_{\nu, \kappa,\gamma,  M,\alpha, L} < \infty$ and where $\varphi$ is defined in \eqref{eq:phi22}. 
\end{enumerate}
\end{corollary}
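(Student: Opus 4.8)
The plan is to obtain both inequalities by directly combining Proposition~\ref{prop:SGD-var-second-stage} with the norm bounds of Proposition~\ref{lem:boundhatft}, once I have checked that the sample-size hypotheses stated in the corollary are strong enough to invoke both results. First I would note that in the well-specified case the hypothesis $n \geq 64e\kappa^2\log^2(12/\delta)(\eta T)^{1+\nu}$ is exactly the hypothesis of Proposition~\ref{lem:boundhatft}(i) and also implies Assumption~\ref{ass:n-big} with $\lambda=(\eta T)^{-1}$ (the implication is the one established in the proof of Corollary~\ref{prop:GD-bound2}); since $\eta\kappa^2<1/4$ is assumed and $\mathrm{Trace}[T_{\hat\bx}^\nu]\le c_\nu$ almost surely follows from boundedness of $K$ (Assumption~\ref{ass:bounded}), Proposition~\ref{prop:SGD-var-second-stage} applies. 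In the mis-specified case the hypothesis $n \geq 64e\kappa^2\log^2(24/\delta)(\eta T)\log((\eta T)^\nu)$ dominates the hypothesis of Proposition~\ref{lem:boundhatft}(ii) and again yields Assumption~\ref{ass:n-big}. In each case I would apply the two propositions with confidence level $\delta/2$ and union bound, absorbing the factor $2$ and the $\log$ prefactors into the final constant.

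Next, for part~1 I would substitute the bound $\mbe_{\hat D|D}[\|\hat f_T\|_{\cH_K}^2] \leq C_{\alpha,\kappa,M,R}(\eta^2T^2N^{-\alpha}+1)$ of Proposition~\ref{lem:boundhatft}(i) into the factor $1+\mbe_{\hat D|D}[\|\hat f_T\|_{\cH_K}^2]^{1/2}$ of Proposition~\ref{prop:SGD-var-second-stage}; by $\sqrt{a+b}\le\sqrt a+\sqrt b$ this is $\lesssim 1+\eta T/N^{\alpha/2}$, which is the factor appearing in the claimed bound. For part~2 I would instead use $\mbe_{\hat D|D}[\|\hat f_T\|_{\cH_K}^2] \lesssim \eta^2T^2N^{-\alpha}(1+\varphi^2(\eta T))$ from Proposition~\ref{lem:boundhatft}(ii), together with $\sqrt{1+\varphi^2(\eta T)}\le 1+\varphi(\eta T)$ and the observation that $\varphi(\eta T)=(\eta T)^{\frac12\max\{\nu,1-2r\}}\ge 1$ whenever $\eta T\ge 1$, so that $1+\eta T/N^{\alpha/2}+\varphi(\eta T)\,\eta T/N^{\alpha/2} \lesssim 1+\varphi(\eta T)\,\eta T/N^{\alpha/2}$, recovering the second factor in the claim.

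Finally I would simplify the last factor of Proposition~\ref{prop:SGD-var-second-stage}, using $\frac{\eta T}{\sqrt n}+c_\alpha\gamma^\alpha LM\frac{\sqrt{\eta T}}{N^{\alpha/2}}+1 \le C_{\alpha,\gamma,L,M}\big(1+\frac{\eta T}{\sqrt n}+\frac{\sqrt{\eta T}}{N^{\alpha/2}}\big)$, then pull the constant inside the square root and collect everything into a single constant $C_{\nu,\kappa,\gamma,M,\alpha,L}$ (respectively $\tilde C_{\nu,\kappa,\gamma,M,\alpha,L}$, retaining the explicit $\log(6/\delta)$ in the mis-specified statement). The resulting inequalities are exactly those asserted in the corollary. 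I do not expect a genuine obstacle here: the argument is essentially bookkeeping, and the only points requiring mild care are verifying that each stated sample-size condition triggers both Assumption~\ref{ass:n-big} and the appropriate case of Proposition~\ref{lem:boundhatft}, and handling the union bound over the two high-probability events without inflating the stated constants.
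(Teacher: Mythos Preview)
Your proposal is correct and matches the paper's approach exactly: the paper states the corollary follows ``immediately'' from Proposition~\ref{lem:boundhatft} and Proposition~\ref{prop:SGD-var-second-stage}, and your write-up is precisely the routine bookkeeping needed to make that substitution explicit (including the observation that $\varphi(\eta T)\ge 1$ so that the $\eta T/N^{\alpha/2}$ term can be absorbed in part~2). The only minor remark is that Proposition~\ref{prop:SGD-var-second-stage} itself does not require Assumption~\ref{ass:n-big}, so you need the sample-size condition only to invoke Proposition~\ref{lem:boundhatft}; but this does not affect the argument.
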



\subsection{Main Result Second Stage Tail-Averaged SGD }

Combining now \eqref{eq:error-final} with  Proposition \ref{prop:SGD-var-second-stage},  Theorem \ref{theo:SGD-PartI} and Theorem \ref{theo:main-GD-partII} 
finally leads to our main results.

\vspace{0.3cm}

\begin{theorem}[Excess Risk Second-Stage tail-ave GD; Part I]
\label{theo:SGD-PartI}
Suppose Assumptions \ref{ass:bounded},  \ref{ass:n-big} are satisfied. Let additionally Assumptions \ref{ass:source} and \ref{ass:eigenvalue}  hold.   
Let $T \in \mbn$ and denote 
\[  B_T= \max\{2M, || S_K\bar u_T - f_\rho  ||_\infty\} \;. \]
Let further $\delta \in (0,1]$, $\lambda = (\eta T)^{-1}$, $\eta \kappa^2 < 1/4$, assume $0<r \leq 1$ and recall the definition of $ \cB(1/ \eta T )$ in \eqref{eq:B} 
and of $\varphi$ in \eqref{eq:small-phi}.  
With probability not less than $1-\delta$, 
the excess risk for the second-stage tail-averaged SGD satisfies:
\begin{enumerate}
\item {\bf If $1/2 \leq r \leq 1$: } 
\begin{align*}
\mbe_{\hat D | D}[ || S_K\bar{ \hat h }_T  - f_\rho ||_{L^2 }] &\leq C_1 \log(24/\delta )\frac{1}{\sqrt n}\;
   \left(  M \sqrt{ \cN(\lam)} +  \frac{||S_K \bar u_T - f_\rho||_{L^2}}{\sqrt \lam} + \frac{B_T}{\sqrt{n \lam }} \right) \\
   & +  C_2 \lam\; || (T_K + \lam)^{-1/2} \bar u_T ||_{\cH_K} +  C_3  ||S_K \bar u_T- f_\rho ||_{L^2} \\
   & + C_4 \log(8/\delta) \log (T) \frac{\sqrt{\eta T}}{N^{\frac{\alpha}{2}}} \cB(1/ \eta T ) \\
& + C_5\log(8/\delta) 
 \sqrt{ \frac{\eta}{b} (\eta T)^{\nu -1}}\;
  \left( 1 + \mbe_{\hat D | D}\left[   ||\hat f_T||_{\cH_K}^2  \right]^{1/2} \right) \;
 \left( \frac{\eta T}{\sqrt n} +  \frac{\sqrt{\eta T}}{ N^{\frac{\alpha}{2}}}  + 1 \right)^{1/2}       \;,
\end{align*}
for some constants $C_1>0$, $C_2>0$, $C_3>0$, $C_4>0$, $C_5>0$.
\item {\bf If $0<r \leq 1/2$: }
 \begin{align*}
\mbe_{\hat D | D}[ || S_K\bar{ \hat h }_T  - f_\rho ||_{L^2 }] &\leq C_1 \log(24/\delta )\frac{1}{\sqrt n}\;
   \left(  M \sqrt{ \cN(\lam)} +  \frac{||S_K \bar u_T - f_\rho||_{L^2}}{\sqrt \lam} + \frac{B_T}{\sqrt{n \lam }} \right) \\
   & +  C_2 \lam\; || (T_K + \lam)^{-1/2} \bar u_T ||_{\cH_K} +  C_3  ||S_K \bar u_T- f_\rho ||_{L^2} \\
& +   C_4 \log(8/\delta) \log(T) \frac{ \sqrt{\eta T}\cB(1/\eta T) }{ N^{\frac{\alpha}{2}} } 
\left( 1  +  \varphi(\eta T)   \right) \\
& + C_5\log(8/\delta) 
 \sqrt{ \frac{\eta}{b} (\eta T)^{\nu -1}}\;
  \left( 1 + \mbe_{\hat D | D}\left[   ||\hat f_T||_{\cH_K}^2  \right]^{1/2} \right) \;
 \left( \frac{\eta T}{\sqrt n} +  \frac{\sqrt{\eta T}}{ N^{\frac{\alpha}{2}}}  + 1 \right)^{1/2}     \;,
\end{align*}
for some constants $C_1>0$, $C_2>0$, $C_3>0$, $C_4>0$, $C_5>0$.
\end{enumerate}
\end{theorem}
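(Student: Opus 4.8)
The plan is to obtain the bound by gluing together two ingredients that are already available upstream: the excess-risk bound for second-stage tail-averaged gradient descent, Theorem~\ref{theo:main-GD-partI}, and the bound for the second-stage SGD variance, Proposition~\ref{prop:SGD-var-second-stage}. The glue is the elementary decomposition \eqref{eq:error-final}.

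First I would write, by the triangle inequality in $L^2(\rho_\mu)$ as in \eqref{eq:error-final},
\[
\mbe_{\hat D | D}[\, \|S_K\bar{\hat h}_T - f_\rho\|_{L^2}\,] \;\le\; \mbe_{\hat D | D}[\, \|S_K\bar{\hat f}_T - f_\rho\|_{L^2}\,] \;+\; \mbe_{\hat D | D}[\, \|S_K(\bar{\hat h}_T - \bar{\hat f}_T)\|_{L^2}\,],
\]
where $\bar{\hat f}_T$ is the tail-averaged second-stage GD iterate \eqref{eq:tail-ava-GD}. The first summand on the right is exactly the object controlled by Theorem~\ref{theo:main-GD-partI}, whose hypotheses are met here: $\eta\kappa^2<1/4$ implies $\eta<1/\kappa^2$, we take $\lambda=(\eta T)^{-1}$ and $0<r\le 1$, and Assumptions~\ref{ass:bounded}, \ref{ass:n-big}, \ref{ass:source}, \ref{ass:eigenvalue} hold (together with \ref{ass:bounded2} and \ref{ass:lipschitz}, which enter through the $N^{-\alpha/2}$ contributions). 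Hence Theorem~\ref{theo:main-GD-partI} produces, with high probability, the $C_1,C_2,C_3,C_4$ terms of the claimed bound (the estimation, bias, approximation, and GD-variance contributions), in each of the two subcases $1/2\le r\le 1$ (no $\varphi$-factor) and $0<r\le 1/2$ (carrying the factor $1+\varphi(\eta T)$), which is exactly the case split of the statement to be proved.

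Next I would bound the second summand. Since $\|S_K g\|_{L^2}=\|T_K^{1/2}g\|_{\cH_K}$ for $g\in\cH_K$, this is precisely the quantity estimated in Proposition~\ref{prop:SGD-var-second-stage}, so it only remains to verify that proposition's hypotheses: $\eta\kappa^2<1/4$ is assumed, and the trace condition $\mathrm{Trace}[T_{\hat\bx}^{\nu}]\le C_\nu$ holds almost surely because $T_{\hat\bx}=\frac1n\sum_{j=1}^n K_{\mu_{\hat x_j}}\otimes K_{\mu_{\hat x_j}}$ has rank at most $n$ and $\|T_{\hat\bx}\|\le\kappa^2$ under Assumption~\ref{ass:bounded}, whence $\mathrm{Trace}[T_{\hat\bx}^{\nu}]\le n\kappa^{2\nu}$. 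Proposition~\ref{prop:SGD-var-second-stage} then delivers the $C_5$ line of the claimed bound, with the factor $\mbe_{\hat D|D}[\|\hat f_T\|_{\cH_K}^2]^{1/2}$ carried along unchanged (its further control, via Proposition~\ref{lem:boundhatft}, is deliberately left out of this statement and not needed at this stage), and with the noise bound \eqref{eq:noise:bound} feeding into that proposition's proof.

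Finally I would take a union bound over the two events on which Theorem~\ref{theo:main-GD-partI} and Proposition~\ref{prop:SGD-var-second-stage} hold---each applied with a suitable constant fraction of $\delta$, which is what yields the logarithmic prefactors $\log(24/\delta)$ and $\log(8/\delta)$ recorded in the statement and changes the component prefactors only by a bounded factor absorbed into the constants---so that all the displayed estimates hold simultaneously with probability at least $1-\delta$; adding them and collecting the numerical constants into $C_1,\dots,C_5$ gives the theorem in both subcases. There is no genuine analytic obstacle: the substantive work (the operator-norm estimate for $T_K^{1/2}(T_{\hat\bx}+\lambda)^{-1/2}$, the minibatch-SGD noise-covariance control, and the filter-function machinery behind Theorem~\ref{theo:main-GD-partI}) is already in place, and the only points calling for a moment's attention are the trace bound on $T_{\hat\bx}$ and the consistent bookkeeping of confidence levels and logarithmic arguments across the two invoked results.
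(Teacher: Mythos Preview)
Your proposal is correct and essentially identical to the paper's own proof, which is a one-line argument invoking the decomposition \eqref{eq:error-final} together with Theorem~\ref{theo:main-GD-partI} and Proposition~\ref{prop:SGD-var-second-stage}. The only quibble is your verification of the trace hypothesis: the bound $\mathrm{Trace}[T_{\hat\bx}^{\nu}]\le n\kappa^{2\nu}$ is $n$-dependent and would feed an $n$-dependent factor into $C_5$, but the paper itself does not verify this assumption at this stage either and simply carries it forward from Proposition~\ref{prop:SGD-var-second-stage}, so you match its level of detail.
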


\[\]

\begin{theorem}[Excess Risk Second-Stage tail-ave GD; Part II]
\label{theo:SGD-PartII}
Suppose Assumptions \ref{ass:bounded},  \ref{ass:n-big} are satisfied. Let additionally Assumptions \ref{ass:source} and \ref{ass:eigenvalue}  hold.   
Let $T \in \mbn$ and denote 
\[  B_T= \max\{2M, || S_K\bar u_T - f_\rho  ||_\infty\} \;. \]
Let further $\delta \in (0,1]$, $\lambda = (\eta T)^{-1}$, $\eta \kappa^2 < 1/4$, assume that $r \geq 1$ and recall the definition of $ \cB(1/ \eta T )$ in \eqref{eq:B}. 
Then with probability not less than $1-\delta$, 
the excess risk for the second-stage tail-averaged SGD satisfies 
\begin{align*}
\mbe_{\hat D | D}[ || S_K\bar{ \hat h }_T  - f_\rho ||_{L^2 }] &\leq C'_1\log(24/\delta )\frac{ 1}{\sqrt n}\;
   \left(   M  \sqrt{ \cN(\lam)} +  \frac{||S_K \bar u_T - f_\rho||_{L^2}}{\sqrt \lam} + \frac{B_T}{\sqrt{n \lam }} \right) \\
   & +  C'_2 \lam^{1/2}  \; || T_K^{-r } \bar u_T ||_{\cH_K}  \left(  \frac{\log(4/\delta)}{\sqrt n}   + \lam^{\zeta }\right) 
   +  C'_3 ||S_K \bar u_T - f_\rho ||_{L^2} \\
   & + C'_4 \log(8/\delta)
\log (T) \frac{\sqrt{\eta T}}{N^{\frac{\alpha}{2}}} \cB(1/ \eta T ) \\
& + C'_5\log(8/\delta) 
 \sqrt{ \frac{\eta}{b} (\eta T)^{\nu -1}}\;
  \left( 1 + \mbe_{\hat D | D}\left[   ||\hat f_T||_{\cH_K}^2  \right]^{1/2} \right) \;
 \left( \frac{\eta T}{\sqrt n} +  \frac{\sqrt{\eta T}}{ N^{\frac{\alpha}{2}}}  + 1 \right)^{1/2}   \;,
\end{align*}
for some constants $C'_1>0$, $C'_2>0$, $C'_3>0$, $C'_4>0$, $C'_5>0$.
\end{theorem}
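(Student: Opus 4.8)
The plan is to feed the error decomposition \eqref{eq:error-final},
\[
\mbe_{\hat D | D}[ || S_K\bar{\hat h}_T - f_\rho ||_{L^2 }] \leq \mbe_{\hat D | D}[ || S_K\bar{\hat f}_T - f_\rho ||_{L^2 }] + \mbe_{\hat D | D} [ || S_K( \bar{\hat h}_T - \bar{\hat f}_T ) ||_{L^2 } ]\;,
\]
with the two estimates already established above: the first controls the second-stage tail-averaged \emph{gradient descent} error, the second the additional \emph{stochastic} variance. I would then merge the resulting high-probability statements.

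For the first summand I would invoke Theorem \ref{theo:main-GD-partII} at confidence level $\delta/2$ and with $\zeta = r$ (legitimate since $r \geq 1$ and Assumptions \ref{ass:bounded}, \ref{ass:n-big}, \ref{ass:source}, \ref{ass:eigenvalue} are in force). This produces verbatim the first four lines of the asserted bound; note that the $C'_4$-line there is precisely the second-stage GD variance estimate of Proposition \ref{prop:GD-variance}, already folded into that theorem, so nothing is double counted at the end.

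For the second summand I would first use $T_K = S_K^* S_K$, which gives the exact identity $|| S_K g ||_{L^2} = || T_K^{1/2} g ||_{\cH_K}$, hence $\mbe_{\hat D | D} [ || S_K( \bar{\hat h}_T - \bar{\hat f}_T ) ||_{L^2 } ] = \mbe_{\hat D | D} [ || T_K^{1/2}( \bar{\hat h}_T - \bar{\hat f}_T ) ||_{\cH_K} ]$. Then I would apply Proposition \ref{prop:SGD-var-second-stage} at confidence $\delta/2$: its hypotheses $\eta\kappa^2 < 1/4$ and $\nu \in (0,1]$ are assumed here, and the a.s.\ trace bound $Trace[T_{\hat\bx}^\nu] \leq C_\nu$ follows from Assumption \ref{ass:eigenvalue} transported to the empirical operator together with the uniform bound $||T_{\hat\bx}|| \leq \kappa^2$ of Assumption \ref{ass:bounded}. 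This yields exactly the $C'_5$-line, still carrying the moment factor $\mbe_{\hat D | D}[ ||\hat f_T||_{\cH_K}^2 ]^{1/2}$; one could make it explicit via Proposition \ref{lem:boundhatft}, but the statement as phrased leaves it implicit.

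Finally, a union bound over the two events of probability $1-\delta/2$, followed by collecting the numerical constants and dominating every $\log(c/\delta)$ that appears by a fixed multiple of $\log(24/\delta)$, assembles the two pieces into the claimed inequality. The genuinely substantive ingredients — the covariance control of the SGD noise, the operator bound for $T_K^{1/2}(T_{\hat\bx}+\lambda)^{-1}$, and the $r \geq 1$ analysis of the deterministic iterates — were all carried out in the cited results, so the only care needed here is the bookkeeping of the confidence levels and constants, which is the mildest part of the argument rather than a real obstacle.
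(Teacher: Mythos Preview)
Your proof is correct and follows exactly the paper's approach: the paper simply states that Theorems \ref{theo:SGD-PartI} and \ref{theo:SGD-PartII} follow by combining the decomposition \eqref{eq:error-final} with Theorem \ref{theo:main-GD-partII} and Proposition \ref{prop:SGD-var-second-stage}, which is precisely what you do, with the confidence-level bookkeeping spelled out. The only loose end is your justification of the almost-sure bound $Trace[T_{\hat\bx}^\nu] \leq C_\nu$ --- Assumption \ref{ass:eigenvalue} concerns the population operator $L_K$ and does not ``transport'' to the empirical $T_{\hat\bx}$ in the way you suggest --- but the paper leaves this hypothesis equally implicit in the theorem statement, so you are no worse off than the original.
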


\[\]

\begin{corollary}[Learning Rates Second Stage Ave-SGD Mis-Specified Model]
\label{cor:SGD-mis}
Suppose all assumptions of Theorem \ref{theo:SGD-PartI} and Theorem \ref{theo:SGD-PartII} are satisfied. 
Assume additionally that $r \leq 1/2$, $K>1$, $\eta_0 < \frac{1}{4\kappa^2}$ and 
\[  n \geq  64 e \kappa^2 \log(4/\delta) (\eta T) \log((\eta T)^\nu) \;. \]
\begin{enumerate}
\item Let  $2r+\nu >1$. 
Then, for any $n$ 
sufficiently large, the excess risk satisfies with probability at least $1-\delta$ w.r.t. the data $D$ 
\[  \mbe_{\hat D | D}[ || S_K\bar{ \hat h }_{T_n} - f_\rho ||_{L^2 }]^2  \leq C \log(24/\delta) R \left(\frac{M^2}{R^2 n}\right)^{\frac{r}{2r + \nu}} \;, \]
provided $N_n \geq \log^{2/\alpha}(n)\left(\frac{R^2}{M^2}n \right)^{\frac{2+\nu}{\alpha(2r + \nu)}}$ and 
\begin{itemize}
\item  Multi-pass SGD: $b_n=\sqrt{n}$, $\eta_n= \eta_0$ and $T_n=\left(\frac{R^2 n}{\sigma^2}\right)^{\frac{1}{2r+\nu}}$, 
\item Batch GD: $b_n=n$, $\eta_n= \eta_0$ and $T_n=\left(\frac{R^2 n}{\sigma^2}\right)^{\frac{1}{2r+\nu}}$.  
\end{itemize}

\item Let  $2r+\nu \leq 1$. 
Then, for any $n$ sufficiently large, the excess risk satisfies with probability at least $1-\delta$ w.r.t. the data $D$ 
\[  \mbe_{\hat D | D}[ || S_K\bar{ \hat h }_{T_n} - f_\rho ||_{L^2 }]^2  \leq C \log(24/\delta)R\left( \frac{M^2 \log^K(n)}{R^2n} \right)^r \;, \]
provided $N_n\geq \log^{2/\alpha}(n) \left(\frac{R^2n}{M^2\log^K(n)}\right)^{\frac{3-2r}{\alpha}}$ and 
\begin{itemize}
\item $b_n=1$, $\eta_n=\left( \frac{M^2\log^K(n)}{R^2n} \right)^{2r+\nu}$ and $T_n=\left( \frac{R^2n}{M^2\log^K(n)} \right)^{2r+\nu+1}$,  
\item $b_n=\left( \frac{R^2n}{M^2\log^K(n)} \right)^{2r+\nu}$, $\eta_n=\eta_0$ and $T_n=\frac{R^2n}{M^2\log^K(n)} $, 
\item $b_n= \frac{R^2n}{M^2\log^K(n)}$, $\eta_n=\eta_0$ and $T_n=\frac{R^2n}{M^2\log^K(n)} $.
\end{itemize}
\end{enumerate}
\end{corollary}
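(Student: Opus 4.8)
Since $r\le 1/2$ throughout, the plan is to start from Theorem~\ref{theo:SGD-PartI}, part~2, with the usual calibration $\lambda=(\eta_n T_n)^{-1}$, and to control the four groups of terms appearing there one at a time. The first three lines of that bound coincide verbatim with the error bound for second-stage tail-averaged gradient descent from Theorem~\ref{theo:main-GD-partI}, part~2; for each parameter triple in the corollary the product $\eta_n T_n$ realizes the same calibration as in Corollary~\ref{cor:rate-dist-reg-miss} (of order $(R^2 n/M^2)^{1/(2r+\nu)}$ when $2r+\nu>1$, and of order $R^2 n/(M^2\log^K n)$ when $2r+\nu\le 1$), so---using Lemma~\ref{lem:some-bounds} with $\zeta=r$ exactly as in the proof of Corollary~\ref{cor:rate-dist-reg-miss}---those three lines are already of the claimed order $R(M^2/(R^2 n))^{r/(2r+\nu)}$ in the easy regime and $R(M^2\log^K(n)/(R^2 n))^{r}$ in the hard regime, provided $N_n$ exceeds the value required there. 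The genuinely new work is to bound the last (second-stage SGD-variance) line and to check that the $N_n$ prescribed in the corollary absorbs it.

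The next step is to plug into that last line the estimate of Proposition~\ref{lem:boundhatft}, part~2, namely $\mbe_{\hat D | D}[\|\hat f_T\|_{\cH_K}^2]^{1/2}\lesssim \frac{\eta_n T_n}{N_n^{\alpha/2}}(1+\varphi(\eta_n T_n))$, where $\varphi(\eta T)=(\eta T)^{\nu/2}$ when $2r+\nu>1$ and $\varphi(\eta T)=(\eta T)^{1/2-r}$ when $2r+\nu\le1$. Writing $m:=\eta_n T_n$ (so $\lambda=m^{-1}$ and the target is $R m^{-r}$), the SGD-variance term then has, up to logarithmic factors, the schematic form
\[
\sqrt{\tfrac{\eta_n}{b_n}\,m^{\nu-1}}\;\Bigl(1+\tfrac{m}{N_n^{\alpha/2}}\bigl(1+\varphi(m)\bigr)\Bigr)\;\Bigl(\tfrac{m}{\sqrt n}+\tfrac{\sqrt m}{N_n^{\alpha/2}}+1\Bigr)^{1/2}.
\]
Substituting the stated $N_n$: since $m\,\varphi(m)=m^{(2+\nu)/2}$ when $2r+\nu>1$ and $m\,\varphi(m)=m^{(3-2r)/2}$ when $2r+\nu\le1$, the choices $N_n\gtrsim\log^{2/\alpha}(n)\,m^{(2+\nu)/\alpha}$ (easy) and $N_n\gtrsim\log^{2/\alpha}(n)\,m^{(3-2r)/\alpha}$ (hard) render $\frac{m}{N_n^{\alpha/2}}(1+\varphi(m))=O(1)$ and $\frac{\sqrt m}{N_n^{\alpha/2}}\to0$, so the last factor collapses to $O((m/\sqrt n+1)^{1/2})$. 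It then remains to verify $\sqrt{\eta_n b_n^{-1}m^{\nu-1}}\,(m/\sqrt n+1)^{1/2}=O(m^{-r})$ for each triple: substituting $n\asymp(M^2/R^2)m^{2r+\nu}$ (easy) or $n\asymp m\log^K n$ (hard), the exponents balance to exactly $m^{-r}$ for multi-pass SGD in the critical sub-range $2r+\nu\in(1,2)$ and leave room to spare in all remaining cases (batch GD, the small-step hard-regime choices, and $2r+\nu\ge2$).

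Finally I would check the side conditions. The hypothesis $n\ge 64e\kappa^2\log(4/\delta)(\eta_n T_n)\log((\eta_n T_n)^\nu)$ assumed in the corollary is precisely Assumption~\ref{ass:n-big} for $\lambda=(\eta_n T_n)^{-1}$, hence also the $n$-condition demanded by Proposition~\ref{lem:boundhatft}, so every invoked result applies; in the hard regime one uses $2+\nu\le 3-2r$ (equivalent to $2r+\nu\le1$) to see that the $N_n$ required here dominates the one needed for the gradient-descent part, so no extra constraint on $N_n$ arises. Summing the four groups of terms gives $\mbe_{\hat D | D}[\|S_K\bar{\hat h}_{T_n}-f_\rho\|_{L^2}]\lesssim R m^{-r}$, and since $R m^{-r}\to0$ as $n\to\infty$ this upgrades, for $n$ sufficiently large, to the squared bound stated in the corollary.

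The step I expect to be the main obstacle is the case-by-case balancing of the SGD-variance term: one must verify, in both regimes and for each of the three (resp.\ two) parameter triples, that the mini-batch/step-size prefactor $\sqrt{\eta_n b_n^{-1}(\eta_n T_n)^{\nu-1}}$, the growth factor $\varphi(\eta_n T_n)$, and the $N_n$-dependent factors jointly never exceed the optimal order $R(\eta_n T_n)^{-r}$ once $N_n$ is chosen as prescribed; this is where the precise exponents on $b_n$, $\eta_n$ and $T_n$ are pinned down. Everything else reduces to routine bookkeeping with Lemma~\ref{lem:some-bounds}, Proposition~\ref{lem:boundhatft}, and Corollary~\ref{cor:rate-dist-reg-miss}.
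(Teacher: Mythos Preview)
Your proposal is correct and follows essentially the same route as the paper: the paper combines Corollary~\ref{cor:rate-dist-reg-miss} (which handles everything except the last SGD-variance line) with Corollary~\ref{cor:SGD-var-second-stage} (which packages Propositions~\ref{lem:boundhatft} and~\ref{prop:SGD-var-second-stage} into exactly the three factors $\cT_1,\cT_2,\cT_3$ you wrote down), and then verifies, case by case, that $\cT_2=O(1)$ under the stated $N_n$ and that $\cT_1\cdot\cT_3$ reduces to the requirement $\eta_n/b_n\lesssim n^{-1/2}$ (easy regime) or $\eta_n/b_n\lesssim (M^2\log^K(n)/(R^2 n))^{2r+\nu}$ (hard regime). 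One small imprecision: the ``first three lines'' you refer to are in fact the first \emph{four} display lines of Theorem~\ref{theo:SGD-PartI}, part~2 (the $\cB(1/\eta T)$ term also belongs to the GD bound of Theorem~\ref{theo:main-GD-partI}), but this does not affect the argument.
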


\begin{proof}[Proof of Corollary \ref{cor:SGD-mis}]
Here, we combine the results from Corollary \ref{cor:rate-dist-reg-miss} and Corollary \ref{cor:SGD-var-second-stage}. We have to show that 
\[  \underbrace{\left( 1 + \frac{\eta T}{\sqrt n} +  \frac{\sqrt{\eta T}}{ N^{\frac{\alpha}{2}}}  \right)^{1/2}}_{ \cT_1} \;
    \underbrace{\left( 1 +   \varphi(\eta T) \frac{\eta T}{N^{\frac{\alpha}{2}}}   \right) }_{\cT_2 }\;
    \underbrace{\sqrt{ \frac{\eta}{b} (\eta T)^{\nu -1}}}_{\cT_3} \]
is of optimal order under appropriate choices of all parameters. 
\begin{enumerate}
\item Let  $2r+\nu >1$ and $\eta_n T_n = \left( \frac{R^2}{M^2}n\right)^{\frac{1}{2r + \nu}}$. Given this choices, one easily verifies  that the leading order 
term in $\cT_1$ is given by $\left(\frac{\eta_n T_n}{\sqrt n}\right)^{1/2}$, provided 
\[ N_n \geq \left(\frac{n}{\eta_n T_n} \right)^{1/\alpha} \sim \left( \frac{R^2}{M^2}n\right)^{\frac{2r+\nu -1}{\alpha(2r + \nu)}} \;. \]
Moreover, we have $\varphi(\eta T) = (\eta T)^{\nu/2}$ and the second term $\cT_2$ is of order 1 if 
\[ (\eta_n T_n)^{1+\nu/2} N^{-\frac{\alpha}{2}} \lesssim 1\; \] 
hence if 
\[ N_n \geq  \left( \frac{R^2}{M^2}n\right)^{\frac{2+\nu}{\alpha(2r + \nu)}} \;, \]
for $n$ sufficiently large. Note that we have 
\[ \max\left\{  \frac{2r+\nu -1}{\alpha(2r + \nu)} ,  \frac{2+\nu}{\alpha(2r + \nu)}  \right\} =  \frac{2+\nu}{\alpha(2r + \nu)}  \;.\]
Finally, we have to determine now appropriate values of $\eta_n, T_n, b_n$ such that 
\[  \left(\frac{M^2}{R^2 n} \right)^{\frac{2r+\nu-2}{2(2r+\nu)}} \; \;\frac{\eta_n}{b_n} \left( \frac{R^2}{M^2}n\right)^{\frac{\nu-1}{2r + \nu}} 
\lesssim  R \left(\frac{M^2}{R^2 n}\right)^{\frac{2r}{2r + \nu}}\;,  \]
that is, if 
\[ \frac{\eta_n}{b_n}  \lesssim R \left(\frac{M^2}{R^2 n}\right)^{\frac{1}{2}} \;.  \] 
This is surely satisfied by all the given choices.

\item Let  $2r+\nu \leq 1$ and $\eta_n T_n =  \frac{R^2n}{M^2 \log^K(n)}$ for some $K>1$. Again, the leading order 
term in $\cT_1$ is given by $\left(\frac{\eta_n T_n}{\sqrt n}\right)^{1/2}$, provided 
\[ N^{\alpha/2}_n \geq \sqrt{\frac{n}{\eta_n T_n}} \sim \log^{K/2}(n)\;,  \]
or equivalently, 
\[ N_n \geq   \log^{K/\alpha}(n)  \;.\]
For bounding $\cT_2$ note that $\varphi(\eta T) = (\eta T)^{\frac{1}{2}-r}$. Then, $\cT_2$ is of order 1 if 
\[ N_n\geq  \left(\frac{R^2n}{M^2\log^K(n)}\right)^{\frac{3-2r}{\alpha}}\;.  \]
Finally, we have to determine now appropriate values of $\eta_n, T_n, b_n$ such that 
\[   \frac{R^2 n}{M^2\log^{K}(n)} \; \;\frac{\eta_n}{b_n} \left( \frac{R^2n}{M^2 \log^K(n)} \right)^{\nu-1} 
\lesssim  R\left( \frac{M^2 \log^K(n)}{R^2n} \right)^{2r} \;,  \]
that is, if 
\[ \frac{\eta_n}{b_n}  \lesssim R\left( \frac{M^2 \log^K(n)}{R^2n} \right)^{2r+\nu}   \;.  \] 
This is surely satisfied by all the given choices. 
\end{enumerate}
\end{proof}

\[\]

\begin{corollary}[Learning Rates Second Stage Ave-SGD Well-Specified Model]
\label{cor:SGD-well}
Suppose all assumptions of Theorem \ref{theo:SGD-PartI} and Theorem \ref{theo:SGD-PartII} are satisfied. Assume additionally that $r \geq \frac{1}{2}$ and 
\[  n \geq  64 e \kappa^2 \log(4/\delta) (\eta T)^{1+\nu} \;. \]
Let $\eta_0 < \frac{1}{4\kappa^2}$ and choose $N_n\geq  \log^{2/\alpha}(n)\left(\frac{R^2n}{\sigma^2}\right)^{\frac{2r+1}{\alpha(2r+\nu)}}$. 
Then, for any $n$ 
sufficiently large, the excess risk satisfies with probability at least $1-\delta$ w.r.t. the data $D$ 
\[  \mbe_{\hat D | D}[ || S_K\bar{ \hat h }_{T_n} - Pf_\rho ||_{L^2 }]^2  \leq C \log(24/\delta) R \left(\frac{\sigma^2}{R^2 n}\right)^{\frac{r}{2r + \nu}} \;, \]
for each of the following choices: 
\begin{enumerate}
\item One-pass SGD: $b=1$, $\eta_n= \eta_0 \frac{R^2}{\sigma^2}\left(\frac{\sigma^2}{R^2 n}\right)^{\frac{2r+\nu-1}{2r+\nu}}$ and $T_n = \frac{R^2}{\sigma^2}n$, 
\item Early stopping and one-pass SGD:  $b=n^{\frac{2r+\nu-1}{2r+\nu}}$, $\eta_n=\eta_0$ and $T_n=\left(\frac{R^2 n}{\sigma^2}\right)^{\frac{1}{2r+\nu}}$, 
\item Batch-GD: $b=n$, $\eta_n=\eta_0$ and $T_n=\left(\frac{R^2 n}{\sigma^2}\right)^{\frac{1}{2r+\nu}}$. 
\end{enumerate}
\end{corollary}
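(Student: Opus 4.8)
The plan is to mirror the proof of Corollary~\ref{cor:SGD-mis}, now in the well-specified regime $r\ge \tfrac12$, starting from the error decomposition \eqref{eq:error-final}, which Theorem~\ref{theo:SGD-PartI} (for $\tfrac12\le r\le1$) and Theorem~\ref{theo:SGD-PartII} (for $r\ge1$) expand into three groups of terms: the first-stage GD estimation error, the second-stage GD variance (the $\log(T)\,(\eta T)^{1/2}\cB(1/\eta T)N^{-\alpha/2}$ term), and the second-stage SGD variance. First I would observe that each of the three parameter choices yields $\eta_n T_n \asymp (R^2 n/\sigma^2)^{1/(2r+\nu)}$ up to a multiplicative constant (with $\sigma^2\asymp M^2$ since Assumption~\ref{ass:Bernstein} holds with $\sigma=B=2M$), so that the first two groups of terms are already controlled by Corollary~\ref{cor:rate-dist-reg-well}: under the stated hypotheses $N_n\ge \log^{2/\alpha}(n)(R^2n/\sigma^2)^{(2r+1)/(\alpha(2r+\nu))}$ and $n\ge 64e\kappa^2\log(4/\delta)(\eta T)^{1+\nu}$ they contribute at most the optimal order $R(\sigma^2/(R^2 n))^{r/(2r+\nu)}$. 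Hence the task reduces to bounding the second-stage SGD-variance term.

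For the latter I would invoke Corollary~\ref{cor:SGD-var-second-stage}, part~1 (the case $f_\rho\in Ran(S_K)$), which bounds it by a constant times
\[ \Big(1+\frac{\eta T}{\sqrt n}+\frac{\sqrt{\eta T}}{N^{\alpha/2}}\Big)^{1/2}\; \sqrt{\frac{\eta}{b}(\eta T)^{\nu-1}}\;\Big(1+\frac{\eta T}{N^{\alpha/2}}\Big), \]
and, exactly as in Corollary~\ref{cor:SGD-mis}, treat the three factors separately. Using $\eta_n T_n\asymp (R^2n/\sigma^2)^{1/(2r+\nu)}$ one checks that the last factor is $O(1)$ as soon as $N_n\gtrsim (R^2n/\sigma^2)^{2/(\alpha(2r+\nu))}$, and that the $\sqrt{\eta T}/N^{\alpha/2}$ contribution to the first factor is dominated by the same requirement; since $2r+1\ge2$ whenever $r\ge\tfrac12$, both are implied by the assumed $N_n$. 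The surviving part of the first factor is then $\lesssim\max\{1,(\eta_nT_n/\sqrt n)^{1/2}\}$, and for each of the three configurations I would substitute $b_n,\eta_n,T_n$ and verify that $\eta_n/b_n$ is small enough that the product of the first and middle factors lands at the order $R^{1/2}(\sigma^2/(R^2n))^{r/(2(2r+\nu))}$, i.e.\ the square root of the claimed bound. Here $\mbe_{\hat D | D}[\|\hat f_T\|_{\cH_K}^2]^{1/2}=O(1)$ by Corollary~\ref{prop:GD-bound2}, parts~2--3, which applies under the stated lower bound on $n$.

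Concretely I would proceed in the order: (i) reduce to the SGD-variance term via Corollary~\ref{cor:rate-dist-reg-well}; (ii) record $\mbe_{\hat D | D}[\|\hat f_T\|_{\cH_K}^2]^{1/2}=O(1)$ from Corollary~\ref{prop:GD-bound2}; (iii) verify the $N$-dependent factors are $O(1)$ under the assumed $N_n$; (iv) for each of choices~1--3 plug in $b_n,\eta_n,T_n$ and read off the power of $n$. The main obstacle is the exponent bookkeeping of steps (iii)--(iv): one must confirm that $\frac{2r+1}{\alpha(2r+\nu)}$ is the maximum among the several $N_n$-exponents produced by the individual error terms (the $\cB$-term from the GD part, the last factor of the SGD variance, and the $\sqrt{\eta T}/N^{\alpha/2}$ piece of the first factor), and that the three parameter configurations — which differ only in how they split $\eta T$ between step size and number of passes — all satisfy the single scalar inequality on $\eta_n/b_n$ needed to absorb the SGD-variance factor into the target rate. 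No new analytic ideas beyond those of Corollary~\ref{cor:SGD-mis} are required; everything reduces to algebra once the scaling $\eta_nT_n\asymp(R^2n/\sigma^2)^{1/(2r+\nu)}$ is fixed.
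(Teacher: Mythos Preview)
Your proposal is correct and follows essentially the same approach as the paper, whose proof of this corollary is literally ``follows the same lines as the proof of Corollary~\ref{cor:SGD-mis} by standard calculations.'' One minor citation slip: in step~(ii) you attribute the bound $\mbe_{\hat D|D}[\|\hat f_T\|_{\cH_K}^2]^{1/2}=O(1)$ to Corollary~\ref{prop:GD-bound2}, but that result controls the \emph{first}-stage iterates $f_t$; the second-stage iterates $\hat f_t$ are handled by Proposition~\ref{lem:boundhatft}, part~1, which is already absorbed into Corollary~\ref{cor:SGD-var-second-stage} (the result you correctly invoke), so the argument is unaffected.
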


\begin{proof}[Proof of Corollary \ref{cor:SGD-well}]
The proof follows the same lines as the proof of  Corollary \ref{cor:SGD-mis} by standard calculations. 
\end{proof}



\end{document}